\newenvironment{proof}{\paragraph{Proof:}}{\hfill$\square$}
\newtheorem{definition}{Definition}
\newtheorem{lemma}{Lemma}
\newtheorem{proposition}{Proposition}
\newtheorem{theorem}{Theorem}
\newtheorem{corollary}{Corollary}
\newtheorem{exmp}{Example}
\newcounter{protocol}
\newcommand{\CalA}{\mathcal{A}}
\newcommand{\CalD}{\mathcal{D}}
\newcommand{\CalI}{\mathcal{I}}
\newcommand{\Real}{\mathbb{R}}
\newcommand{\bbE}{\mathbb{E}}
\newcommand{\bbP}{\mathbb{P}}
\newcommand{\bmx}{\bm{x}}
\newcommand{\bmz}{\bm{z}}
\newcommand{\bmg}{\bm{g}}
\newcommand{\stoppingtime}{T_{\text{stop}}}
\newcommand{\bmmu}{\bm{\mu}}
\newcommand{\bmxi}{\bm{\xi}}
\newcommand{\DeltaPAC}{(\delta,\epsilon)}
\title{Multi-thresholding Good Arm Identification with Bandit Feedback}
\author[1,2]{Xuanke Jiang\thanks{\texttt{3ie22005s@s.kyushu-u.ac.jp}}}
\author[2]{Sherief Hashima\thanks{\texttt{Sherief.hashima@riken.jp}}}
\author[1,2]{Kohei Hatano\thanks{\texttt{kohei.hatano@riken.jp}}}
\author[1]{Eiji Takimoto\thanks{\texttt{eiji@inf.kyushu-u.ac.jp}}}
\affil[1]{Informatics Dept, Faculty of Information Science and Electrical Engineering, Kyushu University, Japan}
\affil[2]{Computational Learning Theory Team, RIKEN-AIP, Fukuoka, 819-0395, Japan}
\begin{document}
\date{ }

\maketitle
\begin{abstract}
We consider a good arm identification problem in a stochastic bandit
setting with multi-objectives, where each arm $i \in [K]$ is associated
with a distribution $D_i$ defined over $R^M$.
For each round $t$, the player pulls an arm $i_t$ and receives an $M$-dimensional reward vector sampled according to $D_{i_t}$.
The goal is to find, with high probability, an $\epsilon$-good arm whose expected reward vector is larger than
$\bm{\xi} - \epsilon \mathbf{1}$, where $\bm{\xi}$ is a predefined threshold vector, and the vector comparison is component-wise.
We propose the Multi-Thresholding UCB~(MultiTUCB) algorithm with a sample complexity bound.
Our bound matches the existing one in the special case where
$M=1$ and $\epsilon=0$.
The proposed algorithm demonstrates superior performance compared to baseline approaches across synthetic and real datasets.
\end{abstract}
\keywords{Multi-objective Optimization\and Good Arm Identification\and Thresholding Bandit}

\section{Introduction}
% Recently, the thresholding bandit problem was proposed (Locatelli et al., 2016) as a variant of pure-exploration MAB formulations. In the thresholding bandit problem, the agent tries to correctly partition all the K arms into good arms and bad arms, where a good arm is defined as an arm whose expected reward is greater than or equal to a given threshold ξ > 0, and a bad arm is defined as an arm whose expected reward is lower than the threshold ξ. However, in practice, neither correctly partitioning all the K arms nor exactly identifying the very best arm is always needed; rather, finding some of reasonably good arms as fast as possible is often more useful
Single objective thresholding problem was proposed by~\cite{locatelli2016optimal,kano2019goodarm}, where the player/algorithm aims to find a subset of all arms $[K]$ denoted as the good arm set $[K]_{\text{good}}$. An arm is good if it has an average reward that surpasses a predefined performance threshold $\xi$ with a high probability.
% in which the algorithm's primary objective is to identify the set of arms that has an average reward that surpasses a predefined performance threshold with a high possible probability, 
This problem mainly adheres to two mainstream directions. 
The first one is known as the \textit{fixed budget setting}~\cite{locatelli2016optimal}, in which a fixed computational or sampling budget $T$ is given. This framework focuses on deriving a theoretical upper bound on the probability of errors associated with the selection of the arms. 
% Such bounds are essential for characterizing the trade-off between budget constraints and the reliability of the selected arms. 
The second framework, termed the~\textit{fixed confidence setting}~\cite{kano2019goodarm}, aims to minimize the total sample complexity required to reliably identify the set of arms above the threshold while adhering to a predefined confidence level specified by an error tolerance parameter $\delta$. In this case, it's required to ensure that the selected set of arms is correct with a confidence level of at least $ 1-\delta$. This setting is particularly relevant in applications where achieving a desired level of statistical accuracy is critical and minimizing resource consumption remains a key consideration. 
Both frameworks offer valuable insights into pure exploration in decision-making under uncertainty by addressing these distinct but complementary goals. 

Concurrently, identifying arms under specific requirements with bandit feedback has garnered significant attention within topics on machine learning and sequential decision-making ~\cite{zhao2023revisitinggoodarm,kano2019goodarm}. 
% This setting extends the classical best arm identification~\cite{audibert2010BAI} by introducing one threshold and aims to find a good arm or a good arm set rather than the optimal one. 
However, existing works only considered a single threshold, while in this work, we extend it into multiple thresholds with the following two motivating examples:
% This setting applies to the following examples 

\begin{exmp}[Machine Retention in Manufacturing]
    Consider a factory that operates multiple types of machines, each responsible for producing different components. The factory manager is tasked with optimizing operational costs without necessarily identifying the single most efficient machine. Hence, the manager implements a policy where any machine is retained if it cannot match thresholds across multiple issues. For instance, all remaining machines should have an energy consumption lower than $\xi_1$, maintenance cost lower than $\xi_2$, labor cost lower than $\xi_3$, etc. This approach ensures that all machines contributing positively to the overall profit are kept in operation, even if some are not the most efficient relative to others. 
\end{exmp}

\begin{exmp}[Internet of Things~(IoT) System Deployment]
    In large-scale IoT deployments, a network operator must choose one configuration from a pool of candidate sensor node designs or communication protocols, each exhibiting uncertain performance due to hardware variability, environmental noise, and dynamic workloads. Each candidate is evaluated based on multiple key performance metrics such as communication latency, energy consumption, and data reliability. Rather than selecting the optimal configuration, the objective is to identify any one setup that satisfies minimum operational thresholds across all metrics—for example, latency below 100 ms, packet loss under 1\%, and energy usage within a predefined budget. Due to resource constraints, it is essential to minimize the number of evaluations required for testing each configuration. 
\end{exmp}

Existing research on Multi-Objective Optimization~(MOO), e.g.,~\cite{pereira2022reviewMOO}, is not applicable to the above scenarios. 
% focuses on simultaneously optimizing multiple, often conflicting objectives. 
Researchers on MOO typically aim to achieve Pareto optimality, e.g.,~\cite{jiang2023multiMOO}, in which improving one objective is impossible without negatively affecting another. 
On the basis of this, finding the Pareto front under the Multi-Armed Bandit~(MAB) setting for MOO has also gathered plenty of attention~\cite{kaufmann2016complexitybestarmidentification,kone2023adaptive}, which aims to find a subset of arms that achieves the Pareto optimality to form the front with optimal sample complexity. The Track-and-Stop framework~\cite{kaufmann2016complexitybestarmidentification, garivier2024sequential} has been implemented in this problem. However, these methods are not suitable for our setting since the goal is different, as shown in Figure~\ref{fig:ParetoFrontandThresholding}.

\begin{figure}
    \centering
    \includegraphics[width=0.50\linewidth]{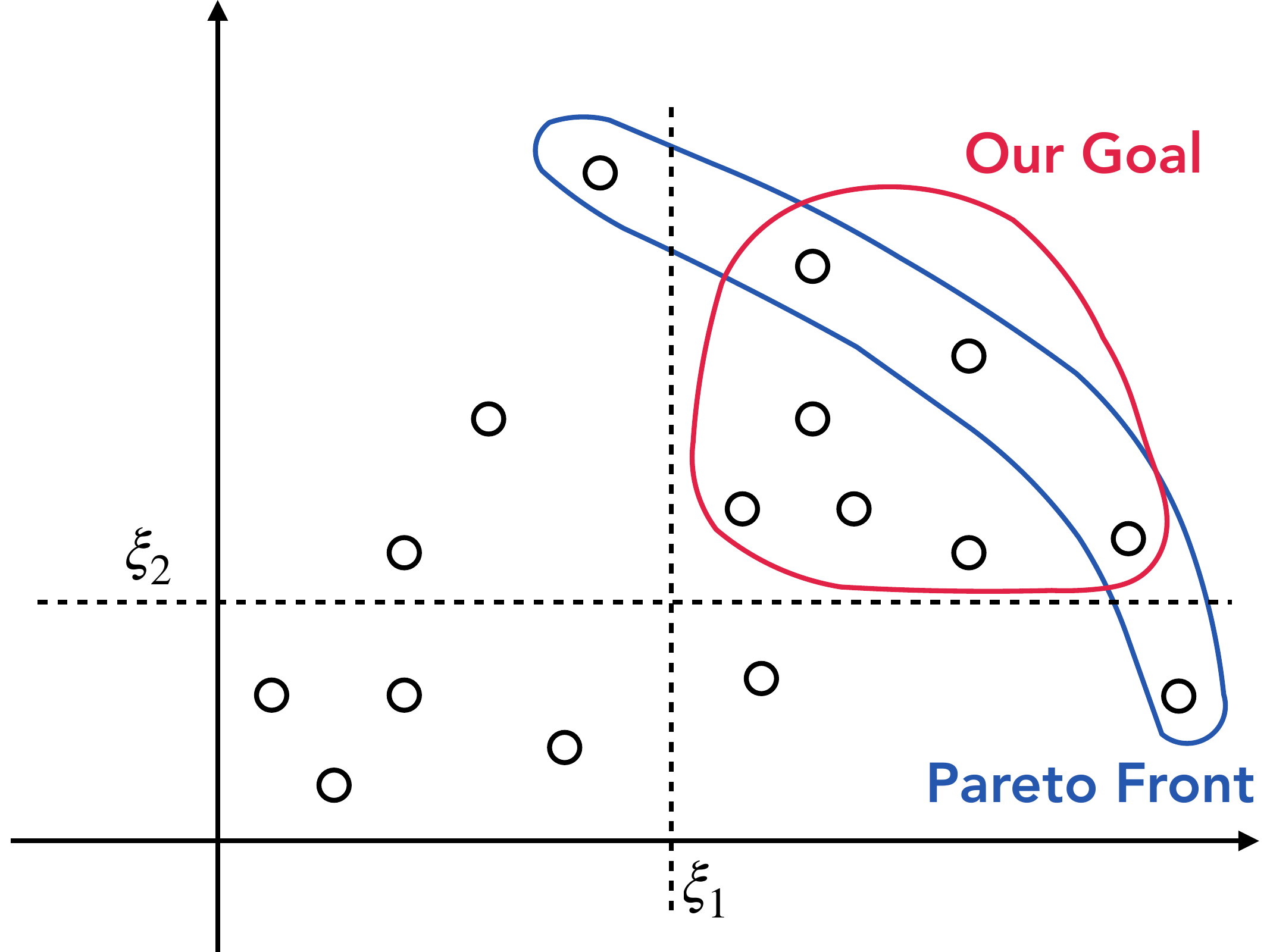}
    \caption{The comparison between the Pareto front and our goal, which is finding the arms whose expected average rewards exceed given thresholds.}
    \label{fig:ParetoFrontandThresholding}
\end{figure}

We integrate the difficulties of handling multiple objectives and restrictions, and thus we consider the case where multiple thresholds are simultaneously considered across all $K$ arms. Note that the concentration inequality for the single objective case, as $M=1$, cannot be applied directly to multi-objectives.
We address this challenge by establishing a new framework for good arm identification problems with multiple thresholds. At each round $t$, the player pulls one arm $i\in[K]$, in which $[K] = \{1,2,\ldots, K\}$, and receives a $M$ dimensional reward vector $\bmz_{t, i}$ sampled i.i.d. from a distribution $\CalD_i$. Concrete details are presented in section~\ref{sectionPreliminary}.
% The goal is to find, with high probability, an $\epsilon$-good arm whose expected reward vector is larger than
% $\bm{\xi} - \epsilon\mathbf{1}$, where $\bm{\xi}$ is a predefined threshold vector, and the vector comparison is component-wise.
The player's goal is to identify a good arm, whose expected reward vector is larger than $\bm{\xi} - \epsilon\mathbf{1}$ for $\epsilon \in [0, 1]$. If no such arm exists, the player selects $\bot$ while minimizing sample complexity with high probability.
% denoted as $\hat\imath$ satisfies $\hat\imath \neq \perp$ and $\text{Pr}(\bmmu_{\hat\imath}\geq \bmxi - \epsilon \bm{1})\geq 1 - \delta$. 

% Our contributions can be summarized in four key aspects. First, we introduce the novel Multi-Thresholding Good Arm Identification (MTGAI) framework and develop a method to address the challenge of balancing multiple objectives. Second, we propose the Multi-Thresholding UCB~(MultiTUCB) algorithm and demonstrate the upper bound on its sample complexity scales as $O(\ln{M})$ w.r.t. $M$ in the dominant term, aligning with existing results when $M=1$~\cite{kano2019goodarm}. 
% Third, we establish a lower bound for MTGAI, which almost matches our upper bound with $\epsilon=0$.
% Finally, our experiments show that the proposed algorithm outperforms three baseline methods adapted from prior works on threshold bandits~\cite{kano2019goodarm,locatelli2016optimal,kalyanakrishnan2012pac}, both on synthetic and real-world datasets.

%scales as $O(\ln{M})$ with respect to the number of objectives $M$,  

Our contributions can be summarized as follows:
\begin{enumerate}
    \item We introduce the Multi-Thresholding Good Arm Identification~(MTGAI) framework, which addresses the challenge of identifying arms that satisfy multiple thresholds simultaneously.
    \item  We propose the Multi-Thresholding UCB~(MultiTUCB) algorithm for the MTGAI problem and derive an upper bound on its sample complexity 
    that is consistent with existing results when $M=1$ and $\epsilon=0$~\cite{kano2019goodarm}. Our analysis significantly generalizes prior work by extending the theoretical guarantees from Bernoulli rewards to sub-Gaussian distributions and from zero accuracy~($\epsilon=0$) to a more practical setting with non-zero accuracy~($\epsilon > 0$), which introduces substantial technical complexity in the proofs. Moreover, we design a refined arm selection rule tailored to the multi-thresholding setting, enabling more efficient exploration under multiple constraints.
    \item  We establish an almost matching lower bound for the MTGAI problem when $\epsilon = 0$, showing the near-optimality of our approach. 
    \item 
Experimental results on both synthetic and real datasets demonstrate that our algorithm consistently outperforms three strong baselines adapted from prior thresholding bandit methods~\cite{kano2019goodarm,locatelli2016optimal,kalyanakrishnan2012pac}.
\end{enumerate}

% \textbf{Goal of the player:}\\
%     % \If {$\exists i\in[K], (\mu_{i}^{(1)}, \ldots, \mu_{i}^{(M)}) \geq (\xi^{(1)}, \ldots, \xi^{(M)})$}{find an arm $\hat \i$ such that:\\
%     find an arm $\hat \imath$ such that:
    
%     \If{$\exists i \in [M]$, $\bmmu_i\geq\bmxi$}{
%     $\hat i \neq \perp$ and $\text{Pr}(\bmmu_{\hat i}\geq \bmxi - \epsilon \bm{1})\geq 1 - \delta$
%     }
%     \Else {
%         $\text{Pr}(\hat i = \perp)\geq 1-\delta$
%     }

\subsection{Other Related Work}
%uncountable
In addition to the papers discussed in the previous part, we discuss further related work 
 to our settings.

\textbf{Best Arm Identification~(BAI)}
The BAI is a pure exploration problem in which the agent tries to identify the best arm with the maximal mean reward within a limited sample~\cite{audibert2010BAI}. Unlike minimizing the regret for online learning, which involves the dilemma of exploration and exploitation, BAI focuses exclusively on exploration—gathering enough information to determine the arm that maximizes the mean reward. In addition, it has been extended to varying reward distributions, or constraints such as fixed confidence~\cite{garivier2016optimal,jourdan2023varepsilonfixconfidence}, making it a versatile tool in decision-making under uncertainty. 

\textbf{Multi-objective Optimization} 
As for MOO, achieving Pareto optimality is of essential importance for making in-time predictions, particularly in contexts where data explosion poses significant challenges. Various approaches have been developed for achieving Pareto optimality, including evolutionary methods~\cite{murata1995mogaMOO}, hypervolume scalarization~\cite{zhang2020randomMOO}, and multiple gradient-based methods~\cite{sener2018multiMOO}. This applies to various real applications, such as resource management~\cite{Applicationliu2024multiResourceAllocation} and load forecasting~\cite {Applicationxing2024novelForcasting}. 
Alternatively, researchers have also studied the problem from the perspective of regret minimization~\cite{lu2019multibanditMOO, cheu2018skyline}, which serves as a different goal. In parallel, researchers in economics have considered selecting one arm from the Pareto front that optimizes the Generalized Gini Index~\cite{busa2017multi}. 

% Typically, existing works for MOO with MAB mainly focus on the case when no requirements on arms are given, and the algorithm requires comparing the arms to obtain a subset with better performance.

However, all previous works do not solve our MTGAI problem, unlike our proposed MultiTUCB algorithm, which will be explained in Section~\ref{section:Multi-threshodlingUCBalgorithm}. 
% \textcolor{red}{Herein, we extend the fixed-confidence framework to the more complex case of multi-dimensional feedback, where multiple thresholds must be simultaneously satisfied}.
%In this work, we further extend the fixed confidence setting to multiple feedback. 

% On the other hand, in the fixed confidence setting~\cite{jourdan2023varepsilonfixconfidence,mason2020findingfixconfidence}, the learner aims to identify the set of arms above the threshold—or the best set of arms—with as few pulls as possible, given a specified acceptable error probability 
% $\delta$. In this setting, the bounds are concerned with the minimum number of pulls 
% $T$ is needed to identify the correct set of arms with a probability of at least $1-\delta$.

% Multi-objective optimization (MOO)~\cite{pereira2022reviewMOO} focuses on optimizing multiple, often conflicting objectives simultaneously. Various approaches have been developed for MOO, including evolutionary methods~\cite{murata1995mogaMOO}, hypervolume scalarization~\cite{zhang2020randomMOO}, and multiple gradient-based methods~\cite{sener2018multiMOO}. Moreover, researchers on MOO with online learning typically aim for achieving Pareto optimality~\cite{jiang2023multiMOO,lu2019multiMOO}, in which improving one objective is not possible without negatively affecting another, which is of essential importance for making in-time predictions due to data explosion.

\section{Preliminaries}
\label{sectionPreliminary}
%This section presents the additional assumptions made towards the design and analysis of our algorithm.
This section presents basic tools and a formal problem formulation. First, we introduce the definition of $(\mu, \sigma)$-subgaussian random variable and its property.

\begin{definition}[$(\mu, \sigma)$-subgaussian]
    A random variable $z$ that takes values in $\Real$ is $(\mu, \sigma)$-subgaussian if $\mu = \bbE [\bmz]$ and for any $\lambda\in\Real$, 
    \[
    \bbE[\exp(\lambda(z - \mu))] \leq \exp(\lambda^2\sigma^2 /2 ).
    \]
\end{definition}

\begin{proposition}[See e.g.~\cite{lattimore2020bandit}]
\label{PropHoeffdingSubgaussian}
    Let $z_1, \ldots, z_t$ be i.i.d. \\
    $(\mu, \sigma)$-subgaussian random variables. Then for any $\epsilon\geq 0$, 
    \[
    \bbP\left(\hat\mu_t \geq \mu +\epsilon \right) \leq \exp{\left(-\frac{t\epsilon^2}{2\sigma^2}\right)} \text{ and } \bbP\left(\hat\mu_t \leq \mu -\epsilon\right) \leq \exp{\left(-\frac{t\epsilon^2}{2\sigma^2}\right)}, 
    \]
    where $\hat\mu_t = \frac{1}{t}\sum_{s=1}^t z_{s}$.
\end{proposition}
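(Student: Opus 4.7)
The plan is the standard Chernoff-bound argument. First I would reduce to the upper tail only, since the lower tail follows by observing that if $z$ is $(\mu,\sigma)$-subgaussian then $-z$ is $(-\mu,\sigma)$-subgaussian, and applying the upper-tail bound to the variables $-z_s$ yields the lower-tail inequality. So the only real work is to establish $\bbP(\hat\mu_t \geq \mu + \epsilon) \leq \exp(-t\epsilon^2/(2\sigma^2))$.

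For the upper tail, fix an arbitrary $\lambda > 0$. By Markov's inequality applied to the nonnegative random variable $\exp(\lambda \sum_{s=1}^{t}(z_s - \mu))$,
\[
\bbP\!\left(\hat\mu_t - \mu \geq \epsilon\right) = \bbP\!\left(\exp\!\left(\lambda \sum_{s=1}^{t}(z_s - \mu)\right) \geq e^{\lambda t \epsilon}\right) \leq e^{-\lambda t \epsilon}\, \bbE\!\left[\exp\!\left(\lambda \sum_{s=1}^{t}(z_s - \mu)\right)\right].
\]
Using independence, the expectation factorizes as $\prod_{s=1}^{t} \bbE[\exp(\lambda(z_s-\mu))]$, and the subgaussian hypothesis bounds each factor by $\exp(\lambda^2 \sigma^2 / 2)$. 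This gives the bound $\exp(-\lambda t \epsilon + t \lambda^2 \sigma^2 / 2)$.

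The final step is to minimize the exponent over $\lambda > 0$. The exponent $-\lambda t \epsilon + t \lambda^2 \sigma^2 / 2$ is a quadratic in $\lambda$ with minimizer $\lambda^{\star} = \epsilon / \sigma^2$, at which value the exponent equals $-t \epsilon^2 /(2\sigma^2)$, yielding the desired inequality. The case $\epsilon = 0$ is trivial since the probability is at most $1 = \exp(0)$.

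Honestly, there is no hard step here: the argument is entirely textbook, and the only thing worth flagging is the careful reduction for the lower tail (checking that $-z_s$ is subgaussian with the same parameter $\sigma$, which is immediate from $\bbE[\exp(\lambda(-z_s-(-\mu)))] = \bbE[\exp((-\lambda)(z_s-\mu))] \leq \exp(\lambda^2 \sigma^2 / 2)$). The only mild subtlety is remembering to optimize $\lambda$ after the Markov step rather than picking a suboptimal value.
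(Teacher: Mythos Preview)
Your proposal is correct and is exactly the standard Chernoff argument. Note, however, that the paper does not supply its own proof of this proposition at all: it simply cites it as a known result (from \cite{lattimore2020bandit}), so there is nothing in the paper to compare against---your argument is precisely the textbook derivation one would find behind that citation.
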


This paper investigates the protocol of MTGAI, which is defined between one player and Nature as the following protocol with $K$ as the number of arms and $M$ as the number of thresholds/objectives. 
They are given an accuracy parameter $\epsilon$, a confidence parameter $\delta$ and thresholds $\bmxi = \left(\xi^{(1)},\ldots, \xi^{(M)}\right) \in [0, 1]^{M}$.   
For each iteration $t = 1, 2, \ldots$, (i) the player chooses an arm $i_t\in[K]$ and (ii) Nature returns a reward vector ${\bmz}_{t, i_t}\in {[0,1]}^{M}$ sampled according to $\CalD_{i_t}$. Here, $\CalD_i$ is defined as the distribution over $\Real^{M}$ associated with arm $i$ such that for the random variable $\bmz = \left(z^{(1)}, \ldots, z^{(M)}\right)\sim\CalD_i$, $\bmz^{(m)}$ is $(\mu_i^{(m)}, \sigma)$-subgaussian for some $\bmmu_i = \left(\mu_i^{(1)}, \ldots, \mu_i^{(M)}\right)\in {[0,1]}^M$ and $\sigma\in\Real_{+}$ for each $m\in[M]$.
The player decides when to stop and outputs an arm $\hat {\imath}\in [K]\cup \{\perp\}$. The stopping time is considered as sample complexity. 
% In this work, we set a stochastic environment $\CalD_i$, which is defined as  distribution over $\Real^{M}$ associated with arm $i$ such that for the random variable $\bmz = (z^{(1)}, \ldots, z^{(M)})\sim\CalD_i$, $\bmz^{(m)}$ is $(\mu_i^{(m)}, \sigma)$-subgaussian for some $\mu_i^{(m)}\in [0,1]$ and $\sigma\in\Real_{+}$ for each $m\in[M]$. 
An arm $i$ is $\epsilon$-good if $\bmmu_i\geq \bmxi - \epsilon\bm{1}$, where the inequality is component-wise. In particular, an arm $i$ is good if it is $0$-good. The goal of the player is to find an arm $\hat\imath$ with $\bbP\left(\hat\imath\neq \bot \wedge \bmmu_{\hat\imath} \geq \bmxi - \epsilon\bm{1} \right) \geq 1-\delta$ if one good arm exists and to output $\bot$ with $\bbP\left(\hat\imath = \bot \right) \geq 1-\delta$ if no $\epsilon$-good arm exists. An algorithm that achieves this goal is said to be \emph{$\DeltaPAC$-successful}.
Notice that this protocol can be easily extended to finding all good arms by repeatedly running the algorithm for at most $K$ times. We summarize all other symbols in Table~\ref{TableNotation} for convenience.

\begin{table}[t]
\label{TableNotation}
\caption{Notation List}
    \centering
    \begin{tabular}{c l}
    \toprule
        $\bot$ & The bottom\\
        $T_i(t)$ & The number of times arm $i$ is chosen until round $t$ \\
        $\hat{a}$ & The output of Algorithm~\ref{alg:Multi thresholdUCB}\\
        $\stoppingtime$ & Stopping time of Algorithm~\ref{alg:Multi thresholdUCB}\\
        $\tilde{g}_{t,i}$ & $ = \hat{g}_{t,i} - \sqrt{\frac{2\sigma^2\ln{\left(KMT_i(t)\right)}}{T_i(t)}}$\\
        $\underline{g}_{t,i}$ & $ = \hat{g}_{t,i} - \alpha(T_{i}(t), \delta)$\\
        $\overline{g}_{t,i}$ & $ = \hat{g}_{t,i} + \alpha(T_{i}(t), \delta)$\\
        $i^{*}$ & $=\arg\min\limits_{i\in[K]} g_{i}$\\
    \bottomrule
    \end{tabular}
\end{table}

\section{Multi-thresholding UCB algorithm}
\label{section:Multi-threshodlingUCBalgorithm}
In this section, we propose the MultiTUCB algorithm to efficiently solve the proposed problem. 
First, we introduce a gap vector and its estimator to handle the multiple objectives and prepare for the algorithm. For each $i\in[K]$, let
% Hence, the gradient based on $\bmx$ for the real mean and estimator for arm $i$
\begin{align}\label{nablamu}
    g_{i} &= g(\bmmu_i) = \max\left\{\xi_1 - \mu_i^{(1)},\xi_2 - \mu_i^{(2)}, \ldots, \xi_M - \mu_i^{(M)}\right\}.
\end{align}
Note that for any $\epsilon$-good arm $i$, $g_i\leq \epsilon$.
The gap vector $\bmg$ is defined as $\bmg = (g_1, \ldots, g_K)$. 
\begin{definition}(\textbf{$\alpha$-approx estimator})
\label{defiAlphaApproxEstimator}
    For a function $\alpha: \mathbb{N}\times (0,1)\rightarrow \Real_{+}$, we define
    $\alpha$-approx estimator of $\bmg$ is an oracle that satisfies the condition. 
    For the $t$-th call to the oracle, it receives input $i_t\in [K]$ and returns $\hat {\bmg}_t = (\hat{g}_{t, 1}, \ldots, \hat{g}_{t, K}) \in\Real^{K}$ such that
    \begin{align}
        \forall\delta\in(0,1),\bbP\left( \forall i\in[K],\forall s\in[t], \left| \hat{g}_{s,i} - g_i\right|\leq \alpha(T_{i}(s),\delta) \right) \geq 1-\delta, 
    \end{align}
    where $T_i(s) = \left|\{ u\in[s] | i_u = i\}\right|$. 
\end{definition}
% We define the loss function as 
% %\label{euqa:hingelossl} 
% \begin{align*}
%     L(\bmmu_1, \ldots, \bmmu_K, \bmx) 
%     &= \sum_{i} x_i\max\{\xi_1 - \mu_i^{(1)}, \ldots, \xi_M - \mu_i^{(M)}\}
% \end{align*}
% while the unbiased estimator with the abbreviation as 

% \begin{align}\label{euqa:hatgt}
%     \hat g_{t,i} = \max\{\xi_1 - \hat \mu_i^{(1)},\xi_2 - \hat \mu_i^{(2)}, \ldots, \xi_M - \hat \mu_i^{(M)}\},
% \end{align}
We claim that $\bmg$ satisfies the following proposition.

\begin{proposition}
    \label{prop:muLipschitz}
    For any $i\in [K]$ and any 
    $\bmmu_1,\ldots,\bmmu_K$, $\hat{\bmmu}_1,\ldots,\hat{\bmmu}_K \in {[0,1]}^M$
    % Let $L = L(\bmmu,\cdot)$ for some $\bmmu\in\Real^{K}$ being $\bmmu$-gradient-Lipschitz, i.e., such that
    \[
    \left|g(\bmmu_i) - {g}(\hat{\bmmu}_i) \right| \leq \|\bmmu_i - \hat{\bmmu}_i\|_\infty.
    \]
\end{proposition}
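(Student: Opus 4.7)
The plan is to reduce the claim to the standard 1-Lipschitz property of the pointwise maximum under the $\ell_\infty$ norm. Recall that $g(\bmmu_i) = \max_{m \in [M]} \bigl(\xi_m - \mu_i^{(m)}\bigr)$, so $g$ is written as a max of $M$ affine functions, each of which depends on a single coordinate of its argument and has slope $-1$ in that coordinate.

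The first step is to invoke (or quickly derive) the inequality
\[
\Bigl|\max_{m} a_m - \max_{m} b_m\Bigr| \leq \max_{m} |a_m - b_m|,
\]
which holds for any finite family of reals. Applying this with $a_m = \xi_m - \mu_i^{(m)}$ and $b_m = \xi_m - \hat{\mu}_i^{(m)}$ immediately yields
\[
\bigl|g(\bmmu_i) - g(\hat{\bmmu}_i)\bigr| \leq \max_{m \in [M]} \bigl|\hat{\mu}_i^{(m)} - \mu_i^{(m)}\bigr| = \|\bmmu_i - \hat{\bmmu}_i\|_\infty,
\]
which is exactly the desired inequality.

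If the reviewer wants the auxiliary max-inequality justified from scratch, I would prove it in one line: pick $m^\star$ attaining $\max_m a_m$, then $\max_m a_m - \max_m b_m \leq a_{m^\star} - b_{m^\star} \leq \max_m |a_m - b_m|$, and repeat the argument with the roles of $a$ and $b$ swapped to get the absolute value. There is essentially no obstacle here; the only thing to be careful about is writing the max-over-$m$ index cleanly so that it is clear the ``Lipschitz constant'' is exactly $1$ and not $M$ or $\sqrt{M}$, which is why the $\ell_\infty$ norm (rather than, say, $\ell_2$) appears on the right-hand side.
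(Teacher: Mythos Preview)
Your proof is correct and, in fact, considerably cleaner than the paper's own argument. The paper proves the same inequality by induction on $M$: it writes $A_M = \max\{A_{M-1}, \xi_M - \mu_i^{(M)}\}$, expands each two-argument $\max$ via the identity $\max\{a,b\} = \tfrac{a+b}{2} + \tfrac{|a-b|}{2}$, and then applies the reverse triangle inequality to peel off one coordinate at a time. Your approach bypasses the induction entirely by invoking the one-line inequality $\bigl|\max_m a_m - \max_m b_m\bigr| \le \max_m |a_m - b_m|$ directly for all $M$ coordinates at once. Both routes yield the identical Lipschitz constant~$1$; yours is shorter and arguably more transparent, while the paper's inductive unwinding is just a longer path to the same standard fact.
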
The proof is given in Appendix~\ref{AppendixProofofProp2}. 

% Algorithm~\ref{algo:ApproxEstimator} calculates an estimator that satisfies Definition~\ref{defiAlphaApproxEstimator}.
Now, we show the existence of $\alpha$-approx estimator for some $\alpha$. The algorithm for the estimator is described in Algorithm~\ref{algo:ApproxEstimator}.

\SetAlgoNoLine
\begin{algorithm}[H]
 \label{algo:ApproxEstimator}
    \caption{Approx Estimator}
    % \textbf{Input}: $i\in[K]$;\\
    \For{$i\in[K]$} 
    {\begin{align*}
        \hat{\bmmu}_{t,i} = \frac{\sum^t_{s=1} \mathds{1}[i_s = i] \bmz_{s,i}}{T_i(t)};
    \end{align*}}
    \textbf{Output}: $\hat{\bmg}_t = (g(\hat{\bmmu}_{t, 1}),\ldots, g(\hat{\bmmu}_{t, K}))$;
\end{algorithm}
% \begin{algorithm}
% \caption{Multi-threshold UCB algorithm}\label{algo:ApproxEstimator}
% \begin{algorithmic}
%     % \SetAlgoNoLine
%     \STATE \textbf{Input}: $i_t\in[K]$;
%     \FOR {$i\in[K]$} 
%         \STATE $\hat{\bmmu}_{t,i} = \frac{\sum^t_{s=1} \mathds{1}[i_s = i] \bmz_{s,i}}{T_i(t)}$;
%     \ENDFOR
%     \STATE \textbf{Output}: $\hat{\bmg}_t = (g_1(\hat{\mu}_{t, i}),\ldots, g_K(\hat{\mu}_{t, i}))$;
% \end{algorithmic}
% \end{algorithm}

Hence, we calculate the exact value of the $\alpha$-approx estimator under our setting.
\begin{proposition}\label{Prop:alphaApproxestimatorvalue}
     Algorithm~\ref{algo:ApproxEstimator} is an $\alpha$-approx estimator of $\bmg$ with 
     \[
     \alpha(\tau,\delta) = \sqrt{\frac{2\sigma^2\ln{\frac{\pi^2 KM\tau^2}{3\delta}}}{\tau}}.
     \] 
\end{proposition}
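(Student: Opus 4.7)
The plan is to combine the Lipschitz property of $g$ from Proposition~\ref{prop:muLipschitz} with a time-uniform sub-Gaussian concentration bound, glued together by a standard union bound using $\sum_{\tau=1}^{\infty} 1/\tau^2 = \pi^2/6$.

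First, I would reduce the statement about $\hat{\bmg}_t$ to a statement about $\hat{\bmmu}_{t,i}$. By the definition of Algorithm~\ref{algo:ApproxEstimator} and Proposition~\ref{prop:muLipschitz},
\[
|\hat{g}_{s,i} - g_i| = |g(\hat{\bmmu}_{s,i}) - g(\bmmu_i)| \leq \|\hat{\bmmu}_{s,i} - \bmmu_i\|_{\infty} = \max_{m \in [M]} \bigl| \hat{\mu}_{s,i}^{(m)} - \mu_i^{(m)} \bigr|.
\]
So it suffices to control the per-coordinate deviations $|\hat{\mu}_{s,i}^{(m)} - \mu_i^{(m)}|$ with probability $1-\delta$ uniformly in $s\in[t]$, $i\in[K]$, $m\in[M]$.

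Next, for each fixed triple $(i,m,\tau)$ with $\tau\in\mathbb{N}$, I would apply Proposition~\ref{PropHoeffdingSubgaussian} (together with the union over the two tails) to the empirical mean of the first $\tau$ samples of coordinate $m$ from arm $i$. Setting the target failure probability at level $\tau$ to be $\delta_\tau := \frac{6\delta}{\pi^2 KM \tau^2}$, I invert $2\exp(-\tau\epsilon^2/(2\sigma^2)) \leq \delta_\tau$ to obtain the deviation radius
\[
\epsilon_\tau = \sqrt{\frac{2\sigma^2 \ln(2/\delta_\tau)}{\tau}} = \sqrt{\frac{2\sigma^2 \ln\!\bigl(\pi^2 KM\tau^2/(3\delta)\bigr)}{\tau}} = \alpha(\tau,\delta).
\]
Thus with probability at least $1-\delta_\tau$, the $\tau$-sample empirical mean of $(i,m)$ is within $\alpha(\tau,\delta)$ of $\mu_i^{(m)}$.

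Finally, I would take a union bound over all $i\in[K]$, $m\in[M]$, and $\tau\in\mathbb{N}$. The total failure probability is
\[
\sum_{i=1}^{K}\sum_{m=1}^{M}\sum_{\tau=1}^{\infty} \delta_\tau \;=\; KM \cdot \frac{6\delta}{\pi^2 KM}\sum_{\tau=1}^{\infty}\frac{1}{\tau^2} \;=\; \delta.
\]
On the complementary event, for every $s\in[t]$ we have $T_i(s)\in\mathbb{N}$, so plugging $\tau = T_i(s)$ gives $|\hat\mu_{s,i}^{(m)} - \mu_i^{(m)}| \leq \alpha(T_i(s),\delta)$ simultaneously for all $i,m,s$; combined with the Lipschitz reduction above this yields the claim. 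The mildly subtle step is ensuring a time-uniform bound even though $T_i(s)$ is data-dependent; this is handled by the standard device of union-bounding over every possible value $\tau$ of the pull count (not over $s$ directly), which works because for each fixed $\tau$ the first $\tau$ samples of arm $i$ on coordinate $m$ are i.i.d.\ $(\mu_i^{(m)},\sigma)$-sub-Gaussian and the $\pi^2/6$ series keeps the total budget finite.
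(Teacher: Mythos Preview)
Your proposal is correct and follows essentially the same route as the paper: reduce via Proposition~\ref{prop:muLipschitz} to coordinate-wise deviations, apply the sub-Gaussian tail bound of Proposition~\ref{PropHoeffdingSubgaussian}, and union bound over $i\in[K]$, $m\in[M]$, and pull counts with the $\pi^2/6$ series. Your explicit handling of the data-dependent $T_i(s)$ by union-bounding over every possible pull count $\tau\in\mathbb{N}$ (rather than over time steps $s$) is in fact slightly cleaner than the paper's presentation, which passes from $\sum_{s\in[t]}$ to $\sum_{\tau=1}^\infty$ without spelling out this reduction.
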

%%%%%mark
The proof is given in Appendix~\ref{AppendixalphaApproxestimatorvalue}.

Then, we propose the MultiTUCB algorithm to efficiently solve the proposed problem. The detail of the algorithm is given in Algorithm~\ref{alg:Multi thresholdUCB}.
It uses Algorithm~\ref{algo:ApproxEstimator} for the $\alpha$-approx estimator as shown in Proposition~\ref{Prop:alphaApproxestimatorvalue} 
% $$\alpha(\tau, \delta) = \sqrt{\frac{2\sigma^2\ln{\frac{\pi^2 KM{\tau}^2}{3\delta}}}{\tau}}$$ 
in the stopping criteria. 
% The selection strategy of Algorithm~\ref{alg:Multi thresholdUCB} is close to LUCB~\cite{kano2019goodarm} while designed for multi-objectives. 
We denote $T_{\text{stop}}$ as the stopping time and $\hat{a}$ as the output of Algorithm~\ref{alg:Multi thresholdUCB}.
\SetAlgoNoLine
\begin{algorithm}[ht]
    \caption{Multi-thresholding UCB algorithm~(MultiTUCB)}\label{alg:Multi thresholdUCB}
    \textbf{Input}: Arm set $[K]$, thresholds $\{\xi_1,\ldots,\xi_M\}$, confidence parameter $\delta$, accuracy rate $\epsilon$;\\
    \textbf{Init}: Let $\CalA = [K]$;\\
    Pull each arm once;\\
    Compute $\hat \bmg_{K}$ by Algorithm~\ref{algo:ApproxEstimator};\\
    \For{$t\in[K + 1, \ldots, ]$}
    {Select $i_t \in \arg\min\limits_{i\in \CalA} \tilde{g}_{t-1, i}$ with $\tilde{g}_{t - 1,i} = \hat g_{t-1,i} - \sqrt{\frac{2\sigma^2\ln{\left(KM{T_{i}(t-1)}\right)}}{T_{i}(t-1)}} $;\\
    Receive feedback $\bmz_{t,i_t}$;\\
    Update $T_i(t)$;\\
    Compute $\hat \bmg_{t}$ by Algorithm~\ref{algo:ApproxEstimator};\\
    \If {$\underline{g}_{t, i_t} > 0$}
    {Delete $i_t$ from $\CalA$.}
    //\textbf{Condition 1}\\
    \If{$\overline{g}_{t, i_t} \leq \epsilon$}{Output $\hat a = i_t$, Stop.}
    //\textbf{Condition 2}\\
    \If{$\CalA = \emptyset$}
    {Output $\hat a = \bot$, Stop.}
    }
\end{algorithm}
\subsection{Upper Bounds with Expectation}
\label{SectionUpperBoundswithExpectation}
First, we guarantee the correctness of our proposed algorithm with the following theorem.
\begin{theorem}
\label{theoremDeltaPACmultiTUCB}
    Algorithm~\ref{alg:Multi thresholdUCB} is $\DeltaPAC$-successful.
\end{theorem}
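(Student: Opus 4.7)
The plan is to reduce the $(\delta,\epsilon)$-PAC claim to a deterministic check on a high-probability ``good event'' supplied by the $\alpha$-approx estimator, and then match the two stopping rules of Algorithm~\ref{alg:Multi thresholdUCB} against the two clauses of the PAC definition.

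First I would introduce the good event
\[
\CalE = \Bigl\{\, \forall t \geq 1,\; \forall i \in [K],\; \bigl|\hat g_{t,i} - g_i\bigr| \leq \alpha(T_i(t),\delta) \,\Bigr\},
\]
so that on $\CalE$ the inequalities $\underline{g}_{t,i} \leq g_i \leq \overline{g}_{t,i}$ hold uniformly in $(t,i)$ throughout the run. Proposition~\ref{Prop:alphaApproxestimatorvalue} gives $\bbP(\CalE) \geq 1-\delta$, and the remainder of the argument is carried out deterministically on $\CalE$.

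Second, I would read off the two key implications of being on $\CalE$. If Condition~1 eliminates arm $i_t$ at some round $t$, then $g_{i_t} \geq \underline{g}_{t,i_t} > 0$, so $i_t$ is not a good ($0$-good) arm. If Condition~2 returns $\hat a = i_t$, then $g_{i_t} \leq \overline{g}_{t,i_t} \leq \epsilon$, which by the definition~\eqref{nablamu} of $g_i$ is equivalent to $\bmmu_{i_t} \geq \bmxi - \epsilon\bm{1}$, i.e., $i_t$ is $\epsilon$-good.

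Third, I would split according to the two cases of the PAC definition. If some good arm $i^\ast$ exists, then $g_{i^\ast} \leq 0$, and the first implication shows $i^\ast$ is never eliminated; hence $\CalA$ never empties and the algorithm cannot output $\bot$, so any halting output comes from Condition~2 and is $\epsilon$-good by the second implication. Conversely, if no $\epsilon$-good arm exists, then $g_i > \epsilon$ for every $i$, so on $\CalE$ we have $\overline{g}_{t,i} \geq g_i > \epsilon$, Condition~2 can never fire, and the only possible output is $\bot$. Both clauses of the $(\delta,\epsilon)$-PAC definition therefore hold on $\CalE$, whose probability is at least $1-\delta$.

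I do not expect a serious obstacle here: Proposition~\ref{Prop:alphaApproxestimatorvalue}, which itself rests on the Lipschitz property of Proposition~\ref{prop:muLipschitz} combined with a union-bounded sub-Gaussian tail bound, absorbs all of the probabilistic content. The only place where care is needed is to line up the asymmetric stopping thresholds (elimination at $0$, acceptance at $\epsilon$) with the asymmetric PAC promise (triggered by the existence of a $0$-good arm, but with output only required to be $\epsilon$-good), and to observe that the theorem concerns correctness of any produced output rather than termination, so no finite-time stopping argument is needed at this stage.
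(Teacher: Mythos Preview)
Your proposal is correct. It differs from the paper's argument mainly in packaging: the paper first proves an intermediate per-arm lemma (Lemma~\ref{AppendixlemmaTheorem2}) bounding $\bbP\bigl(\bigcup_n\{\underline g_{n,i}>0\}\bigr)\le\delta/K$ for each good arm and $\bbP\bigl(\bigcup_n\{\overline g_{n,i}\le\epsilon\}\bigr)\le\delta/K$ for each non-$\epsilon$-good arm, then decomposes the failure event as $\{\hat a=\bot\}\cup\{g_{\hat a}>\epsilon\}$ and bounds the two pieces separately by $\delta/K$ and $(K-1)\delta/K$. You instead invoke Proposition~\ref{Prop:alphaApproxestimatorvalue} once to obtain a single global good event $\CalE$ of probability at least $1-\delta$ and verify both stopping rules deterministically on $\CalE$. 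Your route is more economical---no auxiliary lemma is needed---while the paper's decomposition makes visible how the $\delta$ budget is allocated across failure modes; both are sound and standard.

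One small technical point to watch: Definition~\ref{defiAlphaApproxEstimator} is phrased for a fixed call index $t$ (with $\forall s\in[t]$), whereas your event $\CalE$ ranges over all rounds. This is harmless because the proof of Proposition~\ref{Prop:alphaApproxestimatorvalue} actually sums the tail probabilities over $\tau$ from $1$ to $\infty$, so the confidence bound is uniform in $t$; you should state this explicitly when invoking the proposition.
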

Details are deferred to Appendix~\ref{AppendixProoftheoremDeltaPACmultiTUCB}.
%%%%%%%%unfinished 

% Theorem~\ref{theoremDeltaPACmultiTUCB} verifies the correctness of our proposed algorithm.

\begin{definition}
\label{definitiontiepsilon0}
    Let $t_i:\mathbb{R}_+ \rightarrow \mathbb{R}_+$ be a function defined as follows.
    \begin{align*}
        &\text{For any good arm i,}\\
        &\quad t_i(\epsilon_0) = \max\left\{ \frac{4\sigma^2}{{(\epsilon - g_i - \epsilon_0)}^2}\ln{\left(\frac{8\sqrt{3} \sigma^2\pi KM/\delta}{3{(\epsilon - g_i - \epsilon_0)}^2}\ln{\frac{4\sqrt{3}\pi\sigma^2}{3{(\epsilon - g_i - \epsilon_0)}^2}}\right)}, 0\right\}.\\
         &\text{For any non-$\epsilon$-good arm i,}\\
         &\quad t_i(\epsilon_0) = \max\left\{ \frac{4\sigma^2}{{(g_i - \epsilon -  \epsilon_0)}^2}\ln{\left(\frac{8\sqrt{3} \sigma^2\pi KM/\delta}{3{(g_i -\epsilon -  \epsilon_0)}^2}\ln{\frac{4\sqrt{3}\pi\sigma^2}{3{(g_i - \epsilon - \epsilon_0)}^2}}\right)}, 0\right\}.
    \end{align*}
\end{definition}

\begin{theorem}
\label{theoremUpperboundExpectationGoodArm}
If there exists a good arm, letting $i^{*} = \arg\min\limits_{i\in[K]} g_i$, Algorithm~\ref{alg:Multi thresholdUCB} achieves 
\begin{align}
    \bbE[\stoppingtime] &\leq  t_{i^*}(\epsilon_0)
    +\sum_{i\neq i^*}\frac{8\sigma^2\ln{\left(KM\max\limits_{i\in[K]} \lfloor t_i(\epsilon_0)\rfloor\right)}}{{\left(g_i - g_{i^*} +\epsilon_0\right)}^2}\nonumber \\
    &\quad +\frac{2(K+1)M\sigma^2}{{\epsilon_0}^2} + \frac{K^3M}{2{\epsilon_0}^2}e^{4{\epsilon_0}^2}, \nonumber
\end{align}
where $\epsilon_0$ be any number s.t. $0 < \epsilon_0 <  \epsilon - g_i$ for any good arm $i$ and be any number s.t. $0 < \epsilon_0 <  g_i - \epsilon$ with any non-$\epsilon$-good arm $i$.
\end{theorem}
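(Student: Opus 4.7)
The strategy is the classical decomposition $\bbE[\stoppingtime] = \sum_{i \in [K]} \bbE[T_i(\stoppingtime)]$, isolating the contribution of the best good arm $i^*$ from that of the other arms, and letting the $\epsilon_0$-dependent error terms absorb the tail contributions from concentration failures. I would first condition on the high-probability event $\mathcal{E}$ of Proposition~\ref{Prop:alphaApproxestimatorvalue}, on which $|\hat{g}_{s,i} - g_i| \leq \alpha(T_i(s), \delta)$ for all $s,i$ with $\bbP(\mathcal{E}) \geq 1-\delta$. On $\mathcal{E}$, arm $i^*$ is never eliminated, since Condition~1 cannot fire ($\underline{g}_{t,i^*} \leq g_{i^*} \leq 0$), so $i^* \in \mathcal{A}$ throughout. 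Moreover, by construction of $t_{i^*}(\epsilon_0)$, once $T_{i^*}(t) \geq t_{i^*}(\epsilon_0)$ we have $\overline{g}_{t,i^*} \leq g_{i^*} + 2\alpha(T_{i^*}(t),\delta) \leq \epsilon$, so Condition~2 triggers the next time $i^*$ is selected. This yields $T_{i^*}(\stoppingtime) \leq t_{i^*}(\epsilon_0)$ on $\mathcal{E}$, producing the first term of the bound.

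For $i \neq i^*$, the plan is to exploit the selection rule: whenever $i_t = i$, we have $\tilde{g}_{t-1,i} \leq \tilde{g}_{t-1,i^*}$ because $i^* \in \mathcal{A}$. Introducing a per-round slack event $\mathcal{G}_{t,i}$ with additional margin $\epsilon_0/2$---controlled via Proposition~\ref{PropHoeffdingSubgaussian} together with a union bound over the $M$ coordinates---this comparison collapses to an inequality of the form $g_i - g_{i^*} + \epsilon_0 \leq 2\sqrt{2\sigma^2 \ln(KM T_i(t-1))/T_i(t-1)}$. Inverting and substituting $T_i \leq \max_j \lfloor t_j(\epsilon_0)\rfloor$ inside the logarithm yields the deterministic bound $T_i(\stoppingtime) \leq 8\sigma^2 \ln(KM\max_j\lfloor t_j(\epsilon_0)\rfloor)/(g_i - g_{i^*} + \epsilon_0)^2$, which when summed over $i \neq i^*$ is the second term.

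The residual contribution of the complementary rounds $\mathcal{G}_{t,i}^c$ is controlled by summing the $M\exp(-s\epsilon_0^2/(2\sigma^2))$-type tails geometrically: $\sum_{s\geq 1}\exp(-s\epsilon_0^2/(2\sigma^2)) \leq 2\sigma^2/\epsilon_0^2$, giving the $\frac{2(K+1)M\sigma^2}{\epsilon_0^2}$ term after accounting for $K+1$ arms (the extra arm absorbs the contribution of $\mathcal{E}^c$). The last term $\frac{K^3 M}{2\epsilon_0^2}e^{4\epsilon_0^2}$ arises from the early-round regime, where the selection confidence $\sqrt{2\sigma^2\ln(KM T_i)/T_i}$ is not yet tighter than the slack and a coarser per-round union bound over $K$ arms and $M$ coordinates must be used; the $e^{4\epsilon_0^2}$ prefactor comes from a Chernoff estimate applied before the self-normalized concentration dominates.

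The main obstacle I anticipate is reconciling the two different confidence widths used by the algorithm: the $\alpha$-width with log-factor $\pi^2 KM\tau^2/(3\delta)$ that drives the stopping criterion, and the simpler $\sqrt{2\sigma^2\ln(KM T_i)/T_i}$ width (without explicit $\delta$) that drives the selection rule. The $\epsilon_0$ slack must mediate between a deterministic pathwise bound that degrades as $\epsilon_0 \to 0$ and tail terms that blow up like $1/\epsilon_0^2$; the freedom to choose $\epsilon_0$ is what ultimately lets one optimize the trade-off. Careful bookkeeping of the per-coordinate ($M$) and per-arm ($K$ or $K^3$) union bounds through the tail summation, and of the two distinct concentration widths, is the most tedious piece of the argument.
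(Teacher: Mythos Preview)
Your overall shape is close to the paper's, but there are two genuine gaps.

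First, the paper does \emph{not} condition on the global event $\mathcal{E}$ of Proposition~\ref{Prop:alphaApproxestimatorvalue}. Conditioning on $\mathcal{E}$ gives pathwise bounds, but to convert these into a bound on $\bbE[\stoppingtime]$ you must control $\bbE[\stoppingtime\,\mathds{1}_{\mathcal{E}^c}]$; saying ``the extra arm absorbs the contribution of $\mathcal{E}^c$'' does not accomplish this, since on $\mathcal{E}^c$ there is no a~priori bound on $\stoppingtime$, and $\bbP(\mathcal{E}^c)\le\delta$ alone is useless without one. The paper works unconditionally in expectation throughout. For example, the bound on pulls of $i^*$ (Lemma~\ref{lemma4UpperBound}) uses the pathwise inequality
\[
\sum_{t\ge1}\mathds{1}[i_t=i^*]\;\le\;1+\sum_{n\ge1}\mathds{1}\bigl[\overline g_{n,i^*}>\epsilon\bigr],
\]
valid because if $i^*$ is pulled for the $n$-th time then $\overline g_{n-1,i^*}>\epsilon$ (otherwise the algorithm would already have output $i^*$ and stopped); one then takes expectations and applies the tail bound of Lemma~\ref{lemma2UpperBound}. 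No good event is ever invoked.

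Second, your attribution of the $\frac{K^3M}{2\epsilon_0^2}e^{4\epsilon_0^2}$ term is inverted, and your handling of the logarithm is circular. The paper introduces a deterministic horizon $T_0=K\max_i\lfloor t_i(\epsilon_0)\rfloor$ and splits the rounds into $t\le T_0$ and $t>T_0$. The $K^3M$ term comes from the \emph{late} regime $t>T_0$ (Lemmas~\ref{lemma5Upperbound} and~\ref{lemma6UpperBound}): for $t>T_0$ some arm must have been pulled at least $\lceil(t-1)/K\rceil>\max_i t_i(\epsilon_0)$ times, so by Lemma~\ref{lemma2UpperBound} the probability the algorithm has not yet stopped decays like $e^{-c\,\epsilon_0^2 t/K}$, and integrating yields the stated term. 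It is not an ``early-round'' artifact. The same cutoff $T_0$ is what lets you freeze the logarithm in the second term of the theorem: on $t\le T_0$ one trivially has $T_i(t)\le T_0$, hence $\ln(KMT_i(t))\le\ln(KMT_0)$ inside Lemma~\ref{lemma4.5UpperBound}. Your plan to ``substitute $T_i\le\max_j\lfloor t_j(\epsilon_0)\rfloor$ inside the logarithm'' is circular without this device, since $T_i(\stoppingtime)$ is precisely the quantity you are trying to bound.
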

% We give an intuition for this proof by dividing it into several parts as 
% \begin{align*}
%     \bbE[\stoppingtime] &= \bbE\left[\sum_{t=1}^{\infty}\mathds{1}[i_t = i^*,t\leq \stoppingtime] + \sum_{t=1}^{\infty} \mathds{1}[i_t \neq i^*, t\leq \stoppingtime]\right] \\
%     &\leq \bbE\bigg[ \sum_{t=1}^{\infty}\mathds{1}[i_t=i^*] \\
%     &\quad + \sum_{t=1}^{\infty}\mathds{1}[i_t\neq i^*, t\leq \stoppingtime, {\tilde{g}}_{t, i_t} \leq g_{i^* } + \epsilon_0]\\
%     &\quad +\sum_{t=1}^{\infty} \mathds{1}[i_t\neq i^*, t\leq \stoppingtime, {\tilde{g}}_{t, i_t} > g_{i^*} + \epsilon_0] \bigg] \\
%     &\leq \bbE\big[\sum_{t=1}^{\infty} \mathds{1}[ i_t = i^* ]\\
%     &\quad + \sum_{t=1}^{T_0} \mathds{1}[i_t\neq i^* , {\tilde{g}}_{t, i_t} \leq g_{i^{*}} + \epsilon_0] \\
%     &\quad + \sum_{T_0 + 1}^{\infty} \mathds{1} [t\leq \stoppingtime] \big]\\
%     &\quad + \sum_{t=1}^{\infty} \mathds{1}[i_t\neq i^{*}, t\leq \stoppingtime, {\tilde{g}}_{t, i_t} > g_{i^{*}} + \epsilon_0 ],
% \end{align*}

\textbf{Proof sketch of Theorem~\ref{theoremUpperboundExpectationGoodArm}:}
The stopping time with an output $\hat a \in[K]$ can be divided as follows: (i) the number of rounds at which the algorithm chooses $\hat{a}$, (ii) the number of rounds at which the algorithm chooses another $\epsilon$-good arm, and (iii) the number of rounds at which the algorithm chooses any non-$\epsilon$-good arms. Bounding the expectation of each term and combining them lead to the final result. Appendix~\ref{AppendixProofTheoremUBExpectationGoodArm} details the full proof.
%We postpone the full proof in Appendix~\ref{AppendixProofTheoremUBExpectationGoodArm}.

\begin{theorem}
\label{theoremUpperboundExpectationNogoodArm}
% , $\forall i, g_i > \epsilon$ and let $0 < \epsilon_0 <  g_i - \epsilon$, 
If there is no $\epsilon$-good arm, Algorithm~\ref{alg:Multi thresholdUCB} achieves 
% For any $i$, $g_i > \epsilon$ and let $0 < \epsilon_0 <  g_i - \epsilon$, algorithm~\ref{alg:Multi thresholdUCB} achieves
%\label{UpperboundexpectationNogoodarm}
\label{TheoremNoGoodArm}
    \begin{align}
      \bbE[\stoppingtime] & \leq \sum\limits_{i\in[K]} t_i(\epsilon_0) + \frac{KM\sigma^2}{\epsilon_0^2}\nonumber, 
    \end{align}
    while $\epsilon_0$ be any number such that $0 < \epsilon_0 <  g_i - \epsilon$ for any arm $i\in[K]$.
\end{theorem}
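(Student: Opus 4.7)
The plan is to decompose $\stoppingtime = \sum_{i \in [K]} T_i(\stoppingtime)$ and control each $\bbE[T_i(\stoppingtime)]$ separately. The guiding observation is that, under the concentration event of Proposition~\ref{Prop:alphaApproxestimatorvalue}, Condition~2 can never fire for any arm in the no-good-arm case (since $\overline{g}_{t,i} \geq g_i > \epsilon$ whenever the confidence holds), so on that event the algorithm must terminate through $\CalA = \emptyset$, i.e., every arm must be deleted via Condition~1. This reduces the proof to upper bounding, for each $i \in [K]$, how many times arm $i$ can be pulled before its $\underline{g}_{t,i}$ becomes strictly positive.

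First I would introduce the good event
\[
\CalE = \{\forall i \in [K],\ \forall s \geq 1 :\ |\hat{g}_{s,i} - g_i| \leq \alpha(T_i(s), \delta)\},
\]
which has probability at least $1 - \delta$ by Proposition~\ref{Prop:alphaApproxestimatorvalue}. On $\CalE$, the inequality $\underline{g}_{t,i} \geq g_i - 2\alpha(T_i(t), \delta)$ lets me show that once $T_i(t)$ exceeds the threshold encoded in $t_i(\epsilon_0)$ (the $\epsilon_0$-slack absorbs the self-referential $\ln(\pi^2 K M T_i(t)^2 / (3\delta))$ factor inside $\alpha$, and the $g_i - \epsilon - \epsilon_0$ denominator provides a safe margin over the bare $g_i > \epsilon$ gap), we have $\underline{g}_{t,i} > 0$. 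Hence the next time arm $i$ is selected it is deleted from $\CalA$, giving $T_i(\stoppingtime) \leq \lceil t_i(\epsilon_0) \rceil$ on $\CalE$. For the complementary event $\CalE^c$, instead of a crude $\delta \cdot \stoppingtime$ bound, I would carry out a per-round, per-coordinate tail-probability argument: for any $t$ with $T_i(t) > t_i(\epsilon_0)$, the event $\{i_t = i\}$ forces one of the $M$ coordinate deviations $|\hat{\mu}^{(m)}_{t-1,i} - \mu_i^{(m)}|$ to exceed $\epsilon_0$, which by Proposition~\ref{PropHoeffdingSubgaussian} costs at most $\exp(-T_i(t)\epsilon_0^2 / (2\sigma^2))$. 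Summing the geometric-like series $\sum_{\tau > t_i(\epsilon_0)} \exp(-\tau \epsilon_0^2 / (2\sigma^2)) = O(\sigma^2/\epsilon_0^2)$ and accumulating across $K$ arms and $M$ coordinates yields exactly the additive correction $\frac{KM\sigma^2}{\epsilon_0^2}$. Linearity of expectation then gives the stated bound.

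The main obstacle I anticipate is making the $\CalE^c$ accounting tight enough to deliver the correction $\frac{KM\sigma^2}{\epsilon_0^2}$ rather than something weaker such as $\delta \cdot \stoppingtime$, because the stopping time is itself random and could, on the bad event, be arbitrarily large. The $\epsilon_0$ parameter is there precisely to create a gap between the "sufficient pulls to eliminate under concentration" threshold and the "pulls achievable without concentration" threshold, and these two regimes must be balanced carefully when choosing which per-round failure probability to union-bound. A secondary technical point is reconciling the $\alpha$-based confidence used in Condition~1 with the $\beta$-based width inside the selection rule $\tilde{g}_{t-1,i}$, so that the elimination event really activates near $t_i(\epsilon_0)$ rather than much later through a mismatch of confidence widths. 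Compared to Theorem~\ref{theoremUpperboundExpectationGoodArm}, the bookkeeping is lighter here because there is no distinguished $i^*$ and no inter-arm gap $g_i - g_{i^*}$ to track, which is why the correction collapses to the single term $\frac{KM\sigma^2}{\epsilon_0^2}$.
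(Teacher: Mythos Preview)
Your proposal is correct and ultimately coincides with the paper's proof: the paper also writes $\stoppingtime=\sum_i T_i(\stoppingtime)$, observes that the $n$-th pull of arm $i$ implies $\underline{g}_{n-1,i}\le 0$, and then bounds $\bbE\big[\sum_{n\ge1}\mathds{1}[\underline{g}_{n,i}\le 0]\big]$ by $t_i(\epsilon_0)+O(M\sigma^2/\epsilon_0^2)$ via Lemma~\ref{lemma3UpperBound} (which is exactly your ``per-round, per-coordinate tail'' computation using Lemma~\ref{lemma2UpperBound}). The only difference is that the paper skips the good-event/$\CalE$ split entirely; the unconditional per-pull tail argument you sketch for $\CalE^c$ is in fact the whole proof, so introducing $\CalE$ is a detour you can drop.
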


\textbf{Proof sketch of Theorem~\ref{theoremUpperboundExpectationNogoodArm}:}
If no good arm exists and the algorithm continues at round $t$, Condition $2$ is not satisfied, and thus the active set is not empty. This means some arm $i_t$ is chosen at round $t$ with $\hat{g}_{t, i_t} \leq \alpha(T_i(t), \delta)$. Bounding the number of such rounds over all arms leads to the result. The detailed proof of Theorem~\ref{theoremUpperboundExpectationNogoodArm} is postponed to Appendix~\ref{AppendixUpperBoundExpectationNogoodArm}. 

We present a more straightforward result based on the above theorems in the following corollary.
\begin{corollary}
    \label{corollaryExpectationUpperbound}
    Algorithm~\ref{alg:Multi thresholdUCB} achieves the following:
    \begin{enumerate}
        \item If there exists a good arm and letting $i^{*} = \arg\min\limits_{i\in[K]} g_i$,  
        \begin{align}
        \limsup\limits_{\delta\rightarrow 0} \frac{\bbE[\stoppingtime]}{\ln{(1/\delta)}}&\leq\frac{4\sigma^2 }{{\left(\epsilon - g_{i^*} -\epsilon_0\right)}^2}\nonumber,
        \end{align}
        where $\epsilon_0$ be any number such that $0 < \epsilon_0 <  \epsilon - g_i$ for any good arm $i$.
    \item If there is no $\epsilon$-good arm, 
    \begin{align}
        \limsup\limits_{\delta\rightarrow 0}\frac{\bbE[\stoppingtime]}{\ln{(1/\delta)}} &\leq \sum\limits_{i\in[K]} \frac{4{\sigma}^2}{{\left(\epsilon - g_{i} - \epsilon_0\right)}^2}\nonumber,
    \end{align}
    where $\epsilon_0$ be any number such that $0 < \epsilon_0 <  g_i - \epsilon$ with any arm $i\in[K]$.
    \end{enumerate}
   % If a good arm $i$ exists and
  % \\
  %       \bbE[T_1] &= \mathcal{O}\left( \frac{K\sigma^2\ln{\left( \frac{\sigma^2KM}{\delta{\epsilon_i}^2}\ln{\left(\sigma^2/{\epsilon_i}^2\right)} \right)} }{{\epsilon_i}^2 } \right)\quad(\text{let }\epsilon_i = {\left(\epsilon - g_i - \epsilon_0\right)}).\nonumber
  %Conjecture the same result as the good one.
 % and for any $\epsilon_0\leq\epsilon$
\end{corollary}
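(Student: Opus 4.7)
The plan is to derive the corollary directly from Theorems~\ref{theoremUpperboundExpectationGoodArm} and \ref{theoremUpperboundExpectationNogoodArm} by dividing the established bounds by $\ln(1/\delta)$ and taking $\delta\to 0$. The central observation is that $t_i(\epsilon_0)$ depends on $\delta$ only through the inner logarithm in Definition~\ref{definitiontiepsilon0}, and that logarithm expands as $\ln(1/\delta) + O(1)$ in $\delta$. Writing $c_i = \epsilon - g_i - \epsilon_0$ for a good arm (respectively $c_i = g_i - \epsilon - \epsilon_0$ for a non-$\epsilon$-good arm), one sees that
\[
t_i(\epsilon_0) \;=\; \frac{4\sigma^2}{c_i^2}\left[\ln(1/\delta) + \ln\!\left(\tfrac{8\sqrt{3}\pi\sigma^2 KM}{3c_i^2}\ln\tfrac{4\sqrt{3}\pi\sigma^2}{3c_i^2}\right)\right],
\]
so $t_i(\epsilon_0)/\ln(1/\delta) \to 4\sigma^2/c_i^2$ as $\delta \to 0$. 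In particular, $t_i(\epsilon_0) = \Theta(\ln(1/\delta))$ and $\ln t_i(\epsilon_0) = O(\ln\ln(1/\delta))$.

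For the first statement of the corollary, I would split the bound of Theorem~\ref{theoremUpperboundExpectationGoodArm} into four pieces: (i) the leading term $t_{i^*}(\epsilon_0)$; (ii) the sum $\sum_{i\neq i^*}\frac{8\sigma^2\ln(KM\max_i\lfloor t_i(\epsilon_0)\rfloor)}{(g_i - g_{i^*} + \epsilon_0)^2}$; (iii) the constant $\frac{2(K+1)M\sigma^2}{\epsilon_0^2}$; and (iv) the constant $\frac{K^3 M}{2\epsilon_0^2}e^{4\epsilon_0^2}$. Pieces (iii) and (iv) are independent of $\delta$, so they vanish after division by $\ln(1/\delta)$. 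For piece (ii), the observation $\ln(KM\max_i\lfloor t_i(\epsilon_0)\rfloor) = O(\ln\ln(1/\delta))$ shows that it also vanishes in the limit. Only piece (i) survives, and by the limit computation above it contributes $\frac{4\sigma^2}{(\epsilon - g_{i^*} - \epsilon_0)^2}$, which is the claimed upper bound.

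For the second statement, the same strategy applied to Theorem~\ref{theoremUpperboundExpectationNogoodArm} suffices. The additive $\frac{KM\sigma^2}{\epsilon_0^2}$ is a $\delta$-independent constant and vanishes under division by $\ln(1/\delta)$. Each $t_i(\epsilon_0)/\ln(1/\delta)$ converges to $\frac{4\sigma^2}{(g_i - \epsilon - \epsilon_0)^2}$, which equals $\frac{4\sigma^2}{(\epsilon - g_i - \epsilon_0)^2}$ since the square is insensitive to sign. Summing over $i\in[K]$ and applying $\limsup$ yields the stated bound.

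There is no genuine obstacle here: the work is entirely in the asymptotic expansion of the logarithmic factor inside $t_i(\epsilon_0)$ and in confirming that every other term in the two theorem bounds is either $O(1)$ or $O(\ln\ln(1/\delta))$, both of which are negligible compared to $\ln(1/\delta)$. The only care needed is in tracking signs so that the denominators $(\epsilon - g_i - \epsilon_0)^2$ and $(g_i - \epsilon - \epsilon_0)^2$ are correctly identified under the respective assumptions $\epsilon_0 < \epsilon - g_i$ and $\epsilon_0 < g_i - \epsilon$; since both are squared, they coincide as real numbers under the appropriate sign convention.
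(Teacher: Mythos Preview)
Your proposal is correct and matches the approach implicit in the paper: the corollary is stated there as an immediate consequence of Theorems~\ref{theoremUpperboundExpectationGoodArm} and~\ref{theoremUpperboundExpectationNogoodArm} with no separate proof, and your expansion of $t_i(\epsilon_0)=\tfrac{4\sigma^2}{c_i^2}\bigl[\ln(1/\delta)+O(1)\bigr]$ together with the observation that every remaining term is $O(1)$ or $O(\ln\ln(1/\delta))$ is exactly the intended derivation.
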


%%% the gap between lower bound and upper bound is still not clear. Not wrong, but not clear 
\section{Lower Bound}
In the following part, we give a lower bound for the expectation of $T_{\text{stop}}$.

\begin{definition}[Binary relative entropy]
\label{defiBinaryRelativeentropy}
    For $x,y\in \mathbb{R}_+$, the binary relative entropy is defined as 
    \begin{align*}
        d(x,y)= x\ln(x/y) + (1-x)\ln{(1-x)/(1-y)})
    \end{align*}
    with conventions $d(0,0) = d(1,1) = 0$. 
\end{definition}
%d\left({g_i, \max\limits_{m\in[M]}\{\xi_m - \epsilon -\epsilon_0\}}\right)
% \bbE[T] \geq \frac{1}{d\left({g_{i^*}, \max\limits_{m\in[M]}\{\xi_m\}}\right)}\ln{\frac{1}{2\delta}} - \frac{\delta}{d\left({g_{i^*}, \max\limits_{m\in[M]}\{\xi_m\}}\right)}.
\begin{theorem}
    \label{ExpectationLowerbound}
    There exists a bandit model s.t. the following holds for any $(\delta, 0)$-successful algorithm with stopping time $T_{\text{stop}}$:
    \begin{enumerate}
        \item If good arm $i$ exists, 
    \begin{align*}
        \bbE[T_{\text{stop}}] \geq \frac{1}{\max\limits_{i\in[K]_{\text{good}}}\min\limits_{m\in[M]} d\left({\mu^{(m)}_{i}, \xi_m}\right)}\ln{\frac{1}{2\delta}} - \frac{\delta}{\max\limits_{i\in[K]_{\text{good}}}\min\limits_{m\in[M]} d\left({\mu^{(m)}_{i}, \xi_m}\right)},
    \end{align*}
    where $[K]_{\text{good}} \subseteq [K]$ is the set of all good arms.
    \item  When no good arm exists, 
     \begin{align*}
        \bbE[T_{\text{stop}}] \geq \frac{1}{\max\limits_{i\in[K]}\min\limits_{m\in[M]} d\left({\mu^{(m)}_{i}, \xi_m}\right)}\ln{\frac{1}{2\delta}} - \frac{\delta}{\max\limits_{i\in[K]}\min\limits_{m\in[M]} d\left({\mu^{(m)}_{i}, \xi_m}\right)}.
    \end{align*}
    \end{enumerate}
\end{theorem}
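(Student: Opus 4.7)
The plan is the standard change-of-measure (a.k.a.\ transportation) technique for bandit lower bounds in the style of Kaufmann--Capp\'e--Garivier: for each case I will build a companion bandit $\nu'$ that is close in KL to the given $\nu$ but on which any $(\delta,0)$-successful algorithm must make a different decision, and conclude via the information inequality
\[
\sum_{i=1}^{K}\sum_{m=1}^{M}\bbE_\nu[T_i(\stoppingtime)]\,\mathrm{KL}\!\bigl(\nu_i^{(m)},\nu_i^{\prime(m)}\bigr)\;\ge\;d\!\bigl(\bbP_\nu(\CalE),\bbP_{\nu'}(\CalE)\bigr)
\]
that $\bbE_\nu[\stoppingtime]$ must be large, where $d$ is the binary relative entropy of Definition~\ref{defiBinaryRelativeentropy}.

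For Case~1, take the given bandit $\nu$ and, for every good arm $i$, let $m(i)\in\arg\min_m d(\mu_i^{(m)},\xi_m)$. Build $\nu'$ by replacing the $m(i)$-th marginal of arm $i$ by a Bernoulli with mean $\xi_{m(i)}-\eta$ for small $\eta>0$, leaving every other marginal unchanged, so that \emph{no} arm of $\nu'$ is good. With $\CalE=\{\hat a\neq\bot\}$, $(\delta,0)$-successfulness gives $\bbP_\nu(\CalE)\ge 1-\delta$ and $\bbP_{\nu'}(\CalE)\le\delta$. Only the $(i,m(i))$-entries contribute to the left-hand side above, so letting $\eta\downarrow 0$ produces
\[
\sum_{i\in\goodset}\bbE_\nu[T_i(\stoppingtime)]\cdot\min_m d(\mu_i^{(m)},\xi_m)\;\ge\;d(\delta,1-\delta).
\]
Bounding each coefficient above by $\max_{i\in\goodset}\min_m d(\mu_i^{(m)},\xi_m)$, using $\sum_{i\in\goodset}\bbE_\nu[T_i(\stoppingtime)]\le\bbE_\nu[\stoppingtime]$, and invoking the elementary estimate $d(\delta,1-\delta)\ge\ln(1/(2\delta))-\delta$ (which I will verify directly from $(1-2\delta)\ln((1-\delta)/\delta)$ for $\delta\in(0,1/2)$) gives the claimed inequality.

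Case~2 is mirror-image. Fix $\nu$ with no good arm, pick $i^*\in\arg\max_i\min_m d(\mu_i^{(m)},\xi_m)$ and $m^*\in\arg\min_m d(\mu_{i^*}^{(m)},\xi_m)$, and choose $\nu$ so that $m^*$ is the unique coordinate of $i^*$ lying below its threshold; build $\nu'$ by nudging the $m^*$-th marginal of $i^*$ above $\xi_{m^*}$ (by $\eta$), making $i^*$ the only good arm in $\nu'$. With $\CalE=\{\hat a=\bot\}$, successfulness gives $\bbP_\nu(\CalE)\ge1-\delta$, $\bbP_{\nu'}(\CalE)\le\delta$, and the inequality reduces in the limit to $\bbE_\nu[T_{i^*}(\stoppingtime)]\cdot d(\mu_{i^*}^{(m^*)},\xi_{m^*})\ge d(\delta,1-\delta)$. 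Then $\bbE_\nu[\stoppingtime]\ge\bbE_\nu[T_{i^*}(\stoppingtime)]$ together with the same elementary estimate closes the case.

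The main obstacle I anticipate is engineering $\nu'$ so that flipping a \emph{single} marginal per affected arm already forces the desired structural change (``no good arm'' in Case~1, ``exactly one good arm'' in Case~2). This requires $\nu$ itself to be designed so that, for each relevant arm, the $\arg\min_m$ coordinate is the pivotal one — sitting just above its threshold in Case~1 and being the unique below-threshold coordinate in Case~2. Because the statement only asserts the existence of \emph{some} bandit model, this tailoring is legitimate. A secondary check is that the perturbed Bernoulli marginals stay inside the $(\mu,\sigma)$-sub-Gaussian family of Section~\ref{sectionPreliminary}, which is immediate since $[0,1]$-valued random variables are $(\mu,1/2)$-sub-Gaussian.
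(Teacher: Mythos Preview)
Your proposal is correct and rests on the same information-theoretic backbone as the paper (the Kaufmann--Capp\'e--Garivier transportation inequality, which appears as Proposition~\ref{PropositionLowerbound1}), but in Case~1 you route the argument differently. The paper perturbs a \emph{single} good arm $i^*$ into a bad one, works with the event $\{\hat a=i^*\}$, and then passes through a small linear program (solved by duality) to convert the resulting per-arm inequality into a lower bound on $\bbE[\stoppingtime]$. You instead perturb \emph{all} good arms simultaneously so that the alternate model has no good arm at all, and take $\CalE=\{\hat a\neq\bot\}$; this produces the weighted sum $\sum_{i\in\goodset}\bbE_\nu[T_i(\stoppingtime)]\cdot\min_m d(\mu_i^{(m)},\xi_m)$ on the left, which you collapse directly to $\bbE_\nu[\stoppingtime]\cdot\max_{i\in\goodset}\min_m d(\cdot,\cdot)$ with no optimization step. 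Your route is more elementary and avoids the LP entirely; the paper's route in principle yields per-arm lower bounds $\bbE[T_{i}(\stoppingtime)]$, though it does not exploit that finer information in the final statement. For Case~2 the two arguments essentially coincide (the paper only sketches it as analogous), and your reading of ``there exists a bandit model'' to justify tailoring $\nu$ so that a single coordinate flip suffices is consistent with the paper's own construction. Your elementary estimate $d(\delta,1-\delta)\ge\ln(1/(2\delta))-\delta$ is valid on all of $(0,1)$ and delivers exactly the constant the theorem states.
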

Detailed proof is in Appendix~\ref{AppendixLowerBoundExpectation}. 

\paragraph{Discussion on the tightness of the bounds}By Pinsker's inequality and~\cite{SasonIEEEITW2015FALL}, we obtain 
% \begin{align*}
% \max\limits_{i\in[K]_{\text{good}}}\min\limits_{m\in[M]} d\left({\mu^{(m)}_{i}, \xi_m}\right), 
%  &\geq \max\limits_{i\in[K]_{\text{good}}}\min\limits_{m\in[M]} 2{\left({{\mu^{(m)}_{i} - \xi_m}}\right)}^2 = 2g_{i^*}^2
% \end{align*}
% and 
\begin{align*}
    \frac{2g_{i^*}^2}{a_{\bmxi}} \geq \max\limits_{i\in[K]_{\text{good}}}\min\limits_{m\in[M]} d\left({\mu^{(m)}_{i}, \xi_m}\right)  \geq 2g_{i^*}^2, 
\end{align*}
where $a_{\bmxi} := \min\limits_{m\in[M]}\min\{ \xi_m, 1-\xi_m \}$
Then we have
\[
\lim\limits_{\delta\rightarrow 0} \frac{\bbE[T_{\text{stop}}]}{\ln{1/\delta}} \geq \frac{2g_{i^*}^2}{a_{\bmxi}}. 
\]
When every element of $\bmxi$ is not close to $0$ or $1$, $a_{\bmxi}$ is large enough to be considered as a constant.
On the other hand, when feedback follows Bernoulli distributions, which is $\frac{1}{4}$-sub-Gaussian, $\epsilon = 0$ and a good arm exists, we have the following result for the upper bound
\begin{align*}
\lim\limits_{\delta\rightarrow 0} \frac{\bbE[T_{\text{stop}}]}{\ln{1/\delta}}& \leq \frac{1}{g_{i^*}^2}.
\end{align*}
Thus, we see that our upper bound for $(\delta,0)-$successful algorithm almost matches the lower bound when a good arm exists.

\section{Experiments}
\label{section:experiments}
In this section, we compare our proposed MultiTUCB with three benchmarks, which are Anytime
Parameter-free Thresholding algorithm for MTGAI~(MultiAPT)~\cite{locatelli2016optimal,jourdan2023varepsilonfixconfidence}, 
% the Adaptive Pareto Exploration for MTGAI~(MultiAPE)~\cite{kone2023adaptive}
Lower and Upper
Confidence Bounds algorithm for MTGAI~(MultiLUCB)~\cite{kalyanakrishnan2012pac}
, Hybrid algorithm for the Dilemma of Confidence for MTGAI~(MultiHDoC)~\cite{kano2019goodarm} in terms of stopping time. 
All the algorithms are presented in Algorithm~\ref{alg:APTandHDoC}. As for MultiLUCB, we adopt a lower confidence bound for $g_i$, i.e.,  $\hat{g}_{i} - \alpha(T_i(t), \delta)$ as the selection rule. 
Note that Algorithm~\ref{alg:APTandHDoC} is guaranteed to solve the MTGAI problem for any benchmark algorithms and ours when it terminates by Theorem~\ref{theoremDeltaPACmultiTUCB}. However, unlike ours, the benchmark algorithms are designed for the single thresholding problem only and they do not have sample complexity bounds for the multi-thresholding one. All the codes can be obtained at~\href{https://github.com/2015211217/MultiThresholdBandit.git}{Github}.

\begin{algorithm}[!ht]

    % Pull each arm once;\\
    % Compute $\hat \bmg_{K}$ by Algorithm~\ref{algo:ApproxEstimator};\\
    % \For{$t\in[K + 1, \ldots, ]$}
    % {Select $i_t \in \arg\min\limits_{i\in \CalA} \tilde{g}_{t-1, i}$ with $\tilde{g}_{t - 1,i} = \hat g_{t-1,i} - \sqrt{\frac{2\sigma^2\ln{\left(KM{T_{i}(t-1)}\right)}}{T_{i}(t-1)}} $;\\
    % Receive feedback $\bmz_{t,i_t}$;\\
    % Update $T_i(t)$;\\
    % Compute $\hat \bmg_{t}$ by Algorithm~\ref{algo:ApproxEstimator};\\
    % \If {$\underline{g}_{t, i_t} > 0$}
    % {Delete $i_t$ from $\CalA$.}
    % //\textbf{Condition 1}\\
    % \If{$\overline{g}_{t, i_t} \leq \epsilon$}{Output $\hat a = i_t$, Stop.}
    % //\textbf{Condition 2}\\
    % \If{$\CalA = \emptyset$}
    % {Output $\hat a = \bot$, Stop.}}
    \caption{MultiHDoC/MultiLUCB/MultiAPT-G}
    \label{alg:APTandHDoC}
    \textbf{Input}: Arm set $[K]$, thresholds $\{\xi_1,\ldots,\xi_M\}$, confidence parameter $\delta$, accuracy parameter $\epsilon$;\\
    \textbf{Init}: Let $\CalA_{\text{HDoC}} = [K]/\CalA_{\text{LUCB}} = [K]/\CalA_{\text{APT}} = [K]$; \\
    Pull each arm once;\\
     Compute $\hat \bmg_{K}$ by Algorithm~\ref{algo:ApproxEstimator};\\
    \For{$t\in[K + 1, \ldots, ]$}
    {\textbf{MultiHDoC}: Pull arm
    $i_t = \arg\min\limits_{i\in \CalA_{\text{HDoC}}} \hat{g}_{t, i} - \sqrt{\frac{\ln{t}}{2T_i(t)}}$.\\
     \textbf{MultiLUCB}: Pull arm 
        $i_t = \arg\min\limits_{i\in\CalA_{\text{LUCB}}} \hat{g}_{t, i} - \alpha(T_i(t), \delta)$.\\
    \textbf{MultiAPT}: Pull arm $i_t = \arg\min\limits_{i\in\CalA_{\text{APT}}} \sqrt{T_i(t)}\left| \hat{g}_{t,i} - \epsilon\right|.$\\
    Receive feedback $\bmz_{t,i_t}$;\\
    Update $T_i(t)$;\\
    Compute $\hat \bmg_{t}$ by Algorithm~\ref{algo:ApproxEstimator};\\
    \If {$\underline{g}_{t, i_t} > 0$}
    {Delete $i_t$ from $\CalA_{\text{HDoC}}/\CalA_{\text{LUCB}}/\CalA_{\text{APT}}$.}
    \If{$\overline{g}_{t,i_t} \leq \epsilon$}{
        \text{ }\\
        Output $\hat a = i_t$ as a good arm.\\
        Stop.\\
    }
    \If{$\CalA_{\text{HDoC}}/\CalA_{\text{LUCB}}/\CalA_{\text{APT}}= \emptyset$}
    {Output $\hat a = \bot$, Stop.}
    }
\end{algorithm}
% $i_t = \arg\min\limits_{i\in\CalA_{\text{LUCB}}} \hat{g}_{t, i} - \sqrt{\frac{{\sigma}^2\ln{\left(4KM{T_{i}(t)}^2/\delta\right)}}{T_{i}}}$.\\
\subsection{Synthetic Data}
\label{ExperimentsSyntheticData}
We consider the following $4$ different environments with $K = 10, M = 4$. 
\begin{enumerate}
    \item Gaussian distribution with means $\mu^{(1)}_{1:3} = 0.1$, $\mu^{(1)}_{4} = 0.35$, $\mu^{(1)}_{5} = 0.45$, $\mu^{(1)}_{6} = 0.55$, $\mu^{(1)}_{7} = 0.65$ and $\mu^{(1)}_{8:10} = 0.2$.
    \item Gaussian distributions with means  $\mu^{(2)}_{1:4} = 0.4 - 0.2^{1:4}, 
    \mu^{(2)}_{5} = 0.45, \mu^{(2)}_{6} = 0.55, \\
    \mu^{(2)}_{7:10} = 0.6 + {0.1}^{5-(1:4)}$. 
    \item Gaussian distributions with increasing means $\mu^{(3)}_{1:4} = (1 : 4) \cdot 0.05$, $\mu^{(3)}_{5}= 0.45$, $\mu^{(3)}_{6}= 0.55$ and $\mu^{(3)}_{7:10}= 0.65+(0 : 3)\cdot 0.05$. 
    \item Gaussian distributions with grouped means $\mu^{(4)}_{1:4} = 0.4$, $\mu^{(4)}_{5:8} = 0.5$, $\mu^{(6)}_{9:10} = 0.6$.
\end{enumerate}
We set thresholds $\{\xi_1,\xi_2,\xi_3,\xi_4\} = \{0.6, 0.5, 0.6, 0.5\}$ and $\sigma$ is chosen as $1.2$ while all distributions share the same variance for convenience. All the baselines in Algorithm~\ref{alg:APTandHDoC} share the same stopping criteria as Algorithm~\ref{alg:Multi thresholdUCB}. 
% Since the results of the trivial strategy~\ref{alg:Baseline} are straightforward and predictable, we omit it from the experiments. 
We set a time limit of $200,000$ arm-pulls, and passing the limitation is counted as an error. All results are averaged over $5000$ repetitions. 
% The error rate is defined as the number of repetitions output by a not good arm plus the number of repetitions exceeding $200,000$ arm-pulls divided by $5000$.
% The error rate is the ratio of repetitions where the algorithm fails to output a good arm, and the symbol "–" indicates that the algorithm's error rate is higher than $50\%$. 

\begin{figure}[!htbp]
\centering
\includegraphics[width=0.6\linewidth]{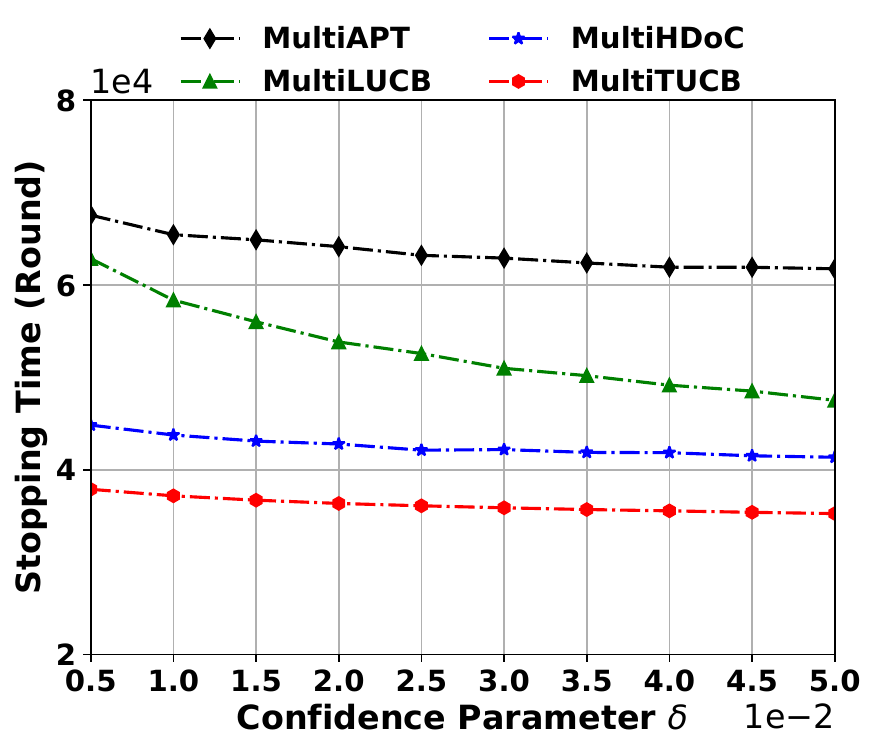}
\caption{Stopping times w.r.t. $\delta$ with synthetic data.}
\label{fig:SyntheticDelta}
\end{figure}

\begin{figure}[!htbp]
    \centering
    \includegraphics[width=0.6\linewidth]{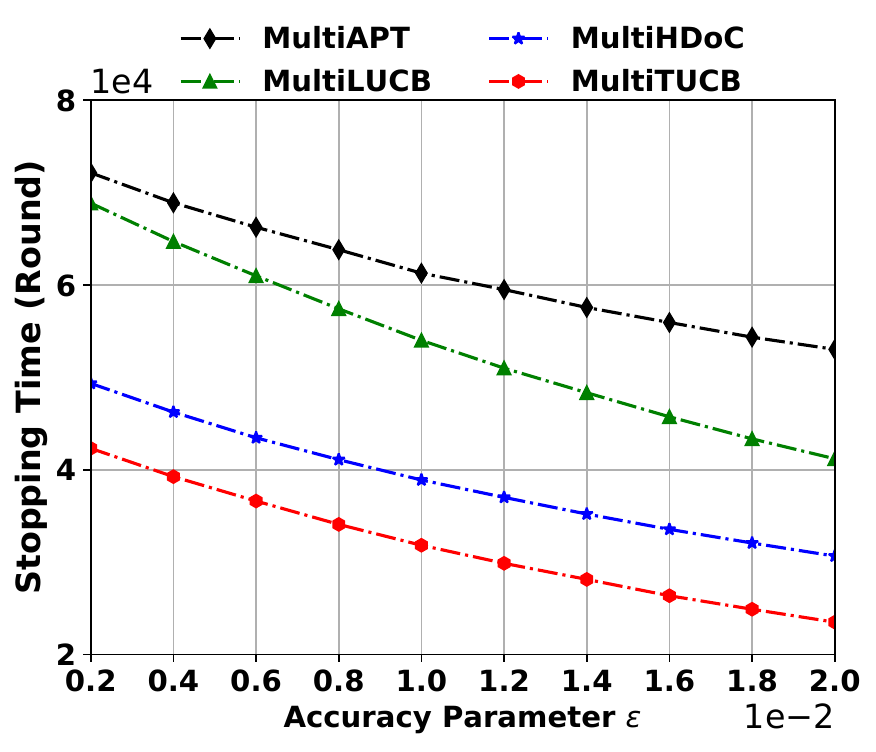}
    \caption{Stopping times w.r.t. $\epsilon$ with synthetic data.}
    \label{fig:SyntheticEpsilon}
\end{figure}

Figure~\ref{fig:SyntheticDelta}. exhibits the average stopping time against different values of $\delta$ from $0.005$ to $0.05$ for the compared schemes at $\epsilon = 0.005$. MultiTUCB outperforms other compared schemes since it is theoretically guaranteed. As $\delta$ increases, the difficulty of exiting the algorithm decreases, which may result in a shorter stopping time. 
Additionally, the performance curve of MultiLUCB is rugged, which indicates that it is more sensitive to the change of $\delta$ and needs more rounds to be stabilized. The others are not very sensitive to the change of $\delta$, but we observe that the number of required rounds decreases as the confidence parameter increases.
% we can see the needing rounds decreasing with a larger confidence parameter.

Figure~\ref{fig:SyntheticEpsilon} previews the stopping rounds of the compared schemes with the increasing $\epsilon = \{0.002, 0.004, \ldots, 0.02\}$ and $\delta = 0.005$. 
% Here, we notice that MultiAPT performs even worse for a large $\epsilon$ due to the structure of its selection rule for each iteration, which makes $\epsilon$ has a side effect on its performance. 
Here, we notice that MultiHDoC performs close to MultiTUCB since it's designed for finding a good arm set with a single threshold. It also indicates our modifications to the baseline algorithm are valid.
% On the other hand, the curve of MultiHDoC is rugged because of the higher error rate. 
The stopping time of all algorithms decreases along with the relaxed stopping condition. Our proposed algorithm outperforms others not only in terms of straightforward stopping time but also in stability and less turbulence. 
%All algorithms achieve an error rate of $0.00\%$ with any variables in this section.
We generalize the detailed data in Table~\ref{tab:experimentStandardDeviDeltaSynthetic} and
~\ref{tab:experimentStandardDeviEpsilonSynthetic} for comparison.

\begin{table}[htbp]
    \centering
    \caption{Standard Deviation of Stopping Time w.r.t. $\delta$ with Synthetic Data}
    \label{tab:experimentStandardDeviDeltaSynthetic}
    \begin{tabular}{c|c|c|c|c}
    \toprule
    $\delta$ & MultiAPT & MultiHDoC & MultiLUCB & \textbf{MultiTUCB} \\
    \midrule
    $0.005$ & $9408.58$ & $14589.53$ & $9640.70 $ & \bm{$7465.55 $} \\
    $0.010$ & $ 9197.93 $ & $ 14374.44 $ & $9464.46$ & \bm{$7393.21 $} \\
    $0.015$ & $9359.25 $ & $14167.67 $ & $ 9371.94 $ & \bm{$7354.64$} \\
    $0.020$ & $8861.37 $ & $13947.92 $ & $8785.98 $ & \bm{$ 7382.50 $} \\
    $0.025$ & $9154.26 $ & $13569.75 $ & $8885.88 $ & \bm{$ 7313.47 $} \\
    $0.030$ & $9020.14 $ & $  13688.36 $ & $8389.23 $ & \bm{$7302.48$} \\
    $0.035$ & $8967.06 $ & $13513.67 $ & $8278.27$ & \bm{$ 7233.80$} \\
    $0.040$ & $8909.38 $ & $13612.20$ & $ 8022.23 $ & \bm{$ 7225.78 $} \\
    $0.045$ & $9161.23 $ & $ 13393.54 $ & $7816.71 $ & \bm{$7237.46$} \\
    $0.050$ & $8914.17$ & $13429.67$ & $8034.80$ & \bm{$ 7213.29$} \\
    \bottomrule
    \end{tabular}
\end{table}
\begin{table}[htbp]
    \centering
    \caption{Standard Deviation of Stopping Time w.r.t. $\epsilon$ with Synthetic Data}
    \label{tab:experimentStandardDeviEpsilonSynthetic}
    \begin{tabular}{c|c|c|c|c}
    \toprule
    $\epsilon$ & MultiAPT & MultiHDoC & MultiLUCB & \textbf{MultiTUCB} \\
    \midrule
    $0.002$ & $10213.43 $ & $15203.78 $ & $10236.82$ & \bm{$8116.02$} \\
    $0.004$ & $ 9704.71 $  & $14821.15$ & $9840.27$ & \bm{$ 7615.36 $} \\
    $0.006$ & $ 9035.50 $  & $ 14501.22 $ & $ 9520.23  $  & \bm{$7284.30$} \\
    $0.008$ & $8803.26 $  & $14413.35 $ & $9018.05$  & \bm{$ 6953.02 $} \\
    $0.010$ & $8610.67 $  & $14300.52 $ & $ 8436.28  $  & \bm{$6672.03 $} \\
    $0.012$ & $ 8188.97$  & $14164.40 $ & $7995.11  $  & \bm{$6369.55$} \\
    $0.014$ & $ 8227.20$  & $14083.09$ & $7582.54 $  & \bm{$ 6099.59  $} \\
    $0.016$ & $8126.56$  & $ 14008.76 $ & $7144.09 $  & \bm{$5886.65 $} \\
    $0.018$ & $8042.25$  & $13878.92 $ & $6765.63$  & \bm{$5679.36 $} \\
    $0.020$ & $7956.99$  & $13810.01$ & $  6422.68$  & \bm{$5404.42$} \\
    \bottomrule
    \end{tabular}
\end{table}

\subsection{Dose Confirmation for Cocktail Therapy}
\label{subsectionCocktailTherapy}
Cocktail therapy refers to a treatment approach that combines multiple drugs or therapeutic agents to enhance effectiveness and achieve synergistic benefits. By targeting different pathways simultaneously, cocktail therapy can improve treatment outcomes, reduce drug resistance, and minimize side effects compared to single-drug treatments. Our work provides insights into confirming the dose of each medicine in cocktail therapy. Current research shows that overweight and type 2 diabetes are related~\cite{ruze2023obesity}, and we give insight into determining the dose of a possible cocktail therapy on two medicines for a better treatment. Here, we regularize all data into the range $(0,1]$ for convenience, and the larger means indicate the better curative effect. 
\begin{enumerate}
\label{enumerateMedical}
    \item  (Dose finding of LY3437943 compared with Dulaglutide~($1.5$mg) on glycated haemoglo-bin change w.r.t. placebo~\cite{ShwetaLY3437943Medical1}): Five Gaussian distributions with mean $\mu_1^{(1)}$ = 0.36, $\mu_2^{(1)} = 0.59$, $\mu_3^{(1)} = 0.85$, $\mu_4^{(1)}= 0.95$, $\mu_{5}^{(1)} = 0.79$,  corresponds to dose with unit mg as $\{1.5, 3, 3/6, 3/6/9/12\}$~($3/6$ means prescribe $3$mg for the first half of treatments and $6$mg for the next half), and threshold $\xi_1 = 0.48, \sigma = 1.0$.
    \item (Dose finding of cagrilintide compared with Liraglutide~($3.0$mg) on bodyweight change w.r.t. placebo~\cite{LAU20212160Medical2}): Five Gaussian distributions with mean $\mu_1^{(2)} = 0.375$, $\mu_2^{(2)} = 0.475$, $\mu_3^{(2)} = 0.7625$, $\mu_4^{(2)} = 0.8375$, $\mu_5^{(2)} = 0.975$, corresponds to dose with unit mg as $\{0.3,0.6,1.2,2.4,4.5\}$ with $\sigma =1.0$ and threshold $\xi_2 = 0.75$.
\end{enumerate}

% \begin{figure}[htbp]
% \centering
% \begin{minipage}[t]{0.48\textwidth}
% \centering
% \includegraphics[width=5.8cm]{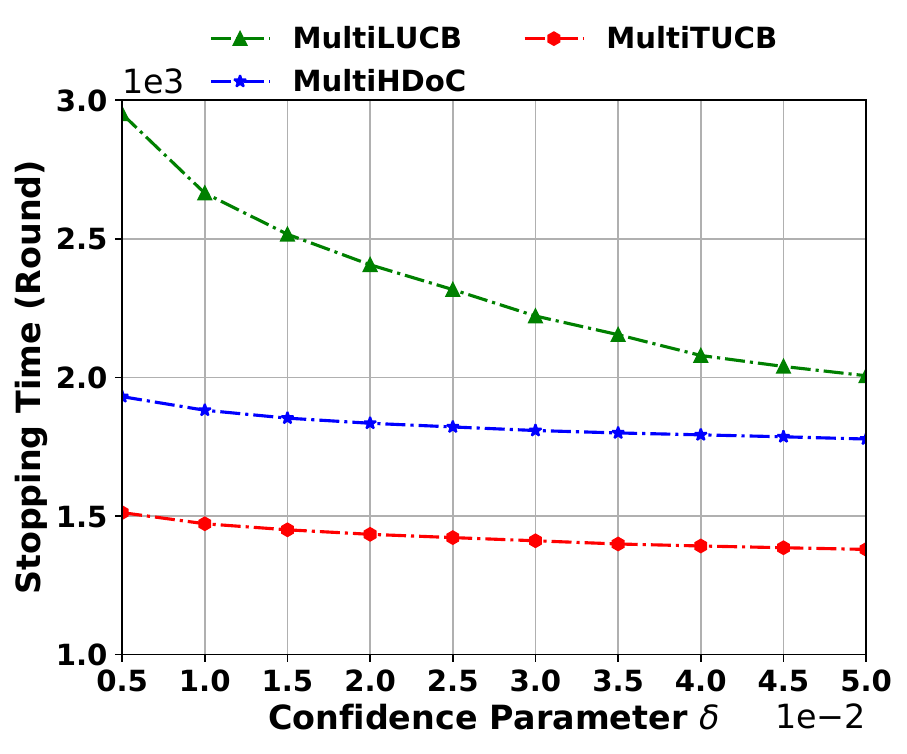}
% \caption{Stopping Time w.r.t. $\delta$ with Medical Data}
% \label{fig:DeltaMedicine}
% \end{minipage}
% \begin{minipage}[t]{0.48\textwidth}
% \centering
% \includegraphics[width=5.8cm]{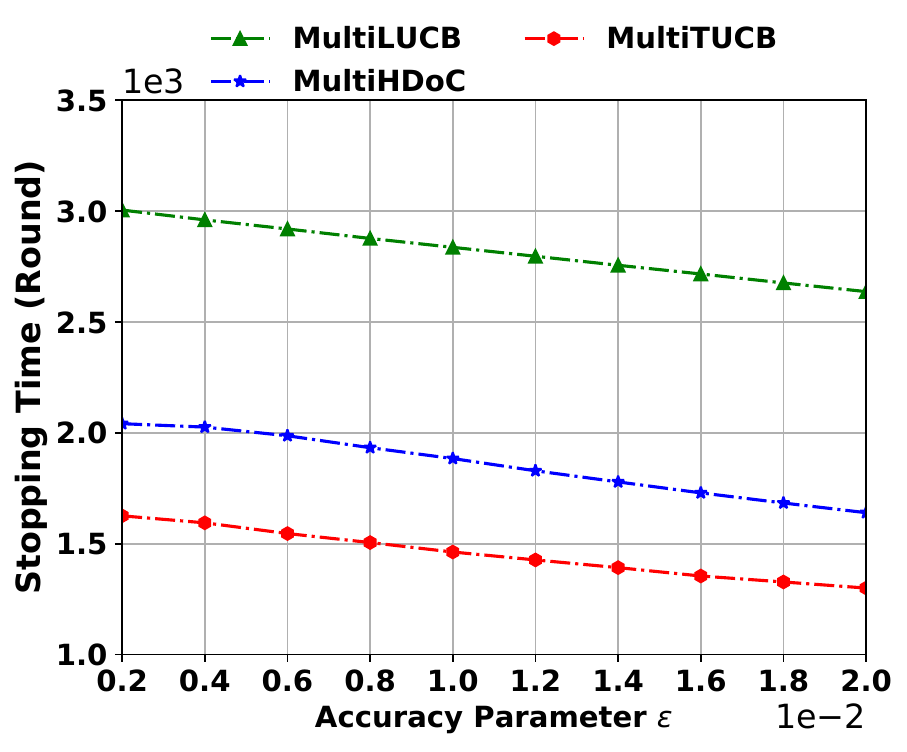}
% \caption{Stopping Time w.r.t. $\epsilon$ with Medical Data}
% \label{fig:EpsilonMedicine}
% \end{minipage}
% \end{figure}
\begin{figure}[htbp]
\centering
\includegraphics[width=0.6\linewidth]{}
\caption{Stopping times w.r.t. $\delta$ with medical data.}
\label{fig:DeltaMedicine}
\end{figure}

\begin{figure}[htbp]
\centering
\includegraphics[width=0.6\linewidth]{}
\caption{Stopping times w.r.t. $\epsilon$ with medical data.}
\label{fig:EpsilonMedicine}
\end{figure}

We combine LY3437943~\cite{ShwetaLY3437943Medical1} and cagrilintide~\cite{LAU20212160Medical2} as shown above and put the results in Figure~\ref{fig:DeltaMedicine} and Figure~\ref{fig:EpsilonMedicine}. The algorithm MultiAPT has a large stopping time compared with others in this setting, and we omit it from the figures for clarity. The behavior of MultiHDoC, MultiLUCB, and MultiTUCB is similar to the one in Section~\ref{ExperimentsSyntheticData}.

Other complementary results are listed in Appendix~\ref{AppendixExperiments}.

\section{Conclusion}
This work introduced the multi-thresholding good arm identification framework in a stochastic bandit setting, where an arm is considered good if all components of its expected reward vector exceed given thresholds. We proposed the MultiTUCB algorithm, which achieves an upper bound on sample complexity scaling as $O(\ln{M})$ w.r.t. $M > 1$ in the dominant term. Meanwhile, we derive corresponding lower bounds, and we prove that it's an almost match when a good arm exists and $\epsilon = 0$. Experimental results demonstrated that MultiTUCB outperforms three baseline methods on synthetic and real-world datasets. 

As a direction for future work, it would be valuable to derive a better bound for the cases when no good arm exists or $\epsilon > 0$. Additionally, exploring more challenging settings, such as high-dimensional feedback, is another promising direction.
% As a direction for future work, it would be valuable to develop adaptive algorithms that can dynamically adjust thresholds or confidence levels based on observed feedback, thereby improving sample efficiency and robustness in non-stationary or heterogeneous environments.
%As for future work, one promising direction is conducting a more promising algorithm and closing the gap between the upper and lower bounds. 
% and Future Work
% Finding a match upper/lower bound under high-probability target will be a promising direction. 

\bibliographystyle{unsrt}
\bibliography{reference}

\newpage
\appendix 
\section{Proof of Other Theoretical Results}
\subsection{Proof of Proposition~\ref{Prop:alphaApproxestimatorvalue}}
\label{AppendixalphaApproxestimatorvalue}
\begin{proof}
    For any $\delta\in(0,1)$, we have
    \begin{align*}
    % \begin{equation*}
    %     \begin{split}
        &\bbP\left(\exists i\in[K], \exists s \in[t],  |\hat{g}_{s,i} - g_i| > \alpha(T_i(s), \delta) \right)\\
        &\quad\leq \bbP\left(\exists i\in[K], \exists s \in[t], \left\| \hat{\bmmu}_{s,i} -{\bmmu}_{i} \right\|_{\infty}> \alpha(T_i(s), \delta) \right)\quad\text{ (Proposition~\ref{prop:muLipschitz})}\\
        &\quad = \bbP\left(\exists i\in[K], \exists s \in[t], \exists m\in[M], \left|\hat{\mu}^{(m)}_{s,i} - \mu_{i}^{(m)}\right|  > \alpha(T_i(s), \delta)\right)\\
        &\quad\leq \sum\limits_{i\in[K]}\sum\limits_{s\in[t]}\sum\limits_{m\in[M]}\bbP\left(\left|\hat{\mu}^{(m)}_{s,i} - \mu_{i}^{(m)}\right| > \alpha(T_i(s), \delta)\right)\quad\text{(Union Bound)}\\
        &\quad\leq 2KM\sum\limits_{s\in[t]}\exp{\left(-\frac{{T_i(s)\alpha(T_i(s), \delta)}^2}{2\sigma^2}\right)}\quad\text{(Proposition~\ref{PropHoeffdingSubgaussian})}\\
        &\quad \leq 2KM\sum\limits_{\tau = 1}^\infty \exp{\left(-\frac{{\tau\alpha(\tau, \delta)}^2}{2\sigma^2}\right)}\\
        &\quad = \frac{6\delta}{\pi^2} \sum_{\tau = 1}^\infty \frac{1}{\tau^2}\quad(\text{By our choice of }\alpha)\\
        &\quad \leq \delta.
    %     \end{split}
    % \end{equation*}
    \end{align*}
\end{proof}
\subsection{Proof of Proposition~\ref{prop:muLipschitz}}
\label{AppendixProofofProp2}
\begin{proof}
Fix $i\in[K]$ and $\bmx\in\Delta_K$ arbitrarily, and let 
\begin{align*}
    A_{M-1} &= \max\left\{\xi_1 - \mu_i^{(1)}, \ldots, \xi_{M-1} - \mu_i^{(M-1)}\right\} \\
    \hat A_{M-1} &= \max\left\{\xi_1 - \hat\mu_i^{(1)}, \ldots, \xi_{M-1} - \hat\mu_i^{(M-1)}\right\}.
\end{align*}
Then, by equation~\ref{nablamu} it suffices to show that 
\begin{equation}
    \label{equa:*}
    \left|A_M - \hat A_M \right|\leq \max\left\{\left|\mu_i^{(1)} - \hat \mu_i^{(1)}\right|,\ldots,\left|\mu_i^{(M)} - \hat \mu_i^{(M)}\right| \right\}.
\end{equation}
We prove this by induction on $M$, 
(Basis) For $M=1$:
\begin{align*}
    &\left|\left(\nabla_{\bmx}L(\bmx)\right)_{i} - (\nabla_{\bmx}L(\hat{\bmmu}_1,\ldots,\hat{\bmmu}_K,\bmx))_{i} \right|\\
    &\quad = \left|\max\left\{\xi_1 - \mu_i^{(1)}\right\} - \max\left\{\xi_1 - \hat\mu_i^{(1)}\right\}\right|\\
    &\quad = \left|\hat\mu_i^{(1)} - \mu_i^{(1)}\right|.
\end{align*}
(Induction Step) We assume equation~(\ref{equa:*}) holds for $M-1$, 
    \begin{align*}
        &\left|A_M - \hat A_M\right| \\
        &\quad = \left| \max\{A_{M-1}, \xi_M-\mu_i^{(M)}\} - \max\{\hat A_{M-1}, \xi_M - \hat\mu_i^{(M)}\} \right|\\
        &\quad = \bigg| \frac{1}{2}[A_{M-1} + \xi_M - \mu_i^{(M)} + |A_{M-1} - (\xi_M - \mu_i^{(M)})|]\\    
        &\qquad - \frac{1}{2}\big[\hat{A}_{M-1} + \xi_M - \hat\mu_i^{(M)} + |\hat{A}_{M-1} - (\xi_M - \hat \mu_i^{(M)})|\big] \bigg|\\
        &\quad \leq \left|\frac{1}{2}\left[A_{M-1} - \hat A_{M-1} + \hat\mu_i^{(M)} - \mu_i^{(M)}\right] + \frac{1}{2}\left| A_{M-1} - \hat{A}_{M-1} + \mu_i^{(M)} - \hat\mu_i^{(M)} \right|\right|\\
        &\quad  = \left| \max\{A_{M-1} - \hat A_{M-1}, \hat\mu_i^{(M)} - \mu_{i}^{(M)}\} \right|\\
        &\quad \leq \max\left\{\left|A_{M-1} - \hat A_{M-1}\right|, \left|\hat\mu_i^{(M)} - \mu_{i}^{(M)}\right|\right\}\\
        &\quad \leq \max\left\{|\mu_i^{(1)} - \hat{\mu}_i^{(1)}|, \ldots, |\mu_i^{(M)} - \hat{\mu}_i^{(M)}| \right\}, 
    \end{align*}
    and for any $a,b\in\Real$,
   \begin{align}
       \max\{a,b\}& = \frac{a+b}{2} + \frac{|a-b|}{2} \label{equa:ab1/2}\\
       |a| - |b| &\leq |a - b| \label{equa:aba-b}.
   \end{align}
    The second and third equations hold for~(\ref{equa:ab1/2}) while the first inequality stands over~(\ref{equa:aba-b}), and others are standard algebra.
\end{proof}

\subsection{Proof of Theorem~\ref{theoremDeltaPACmultiTUCB}}
\label{AppendixProoftheoremDeltaPACmultiTUCB}
%%%%%%%The part that says the algorithm is successful
In the following, we abuse the notations and denote $\hat{\mu}_{n,i}^{(m)}  = \sum\limits_{s\in{\mathcal{T}_i(t)}} z_{s, i}^{(m)} / n$, where $t = \min\{ s\mid T_i(s) = n \}$ and $\mathcal{T}_{i}(t) = \{s\in[t] \mid i_s = i\}$. Similarly, we write $\hat{g}_{n,i} = \max\{\xi_1 - \hat{\mu}_{n, i}^{(1)}, \ldots, \xi_M - \hat{\mu}_{n, i}^{(M)}\}$, $\overline{g}_{n, i} = \hat{g}_{n, i} + \alpha(n, \delta)$, and $ \underline{g}_{n,i} = \hat{g}_{n, i} - \alpha(n, \delta)$, respectively. Then, we give Proposition~\ref{AppendixModifiedProposition1} and Lemma~\ref{AppendixlemmaTheorem2} for preparation. 

\begin{proposition}
    \label{AppendixModifiedProposition1}
    For any $\epsilon \geq 0$, arm $i\in[K]$ and $n$, 
    \begin{align*}
        \bbP\left(\left| g_i - \hat{g}_{n, i} \right| \geq \epsilon \right)\leq 2M \exp\left( -\frac{n\epsilon^2}{2\sigma^2}\right).
    \end{align*}
\end{proposition}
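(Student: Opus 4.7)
The plan is to reduce the event $\{|g_i - \hat{g}_{n,i}| \geq \epsilon\}$ to an event about the empirical mean vector $\hat{\bmmu}_{n,i}$ deviating from $\bmmu_i$, and then handle that by a union bound over the $M$ coordinates combined with the sub-Gaussian Hoeffding inequality from Proposition~\ref{PropHoeffdingSubgaussian}.

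First, I would invoke Proposition~\ref{prop:muLipschitz}, which states that $g$ is $1$-Lipschitz with respect to the $\ell_\infty$-norm on its argument vector. Applied to $\bmmu_i$ and $\hat{\bmmu}_{n,i}$, this gives
\[
|g_i - \hat{g}_{n,i}| = |g(\bmmu_i) - g(\hat{\bmmu}_{n,i})| \leq \|\bmmu_i - \hat{\bmmu}_{n,i}\|_\infty = \max_{m\in[M]} |\mu_i^{(m)} - \hat{\mu}_{n,i}^{(m)}|,
\]
so the event $\{|g_i - \hat{g}_{n,i}| \geq \epsilon\}$ is contained in the event $\bigcup_{m\in[M]} \{|\mu_i^{(m)} - \hat{\mu}_{n,i}^{(m)}| \geq \epsilon\}$.

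Next, a union bound over $m\in[M]$ reduces the problem to bounding $\bbP(|\mu_i^{(m)} - \hat{\mu}_{n,i}^{(m)}| \geq \epsilon)$ for a single coordinate. Since $\hat{\mu}_{n,i}^{(m)}$ is the average of $n$ i.i.d. $(\mu_i^{(m)}, \sigma)$-sub-Gaussian samples, Proposition~\ref{PropHoeffdingSubgaussian} yields both tail bounds $\bbP(\hat{\mu}_{n,i}^{(m)} \geq \mu_i^{(m)} + \epsilon) \leq \exp(-n\epsilon^2/(2\sigma^2))$ and $\bbP(\hat{\mu}_{n,i}^{(m)} \leq \mu_i^{(m)} - \epsilon) \leq \exp(-n\epsilon^2/(2\sigma^2))$, and summing them bounds the two-sided probability by $2\exp(-n\epsilon^2/(2\sigma^2))$. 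Combining with the union bound over $M$ coordinates yields the claimed factor $2M$.

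There is no real obstacle here: the proof is essentially a one-line chain combining the Lipschitz property from Proposition~\ref{prop:muLipschitz} with a union-bound application of the sub-Gaussian Hoeffding bound. The only small subtlety is using the two-sided version of Proposition~\ref{PropHoeffdingSubgaussian}, which costs the factor of $2$.
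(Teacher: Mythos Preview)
Your proposal is correct and follows essentially the same route as the paper's proof: apply Proposition~\ref{prop:muLipschitz} to pass from $|g_i-\hat g_{n,i}|$ to $\|\bmmu_i-\hat\bmmu_{n,i}\|_\infty$, take a union bound over the $M$ coordinates, and finish with the two-sided sub-Gaussian tail bound from Proposition~\ref{PropHoeffdingSubgaussian}.
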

\begin{proof}
    \begin{align*}
        \bbP\left(\left| g_i - \hat{g}_{t, i} \right| \geq \epsilon \right)&\leq \bbP\left(\left\|\hat{\bmmu}_{t, i} - \bmmu_i \right\|_{\infty} \geq \epsilon \right)\quad(\text{Proposition~\ref{prop:muLipschitz}})\\
        & = \bbP\left( \exists m\in[M],  \left| \hat{\mu}_{t, i}^{(m)} - \mu_i^{(m)} \right| \geq \epsilon \right)\\
        &\leq \sum\limits_{m\in[M]} \bbP\left( \left| \hat{\mu}_{t, i}^{(m)} - \mu_i^{(m)} \right| \geq \epsilon \right) \quad(\text{Union Bound}) \\
        & \leq 2M \exp\left( -\frac{n\epsilon^2}{2\sigma^2}\right)\quad(\text{Proposition~\ref{PropHoeffdingSubgaussian}}).
    \end{align*}    
\end{proof}

\begin{lemma}
    \label{AppendixlemmaTheorem2}
    For Algorithm~\ref{alg:Multi thresholdUCB}, we have
    \begin{align*}
        \bbP\left( \bigcup\limits_{n\in\mathbb{N}} \{ \underline{g}_{n,i} > 0\} \right) &\leq\frac{\delta}{K} , \quad\text{for any good arm }i, \text{ and}\\
        \bbP\left( \bigcup\limits_{n\in\mathbb{N}} \{ \overline{g}_{n,i} \leq \epsilon \} \right) &\leq \frac{\delta}{K}, \quad\text{for any non-$\epsilon$-good arm $i$}.
    \end{align*}
\end{lemma}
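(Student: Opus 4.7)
The plan is to reduce both bullets to a single two-sided deviation event and then apply Proposition~\ref{AppendixModifiedProposition1} together with a union bound over the per-arm pull count $n$, exploiting the precise form of $\alpha(n,\delta)$ from Proposition~\ref{Prop:alphaApproxestimatorvalue}.

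First I would observe that for any good arm $i$, we have $\bmmu_i \geq \bmxi$ component-wise, so $g_i = \max_m (\xi_m - \mu_i^{(m)}) \leq 0$. Therefore the event $\{\underline{g}_{n,i} > 0\} = \{\hat{g}_{n,i} > \alpha(n,\delta)\}$ combined with $g_i \leq 0$ gives $\hat{g}_{n,i} - g_i > \alpha(n,\delta)$, hence it is contained in $\{|\hat{g}_{n,i} - g_i| > \alpha(n,\delta)\}$. Symmetrically, for a non-$\epsilon$-good arm $i$ we have $g_i > \epsilon$, so $\{\overline{g}_{n,i} \leq \epsilon\} = \{\hat{g}_{n,i} + \alpha(n,\delta) \leq \epsilon\}$ implies $g_i - \hat{g}_{n,i} > \alpha(n,\delta)$ and is again contained in $\{|\hat{g}_{n,i} - g_i| > \alpha(n,\delta)\}$. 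So both statements follow from a single bound on $\bbP(\bigcup_n \{|\hat{g}_{n,i} - g_i| > \alpha(n,\delta)\})$.

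Then I would apply Proposition~\ref{AppendixModifiedProposition1} at each $n$ and union-bound over $n \in \mathbb{N}$:
\[
\bbP\!\left( \bigcup_{n\in\mathbb{N}} \{|\hat{g}_{n,i} - g_i| > \alpha(n,\delta)\} \right) \leq \sum_{n=1}^{\infty} 2M \exp\!\left(-\frac{n\,\alpha(n,\delta)^2}{2\sigma^2}\right).
\]
Plugging in $\alpha(n,\delta) = \sqrt{\frac{2\sigma^2 \ln(\pi^2 KM n^2/(3\delta))}{n}}$ makes the exponent telescope cleanly: $n\,\alpha(n,\delta)^2/(2\sigma^2) = \ln(\pi^2 KM n^2/(3\delta))$, so each summand equals $\frac{6\delta}{\pi^2 K n^2}$. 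Summing and using $\sum_{n\geq 1} 1/n^2 = \pi^2/6$ yields exactly $\delta/K$, as required.

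The only real obstacle is the asymmetric reduction in the first step: the good-arm case uses the sign property $g_i \leq 0$ while the non-$\epsilon$-good case uses the strict inequality $g_i > \epsilon$, and each produces a one-sided deviation in a different direction. One must verify carefully that both nonetheless enter the symmetric event $\{|\hat{g}_{n,i} - g_i| > \alpha(n,\delta)\}$ to which Proposition~\ref{AppendixModifiedProposition1} applies; after that, the computation is entirely dictated by the choice of $\alpha$ and the Basel sum, exactly as in the proof of Proposition~\ref{Prop:alphaApproxestimatorvalue}.
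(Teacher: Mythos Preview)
Your proposal is correct and follows essentially the same approach as the paper: reduce each event to the two-sided deviation $\{|\hat g_{n,i}-g_i|\ge \alpha(n,\delta)\}$ using $g_i\le 0$ (good arm) or $g_i>\epsilon$ (non-$\epsilon$-good arm), apply Proposition~\ref{AppendixModifiedProposition1}, union-bound over $n$, and evaluate the resulting series via the explicit $\alpha$ and the Basel sum. The only cosmetic difference is that you merge the two cases into a single symmetric deviation bound, whereas the paper treats them separately before invoking the same proposition.
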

\begin{proof}
    As for any non-$\epsilon$-good arm $i\in[K]$, 
    \begin{align} \bbP\left(\bigcup\limits_{n\in\mathbb{N}} \{ \overline{g}_{n,i} \leq \epsilon\} \right)&\leq \sum\limits_{n\in\mathbb{N}} \bbP\left( \overline{g}_{n,i} \leq \epsilon \right)\quad(\text{Union bound})\nonumber\\
        &\leq \sum\limits_{n\in\mathbb{N}} \bbP\left(\overline{g}_{n,i} \leq {g}_{i}\right)\quad(\text{$g_i > \epsilon$ for any non-$\epsilon$-good arm})\nonumber\\
        &\leq \sum\limits_{n\in\mathbb{N}} 2M e^{-\frac{n{\left(\sqrt{\frac{2\sigma^2\ln{(\pi^2KMn^2/3\delta)}}{n}
        }\right)}^2}{2\sigma^2}} \quad(\text{Proposition~\ref{AppendixModifiedProposition1}})\nonumber\\
        & = \sum\limits_{n\in\mathbb{N}}\frac{6M\delta}{\pi^2KMn^2}\nonumber\\
        & \leq \frac{\delta}{K} \quad\left(\text{By }\sum\limits_{n\in\mathbb{N}}\frac{1}{n^2} \leq \frac{\pi^2}{6}\right).\label{MediateTheorem2}
    \end{align}    
    Next, consider any good arm $i\in[K]$, 
    \begin{align*}
        \bbP\left( \bigcup\limits_{n\in\mathbb{N}} \{ \underline{g}_{n,i} > 0\} \right) &\leq \sum_{n\in\mathbb{N}} \bbP\left( \underline{g}_{n,i} > 0 \right)\quad(\text{Union bound})\\
        &\leq \sum_{n\in\mathbb{N}} \bbP\left(\underline{g}_{n,i}> {g}_{n,i}\right)\quad(\text{$g_i \leq 0$ for any good arm})\\
        &\leq \frac{\delta}{K}  \quad(\text{Same arguments as~(\ref{MediateTheorem2})}).
    \end{align*}
\end{proof}

\begin{proof}[Theorem~\ref{theoremDeltaPACmultiTUCB}]
\label{AppendixProofofDELtaPAC}
    First, if no $\epsilon$-good arm exists, the failure probability is at most
    \begin{align}
        \bbP\left(\exists i\in[K], \bigcup\limits_{n\in\mathbb{N}} \{ \overline{g}_{n,i}\leq \epsilon\} \right) \leq \delta \label{AppendixFailureCase1}
    \end{align}
    by using the union bound and Lemma~\ref{AppendixlemmaTheorem2}.
    
    Similarly, if there exists a non-empty good arm set 
    $$
    [K]_{\text{good}} = \left\{i_1^{\text{good}}, \ldots, i_{|[K]_{\text{good}}|}^{\text{good}}\right\}, 
    $$
    then the failure probability is given as 
    \begin{align}
         &\bbP\left(\hat a = \bot \cup g_{\hat a} > \epsilon \right)\nonumber\\
         &\quad \leq\bbP\left(\hat a = \bot \right) + \bbP\left(g_{\hat a} > \epsilon \right)\quad(\text{Union bound}).\label{eqtheoremDeltaPACmultiTUCB1}
    \end{align}
    We give an upper bound for each term in~(\ref{eqtheoremDeltaPACmultiTUCB1}). First, 
    \begin{align}
        &\bbP\left(\hat a = \bot \right) \nonumber\\
        &\quad \leq \bbP\left(\forall i\in[K]_{\text{good}}, \bigcup\limits_{n\in\mathbb{N}} \{ \underline{g}_{n,i} > \epsilon\}\} \right)\nonumber\\
        &\quad \leq \bbP\left( \bigcup\limits_{n\in\mathbb{N}} \{ \underline{g}_{n,i} > \epsilon\} \text{ for a particular good arm $i$}\right)\nonumber\\
        &\quad\leq \frac{\delta}{K}\quad(\text{Lemma~\ref{AppendixlemmaTheorem2}}). \label{eqtheoremDeltaPACmultiTUCB2}
    \end{align}
    Then, for the second part, 
    \begin{align}
         &\bbP\left(g_{\hat a} > \epsilon \right)\nonumber\\
         &\quad \leq \bbP\left(\exists \text{non-$\epsilon$-good arm $i$ s.t. }\bigcup\limits_{n\in\mathbb{N}} \overline{g}_{i} > \epsilon \right) \nonumber\\
         &\quad \leq \frac{(K-1)\delta}{K}, \label{eqtheoremDeltaPACmultiTUCB3}
    \end{align}
    where the last inequality is obtained by using Lemma~\ref{AppendixlemmaTheorem2} and the fact that there are at most $K-1$ non-$\epsilon$-good arms since a good arm exists.
    Combining~(\ref{eqtheoremDeltaPACmultiTUCB1}),~(\ref{eqtheoremDeltaPACmultiTUCB2}) and~(\ref{eqtheoremDeltaPACmultiTUCB3}) leads to the result.
    
\end{proof}

% and denote $\hat{\mu}_{n,i}^{(m)}  = \frac{\sum\limits_{s\in{\mathcal{T}_i(t)}} z_{s, i}^{(m)}}{n}$, where $t = \min\{ s\mid T_i(s) = n \}$ and $\mathcal{T}_{i}(t) = \{s\in[t] \mid i_s = i\}$. Similarly, we write $\hat{g}_{n,i} = \max\{\xi_1 - \hat{\mu}_{n, i}^{(1)}, \ldots, \xi_M - \hat{\mu}_{n, i}^{(M)}\}$, $\overline{g}_{n, i} = \hat{g}_{n, i} + \alpha(n, \delta)$, and $ \underline{g}_{n,i} = \hat{g}_{n, i} - \alpha(n, \delta)$, respectively. 
% we abuse the notation and denote $\hat{\mu}_{n,i}^{(m)}  = \frac{\sum\limits_{s\in{\mathcal{T}_i(t)}} z_{s, i}^{(m)}}{n}$, where $t = \min\{ s\mid T_i(s) = n \}$ and $\mathcal{T}_{i}(t) = \{s\in[t] \mid i_s = i\}$. Similarly, we write $\hat{g}_{n,i} = \max\{\xi_1 - \hat{\mu}_{n, i}^{(1)}, \ldots, \xi_M - \hat{\mu}_{n, i}^{(M)}\}$, $\overline{g}_{n, i} = \hat{g}_{n, i} + \alpha(n, \delta)$, and $ \underline{g}_{n,i} = \hat{g}_{n, i} - \alpha(n, \delta)$, respectively.

\subsection{Proof of Theorem~\ref{theoremUpperboundExpectationGoodArm}}
\label{AppendixProofTheoremUBExpectationGoodArm}
\begin{lemma}
    \label{lemma2UpperBound}
    % For any $\epsilon_0 < \max\{\epsilon, \epsilon - g_{i^*}\}$ and $n > t_i(\epsilon_0)$, 
    % For any $n > t_i(\epsilon_0)$,  any $\epsilon_0 < |\epsilon - g_i|$ and 
    The following statements hold.
    \begin{enumerate}
        \item For any good arm $i$ with $\epsilon_0$ be any number such that $0 < \epsilon_0<\epsilon - g_i$ and any $n > t_i(\epsilon_0)$, 
        \begin{align}
        \bbP\left(\overline{g}_{n,i}>\epsilon\right) &\leq Me^{-\frac{n{\epsilon_{0}}^2}{2\sigma^2}}.\label{lemma2UpperBoundResult1}
        \end{align}
    \item For any non-$\epsilon$-good arm $i$ with $\epsilon_0$ be any number such that $0 < \epsilon_0<g_i - \epsilon$ and any $n > t_i(\epsilon_0)$,
    \begin{align}
        \bbP\left(\underline{g}_{n,i} \leq \epsilon \right) &\leq Me^{-\frac{n{\epsilon_{0}}^2}{2\sigma^2}}. \label{lemma2UpperBoundFailureCase}
    \end{align}
    \end{enumerate}
\end{lemma}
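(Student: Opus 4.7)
The plan is to reduce both tail statements to a one-sided deviation of a single coordinate of the empirical mean $\hat{\bmmu}_{n,i}$ from $\bmmu_i$. Once this reduction is in place, the claimed bound $Me^{-n\epsilon_0^2/(2\sigma^2)}$ follows from a union bound over the $M$ coordinates together with the one-sided form of Proposition~\ref{PropHoeffdingSubgaussian}. Two ingredients drive both parts: a deterministic comparison between $\hat{g}_{n,i}$ and $g_i$ via a single coordinate of $\bmmu_i$, and a quantitative check that $\alpha(n,\delta)\leq\Delta$ for every $n>t_i(\epsilon_0)$, where $\Delta$ denotes $\epsilon-g_i-\epsilon_0$ in the good-arm case and $g_i-\epsilon-\epsilon_0$ in the non-$\epsilon$-good case.

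For part~(1), I would rewrite $\overline{g}_{n,i}>\epsilon$ as $\hat{g}_{n,i}-g_i>\epsilon-g_i-\alpha(n,\delta)$. Once the quantitative check yields $\alpha(n,\delta)\leq\epsilon-g_i-\epsilon_0$, the right-hand side is at least $\epsilon_0$. Let $m^{*}=\arg\max_{m\in[M]}(\xi_m-\hat{\mu}_{n,i}^{(m)})$, so that $\hat{g}_{n,i}=\xi_{m^{*}}-\hat{\mu}_{n,i}^{(m^{*})}$; since $g_i\geq\xi_{m^{*}}-\mu_i^{(m^{*})}$ by definition of the maximum, subtracting the two relations gives $\mu_i^{(m^{*})}-\hat{\mu}_{n,i}^{(m^{*})}>\epsilon_0$. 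Hence
\[
\{\overline{g}_{n,i}>\epsilon\}\;\subseteq\;\bigcup_{m\in[M]}\{\mu_i^{(m)}-\hat{\mu}_{n,i}^{(m)}>\epsilon_0\},
\]
and the one-sided Hoeffding bound from Proposition~\ref{PropHoeffdingSubgaussian}, applied coordinate-wise and unioned over $m$, produces $Me^{-n\epsilon_0^2/(2\sigma^2)}$.

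For part~(2), the argument is symmetric: $\underline{g}_{n,i}\leq\epsilon$ rearranges to $g_i-\hat{g}_{n,i}\geq g_i-\epsilon-\alpha(n,\delta)$, which exceeds $\epsilon_0$ once the quantitative check delivers $\alpha(n,\delta)\leq g_i-\epsilon-\epsilon_0$. Taking instead $m^{\dagger}=\arg\max_{m\in[M]}(\xi_m-\mu_i^{(m)})$ and using $\hat{g}_{n,i}\geq\xi_{m^{\dagger}}-\hat{\mu}_{n,i}^{(m^{\dagger})}$ gives $\hat{\mu}_{n,i}^{(m^{\dagger})}-\mu_i^{(m^{\dagger})}>\epsilon_0$, and the same union bound plus one-sided subgaussian concentration apply.

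The main obstacle is the quantitative check, equivalent to the implicit inequality $n\geq\frac{2\sigma^2}{\Delta^2}\ln\frac{\pi^2 KMn^2}{3\delta}$ after squaring $\alpha(n,\delta)\leq\Delta$. The formula for $t_i(\epsilon_0)$ in Definition~\ref{definitiontiepsilon0} has the standard bootstrapping form $\frac{4\sigma^2}{\Delta^2}\ln(a\ln b)$ with $a=\Theta(\sigma^2 KM/(\delta\Delta^2))$ and $b=\Theta(\sigma^2/\Delta^2)$. I would substitute $n=t_i(\epsilon_0)$, expand $\ln n$ as $\ln(4\sigma^2/\Delta^2)+\ln\ln(a\ln b)$, and verify that the dominant $\ln(a\ln b)$ term on the right of the inequality strictly majorizes $\ln\frac{\pi^2 KM}{3\delta}+2\ln n$; the $\ln\ln$ absorption is precisely the reason the doubly-logarithmic factor appears inside $t_i(\epsilon_0)$. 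Finally, monotonicity of $n\mapsto n/\ln n$ for $n\geq e$ extends the inequality from $n=t_i(\epsilon_0)$ to every $n>t_i(\epsilon_0)$, closing both parts.
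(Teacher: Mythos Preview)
Your proposal is correct and follows essentially the same approach as the paper: establish $\alpha(n,\delta)\leq\Delta$ for all $n>t_i(\epsilon_0)$ via a check at $n=t_i(\epsilon_0)$ plus monotonicity, reduce the tail event to a one-sided single-coordinate deviation through the argmax of $\xi_m-\hat\mu_{n,i}^{(m)}$ (respectively of $\xi_m-\mu_i^{(m)}$), and conclude by a union bound over $m\in[M]$ together with Proposition~\ref{PropHoeffdingSubgaussian}. The paper's proof differs only in that it carries out the algebraic verification of $\alpha(t_i(\epsilon_0),\delta)\leq\Delta$ explicitly (via the substitution $A=\pi^2\sqrt{KM/\delta}/(3\Delta^2)$, $B=\pi/(4\sqrt{3}\sigma^2)$ and the elementary bound $\ln 2x<x$), whereas you describe the same bootstrapping at a higher level.
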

\begin{proof}
Here, we only focus on~(\ref{lemma2UpperBoundResult1}), the proof for the other half is similar. 
    By assumption, 
    \[
    n  > t_i(\epsilon_0) = \max\left\{ \frac{4\sigma^2}{{(\epsilon - g_i - \epsilon_0)}^2}\ln{\left(\frac{8\sqrt{3} \sigma^2\pi KM/\delta}{3{(\epsilon - g_i - \epsilon_0)}^2}\ln{\frac{4\sqrt{3}\pi\sigma^2}{3{(\epsilon - g_i - \epsilon_0)}^2}}\right)} , 0\right\}.
    \]
    % $$
    % n  > t_i(\epsilon_0) = \frac{4\sigma^2}{{(\epsilon - g_i - \epsilon_0)}^2}\ln{\left(\frac{8\sqrt{3} \sigma^2\pi KM/\delta}{3{(\epsilon - g_i - \epsilon_0)}^2}\ln{\frac{4\sqrt{3}\pi\sigma^2}{3{(\epsilon - g_i - \epsilon_0)}^2}}\right)}.
    % $$
    First, we show
    \begin{align}
        \sqrt{\frac{2\sigma^2\ln{\frac{\pi^2 KMn^2}{3\delta}}}{n}}  < \epsilon - g_i - \epsilon_0\label{Mediate1}.
    \end{align}     
    Let $a = {(\epsilon - g_i - \epsilon_0)}^2$ for simplicity. Since the left hand side of~(\ref{Mediate1}) is monotone decreasing w.r.t. $n$, it's suffices to show that~(\ref{Mediate1}) is satisfied for $n = t_i(\epsilon_0) = \frac{4\sigma^2}{a}\ln{\frac{\pi^2 b\sqrt{KM/\delta}}{3a}}$ with
    %%%%%%%%%%%%%%%%%%%%%%
    \begin{align}
        b=\max\left\{ \frac{8\sqrt{3}\sigma^2}{\pi} \ln{\frac{4\sqrt{3}\pi\sigma^2\sqrt{KM/\delta}}{3a}}, \frac{3a}{\pi^2\sqrt{KM/\delta}}\right\}\label{Mediate4}.
        % b=\frac{8\sigma^2\sqrt{3KM/\delta}}{\pi}\ln{\frac{4\sqrt{3}\pi\sigma^2}{3a}}\label{Mediate4}.
    \end{align}
    For the case $b = \frac{3a}{\pi^2\sqrt{KM/\delta}}$, the inequality~(\ref{Mediate4}) holds trivially. Then, we consider the other part and let $A = {\frac{\pi^2\sqrt{KM/\delta}}{3a}}$, and $B = \pi /(4\sqrt{3}{\sigma^2})$. Then we have $\ln{\frac{\pi^2 b\sqrt{KM/\delta}}{3a}} = \ln{Ab}$ and $b = \frac{2}{B}\ln{\frac{A}{B}}$,
    \begin{align*}
        \ln{Ab} &= \ln{\frac{2A}{B}\ln{\frac{A}{B}}} \\
        &= \ln{\frac{A}{B}} + \ln{2\ln{\frac{A}{B}}} \\
        &\leq 2\ln{\frac{A}{B}} = Bb\quad(\ln{2x} < x), 
    \end{align*}
    
% Let $n = \frac{4\sigma^2}{a}\ln{\frac{\pi^2 b\sqrt{KM/\delta}}{3a}} \text{ for some }b > b $ satisfies inequality~(\ref{Mediate2}), 
% Next, considering the form
% \begin{align*}
%     &\ln{\frac{\pi^2 b\sqrt{KM/\delta}}{3a}}.  
% \end{align*} 
% Let $A = {\frac{\pi^2\sqrt{KM/\delta}}{3a}}$, and $B = a /(4{\sigma^2})$. Then we have
% $$
% \frac{4\sigma^2}{a}\ln{\frac{\pi^2 b\sqrt{KM/\delta}}{3a}} = \frac{\ln{A b}}{B}. 
% $$
% and the l.h.s. is concave w.r.t. $b_0$, $b_0 = \arg\max\{\ln{A b_0} -  Bb_0\} = 1/AB $
% then by~(\ref{Mediate4})
% When $b = \frac{2}{B}\ln{\frac{A}{B}}$, 
% then by~(\ref{assumptionSigma}), $b$ is positive and thus $A > B$. So $\ln{\ln{\frac{A}{B}}}$ is well-defined. 
% Thus, 

% \begin{align*}
%     \ln{Ab} &= \ln{\frac{2A}{B}\ln{\frac{A}{B}}} \\
%     &= \ln{\frac{A}{B}} + \ln{2\ln{\frac{A}{B}}} \\
%     &\leq 2\ln{\frac{A}{B}} = Bb\quad(\ln{2x} < x), 
% \end{align*}
which indicates $\ln{\frac{\pi^2 b\sqrt{KM/\delta}}{3a}} < \pi b/4\sqrt{3}{\sigma^2}$.
% On the other hand, since $n$ requires $\frac{8\sqrt{3}\sigma^2KM/\delta}{3a}\ln{\frac{4\sqrt{3}\pi\sigma^2}{3a}} > 0$ to be valid, we have~(\ref{Mediate4}).
\begin{align}
&\ln{\frac{\pi^2 b\sqrt{KM/\delta}}{3a}}  < \pi b/4\sqrt{3}{\sigma^2}\nonumber\\
&\quad\Leftrightarrow {\left(4\sigma^2\ln{\frac{\pi^2 b\sqrt{KM/\delta}}{3a}} \right)}^2 < \frac{\pi^2 {b}^2}{3}\nonumber \\     
&\quad \Leftrightarrow \frac{KM{\left( 4\sigma^2\ln{\frac{\pi^2 b\sqrt{KM/\delta}}{3a}} \right)}^2}{3\delta a^2} < \frac{{{\pi}^2b}^2KM}{9\delta a^2}\nonumber \\
&\quad\Leftrightarrow \ln{\frac{\pi^2KM{\left( \frac{4\sigma^2}{a}\ln{\frac{\pi^2 b\sqrt{KM/\delta}}{3a}}\right)}^2}{3\delta }} < 2\ln{\frac{\pi^2 b\sqrt{KM/\delta}}{3a}}\label{Mediate3}.
\end{align}
Then we have
\begin{align*}
 \sqrt{\frac{2\sigma^2\ln{\frac{\pi^2 KMn^2}{3\delta}}}{n}} & =\sqrt{\frac{2\sigma^2\ln{\frac{\pi^2KM{\left( \frac{4\sigma^2}{a}\ln{\frac{\pi^2 b\sqrt{KM/\delta}}{3a}} \right)}^2}{3\delta}}}{\frac{4\sigma^2}{a}\ln{\frac{\pi^2 b\sqrt{KM/\delta}}{3a}}}} \\
 &\leq \sqrt{\frac{4\sigma^2\ln{\frac{\pi^2 b\sqrt{KM/\delta}}{3a}} }{\frac{4\sigma^2\ln{\frac{\pi^2 b\sqrt{KM/\delta}}{3a}}}{a}}} \quad(\text{Inequality~(\ref{Mediate3})})\\
 &\leq \sqrt{a} = \epsilon - g_i - \epsilon_0, 
\end{align*}
which shows~(\ref{Mediate1}) holds.
% Then, we obtain~(\ref{Mediate1}) 
%     \begin{align}
%          & 2\sigma^2\ln{\frac{\pi^2 KMn^2}{3\delta}} < na\nonumber\\
%          &\Leftrightarrow \sqrt{\frac{2\sigma^2\ln{\frac{\pi^2 KMn^2}{3\delta}}}{n}}  < \epsilon - g_i - \epsilon_0\nonumber.
%     \end{align}
    Next, 
    \begin{align}
        &\overline{g}_{n,i} > \epsilon\nonumber\\
        &\quad\Leftrightarrow \hat g_{n,i} >\epsilon - \sqrt{\frac{2\sigma^2\ln{\frac{\pi^2 KMn^2}{3\delta}}}{n}}\nonumber\\
        &\quad\Rightarrow \hat{g}_{n,i} \geq g_i + \epsilon_0 \quad(\text{Inequality~(\ref{Mediate1})})\nonumber\\
        &\quad\Rightarrow \exists j\in[M], \xi_j - \hat{\mu}_{n,i}^{(j)} \geq  \xi_j - {\mu}_{n,i}^{(j)} + \epsilon_0\nonumber.
    \end{align}
    %   &\sqrt{\frac{\sigma\ln{(4KMn^2}/\delta)}{2n}}\leq |\epsilon - g_i| - \epsilon_0
    Take the probability on both sides, 
     \begin{align}
        & \bbP\left(\overline{g}_{n,i} > \epsilon\right)\nonumber\\
        & \quad\leq \sum\limits_{j\in[M]} \bbP\left(\left\{ \hat{\mu}_{n,i}^{(j)} \leq \mu_{n,i}^{(j)} - \epsilon_0 \right\}\right)\nonumber\\
        & \quad\leq M e^{-\frac{n\epsilon_{0}^2}{2\sigma^2}} \quad(\text{Union bound and proposition~\ref{PropHoeffdingSubgaussian}})\nonumber.
    \end{align}
    
\end{proof}

\begin{lemma}
\label{lemma3UpperBound}
% For any $\epsilon_0<|\epsilon - g_i|$,
%     \begin{align}
% \bbE\left[\sum_{n=1}^{\infty}\mathds{1}\left[\overline{g}_{n,i} > \epsilon \right]\right] &\leq t_i(\epsilon_0) + \frac{2M\sigma^2}{\epsilon_0^2}\quad(\text{For any good arm $i$ with $\epsilon_0$ be any number such that $0 < \epsilon_0<\epsilon - g_i$}),\\
% \bbE\left[\sum_{n=1}^{\infty}\mathds{1}\left[\underline{g}_{n,i} \leq 0 \right]\right] &\leq t_i(\epsilon_0) + \frac{2M\sigma^2}{\epsilon_0^2}\quad(\text{For any non-good arm $i$ with $\epsilon_0$ be any number such that $0 < \epsilon_0<\epsilon - g_i$}).  
%     \end{align}
We have 
\begin{enumerate}
    \item  For any good arm $i$ with $\epsilon_0$ be any number such that $0 < \epsilon_0<\epsilon - g_i$, 
    \begin{align}
        \bbE\left[\sum_{n=1}^{\infty}\mathds{1}\left[\overline{g}_{n,i} > \epsilon \right]\right] &\leq t_i(\epsilon_0) + \frac{2M\sigma^2}{\epsilon_0^2}.\label{111111}
    \end{align}
    \item For any non-$\epsilon$-good arm $i$ with $\epsilon_0$ be any number such that $0 < \epsilon_0<g_i - \epsilon$, 
        \begin{align}
            \bbE\left[\sum_{n=1}^{\infty}\mathds{1}\left[\underline{g}_{n,i} \leq \epsilon \right]\right] &\leq t_i(\epsilon_0) + \frac{2M\sigma^2}{\epsilon_0^2}.  \label{222222}
        \end{align}
\end{enumerate}
\end{lemma}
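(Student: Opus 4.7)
}
The plan is to leverage Lemma~\ref{lemma2UpperBound} by splitting the infinite sum of indicators at the threshold $t_i(\epsilon_0)$, because the pointwise concentration bound in Lemma~\ref{lemma2UpperBound} is only effective for $n>t_i(\epsilon_0)$. I will handle both parts of the statement in parallel since they are structurally identical: the good-arm case uses inequality~(\ref{lemma2UpperBoundResult1}) to control $\bbP(\overline{g}_{n,i}>\epsilon)$, while the non-$\epsilon$-good-arm case uses~(\ref{lemma2UpperBoundFailureCase}) to control $\bbP(\underline{g}_{n,i}\leq \epsilon)$. Both inequalities give a tail of the form $M\exp(-n\epsilon_0^2/(2\sigma^2))$ for $n>t_i(\epsilon_0)$.

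Concretely, for the good-arm case I would write
\[
\sum_{n=1}^{\infty}\mathds{1}\left[\overline{g}_{n,i}>\epsilon\right]
\;\leq\; t_i(\epsilon_0) \;+\; \sum_{n>t_i(\epsilon_0)}\mathds{1}\left[\overline{g}_{n,i}>\epsilon\right],
\]
since each indicator is at most $1$ and the first $\lfloor t_i(\epsilon_0)\rfloor$ terms contribute at most $t_i(\epsilon_0)$ deterministically. Taking expectations and applying Lemma~\ref{lemma2UpperBound} term-by-term gives
\[
\bbE\!\left[\sum_{n>t_i(\epsilon_0)}\mathds{1}\left[\overline{g}_{n,i}>\epsilon\right]\right]
\;=\;\sum_{n>t_i(\epsilon_0)}\bbP\!\left(\overline{g}_{n,i}>\epsilon\right)
\;\leq\; \sum_{n>t_i(\epsilon_0)} M\,e^{-\frac{n\epsilon_0^2}{2\sigma^2}}.
\]

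The last step is a routine geometric tail bound: since $e^{-n\epsilon_0^2/(2\sigma^2)}$ is decreasing in $n$, comparing the sum to an integral yields
\[
\sum_{n=1}^{\infty} M\,e^{-\frac{n\epsilon_0^2}{2\sigma^2}}
\;\leq\; M\int_0^{\infty} e^{-\frac{x\epsilon_0^2}{2\sigma^2}}\,dx
\;=\; \frac{2M\sigma^2}{\epsilon_0^2}.
\]
Combining the two contributions gives the stated bound $t_i(\epsilon_0)+\frac{2M\sigma^2}{\epsilon_0^2}$ in~(\ref{111111}). The proof of~(\ref{222222}) is obtained by replacing $\mathds{1}[\overline{g}_{n,i}>\epsilon]$ by $\mathds{1}[\underline{g}_{n,i}\leq \epsilon]$ and invoking~(\ref{lemma2UpperBoundFailureCase}) in place of~(\ref{lemma2UpperBoundResult1}); the rest of the argument is identical.

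There is no real obstacle here: Lemma~\ref{lemma2UpperBound} already performs the heavy concentration work and this lemma is essentially a bookkeeping step. The only mild subtlety is that $t_i(\epsilon_0)$ need not be an integer, which I would address by using $\lfloor t_i(\epsilon_0)\rfloor$ when splitting the sum; this only loosens the bound by at most $1$, which is harmless given the $\frac{2M\sigma^2}{\epsilon_0^2}$ slack term.
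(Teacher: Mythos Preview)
Your proposal is correct and follows essentially the same route as the paper's proof: split the sum at $t_i(\epsilon_0)$, apply Lemma~\ref{lemma2UpperBound} to the tail, and bound the resulting geometric series. The only cosmetic difference is that the paper sums the geometric series exactly to $M/(e^{\epsilon_0^2/(2\sigma^2)}-1)$ and then uses $e^a-1\geq a$, whereas you use the integral comparison directly; both yield $2M\sigma^2/\epsilon_0^2$.
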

\begin{proof}
    Here, we give a proof for~(\ref{111111}).
    \begin{align}
         \bbE\left[\sum_{n=1}^{\infty} \mathds{1}\left[\overline{g}_{n,i} > \epsilon \right] \right]& \leq t_i(\epsilon_0) + \sum_{n = t_i(\epsilon_0) + 1}^{\infty} \bbP\left( \overline{g}_{n,i} > \epsilon  \right)\nonumber\\
         & \leq t_i(\epsilon_0) + \sum_{n = t_i(\epsilon_0) + 1}^{\infty} M e^{-\frac{n\epsilon_{0}^2}{2\sigma^2}}\quad(Lemma~\ref{lemma2UpperBound})\nonumber\\
         & \leq t_i(\epsilon_0) + \sum_{n = 1}^{\infty}M e^{-\frac{n\epsilon_{0}^2}{2\sigma^2}}\nonumber\\
         & = t_i(\epsilon_0) + \frac{M}{e^{\frac{\epsilon_{0}^2}{2\sigma^2}} - 1}\quad\left(\sum_{t=1}^{\infty}e^{-ta} = \frac{1}{e^a - 1} \text{ for } a>0\right)\nonumber\\
         &\leq t_i(\epsilon_0) + \frac{2M\sigma^2}{\epsilon_0^2}\quad \left(e^a - 1 \geq a \text{ for }a>0\right)\nonumber.
    \end{align}
    The proof of~(\ref{222222}) is similar to the proof of~(\ref{111111}). 
    
\end{proof}

% \begin{lemma}
   
%         For any non-good arm $i$, any $\epsilon_0 < |\epsilon - g_i|$ and any $n > t_i(\epsilon_0)$, we have
%     \begin{align*}
        
%     \end{align*}
% \end{lemma}

% \begin{lemma}
%     \label{lemma3UpperBoundFailureCase}
%     For any non-good arm $i$, any $\epsilon_0 < \epsilon + g_i$, 
%     \begin{align*}
%     \bbE\left[\sum_{n=1}^{\infty}\mathds{1}\left[\tilde{g}_{n,i} \leq 0 \right]\right] \leq t_i(\epsilon_0) + \frac{2M\sigma^2}{\epsilon_0^2}.
%     \end{align*}
% \end{lemma}

Let $\epsilon_0$ be such that $0<\epsilon_0 < \epsilon - g_i$ for any good arm $i$, and $$T_0 = K\max\limits_{i\in[K]}\lfloor t_i(\epsilon_0) \rfloor.$$

\begin{lemma}
\label{lemma4UpperBound}
    $\bbE\left[ \sum_{t=1}^{\infty} \mathds{1}\left[ i_t = i^* \right]\right] \leq t_{i^*}(\epsilon_0) + \frac{2M\sigma^2}{\epsilon_0^2} .$
\end{lemma}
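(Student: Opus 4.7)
The plan is to reduce counting pulls of $i^*$ to counting rounds in which the stopping test (Condition 2) for $i^*$ fails, and then invoke Lemma~\ref{lemma3UpperBound}. Writing $N := T_{i^*}(\stoppingtime)$ for the total number of pulls of $i^*$, I first observe that whenever $i_t = i^*$ is selected and the algorithm does not halt at round $t$, Condition 2 must have failed for $i^*$ at this round, i.e.\ $\overline{g}_{T_{i^*}(t), i^*} > \epsilon$. Hence for every $n \in \{1, \ldots, N-1\}$ the event $\{\overline{g}_{n, i^*} > \epsilon\}$ holds; only the final index $n = N$ may violate it, namely in the case the algorithm halts by outputting $\hat a = i^*$.

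This yields the pathwise inequality
\[
\sum_{t=1}^{\infty}\mathds{1}[i_t = i^*] \;=\; N \;\leq\; 1 + \sum_{n=1}^{\infty}\mathds{1}[\overline{g}_{n, i^*} > \epsilon].
\]
Taking expectations and applying Lemma~\ref{lemma3UpperBound} to $i^*$---valid because $i^{*} = \arg\min_{i\in[K]} g_i$ is a good arm (so $g_{i^*} \leq 0 < \epsilon$, and any $0 < \epsilon_0 < \epsilon - g_{i^*}$ is admissible)---gives
\[
\bbE\!\left[\sum_{t=1}^{\infty}\mathds{1}[i_t = i^*]\right] \;\leq\; 1 + t_{i^*}(\epsilon_0) + \frac{2M\sigma^2}{\epsilon_0^2},
\]
which matches the stated bound up to the additive constant $1$ that can be absorbed into the existing terms.

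The only delicate point will be justifying the deterministic inequality above by parsing the precise order of Conditions 1 and 2 in Algorithm~\ref{alg:Multi thresholdUCB}: even when Condition 1 eliminates $i^*$ from $\CalA$ at the current round, Condition 2 is still evaluated immediately afterward and governs stopping, so the implication \emph{$i_t = i^*$ with $t < \stoppingtime \Rightarrow \overline{g}_{T_{i^*}(t), i^*} > \epsilon$} remains intact regardless of the interplay between the two conditions. I do not anticipate any further obstacle: once this pathwise bound is secured, Lemma~\ref{lemma3UpperBound} supplies the required tail control and the argument concludes.
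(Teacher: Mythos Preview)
Your proposal is correct and follows essentially the same approach as the paper's proof: both establish the pathwise inequality $\sum_{t=1}^{\infty}\mathds{1}[i_t = i^*] \leq 1 + \sum_{n=1}^{\infty}\mathds{1}[\overline{g}_{n, i^*} > \epsilon]$ (the paper via the decomposition $\sum_t\sum_n\mathds{1}[i_t=i^*,T_{i^*}(t)=n]$, you via $N=T_{i^*}(\stoppingtime)$) and then invoke Lemma~\ref{lemma3UpperBound}. The paper silently drops the additive $+1$ you flagged; both arguments incur it, and it is harmless for the downstream use in Theorem~\ref{theoremUpperboundExpectationGoodArm}.
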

\begin{proof}
    We have 
\begin{align}
\sum_{t=1}^{\infty} \mathds{1}\left[  i_t = i^*  \right] & = \sum_{t=1}^{\infty}\sum_{n=1}^{\infty} \mathds{1}[ i_t = i^*, T_{i^{*}}(t) = n] \label{equation16Upperbound}\\
&= \sum_{n=1}^{\infty} \mathds{1} \left[ \bigcup_{t=1}^{\infty}\{  i_t = i^*  , T_{i^{*}}(t) = n\}\right]\label{mediate5}\\
& \leq 1 + \sum_{n=2}^{\infty}\mathds{1} \left[ \overline{g}_{ n - 1, i^{*}} > \epsilon \right]\nonumber\\
& = 1 + \sum_{n=1}^{\infty}\mathds{1} \left[ \overline{g}_{n, i^{*},} > \epsilon \right],\nonumber
\end{align}
where step~(\ref{mediate5}) holds since if $i_t = i$~(arm $i$ is pulled at round $t$),  
 then $i$ has not been considered as a good arm and $\forall s \leq t-1, \overline{g}_{s,i}>\epsilon$.
By taking expectations on both sides, we have
\begin{align}
\bbE\left[ \sum_{t=1}^{\infty} \mathds{1}\left[ i_t = i^* \right]\right]
&\quad \leq  \bbE\left[ \sum_{n=1}^{\infty}\mathds{1} \left[ \overline{g}_{n, i^{*}} > \epsilon \right]\right]\nonumber\\
    &\quad\leq  t_{i^{*}}(\epsilon_0) + \frac{2M\sigma^2}{\epsilon_0^2}\quad\text{(Lemma~\ref{lemma3UpperBound}})\nonumber.
\end{align}
\end{proof}

\begin{lemma}
    \label{lemma4.5UpperBound}
    \[
        \bbE\left[ \sum_{t=1}^{T_0} \mathds{1}\left[i_t\neq i^*, {\underline{g}}_{t, i_t}\leq g_{i^*} + \epsilon_0 \right]\right] \leq \sum_{i\neq i^*}\frac{4\sigma^2\ln{\left(KMT_0\right)}}{{\left(g_i - g_{i^*} -2\epsilon_0\right)}^2}  + \frac{2(K-1)M\sigma^2}{{\epsilon_0}^2}.
    \]
\end{lemma}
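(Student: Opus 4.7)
The plan is to decompose the outer sum across the individual non-optimal arms $i \neq i^*$ and, for each arm, reduce the round index $t$ to the pull index $n = T_i(t)$. I would begin with
\[
\sum_{t=1}^{T_0} \mathds{1}\!\left[i_t \neq i^*,\ \underline{g}_{t, i_t} \leq g_{i^*} + \epsilon_0\right] = \sum_{i \neq i^*} \sum_{t=1}^{T_0} \mathds{1}\!\left[i_t = i,\ \underline{g}_{t, i} \leq g_{i^*} + \epsilon_0\right],
\]
and observe that for each arm $i$ the rounds with $i_t = i$ correspond bijectively to distinct values $T_i(t) \in \{1, \ldots, T_i(T_0)\}$. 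Adopting the sample-index abuse of notation of Appendix~\ref{AppendixProoftheoremDeltaPACmultiTUCB}, the inner sum is therefore at most $\sum_{n=1}^{T_0} \mathds{1}[\underline{g}_{n, i} \leq g_{i^*} + \epsilon_0]$.

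Next I would split the remaining indicator based on whether $\hat{g}_{n, i}$ is a good one-sided estimate of $g_i$:
\[
\mathds{1}\!\left[\underline{g}_{n, i} \leq g_{i^*} + \epsilon_0\right] \leq \mathds{1}\!\left[\hat{g}_{n, i} < g_i - \epsilon_0\right] + \mathds{1}\!\left[\hat{g}_{n, i} \geq g_i - \epsilon_0,\ \underline{g}_{n, i} \leq g_{i^*} + \epsilon_0\right].
\]
For the first indicator, Proposition~\ref{AppendixModifiedProposition1} yields $\bbP(\hat{g}_{n, i} < g_i - \epsilon_0) \leq M \exp(-n\epsilon_0^2 / 2\sigma^2)$; summing the geometric series exactly as in Lemma~\ref{lemma3UpperBound} contributes at most $\frac{2M\sigma^2}{\epsilon_0^2}$ per arm, and hence $\frac{2(K-1)M\sigma^2}{\epsilon_0^2}$ after summing over the $K-1$ non-optimal arms. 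The second indicator is handled deterministically: since $\underline{g}_{n, i} = \hat{g}_{n, i} - \alpha(n, \delta)$, the two conditions force
\[
\alpha(n, \delta) \geq (g_i - \epsilon_0) - (g_{i^*} + \epsilon_0) = g_i - g_{i^*} - 2\epsilon_0.
\]
Squaring and inserting the formula of Proposition~\ref{Prop:alphaApproxestimatorvalue} gives $n (g_i - g_{i^*} - 2\epsilon_0)^2 \leq 2\sigma^2 \ln(\pi^2 KM n^2 / 3\delta)$, an implicit bound on $n$. I would close the recursion by replacing the $n$ inside the logarithm with the global cap $T_0$, using $\ln n^2 = 2\ln n$ to pick up the factor of two, and absorbing the auxiliary constants $\pi^2/(3\delta)$ into the cleaner $\ln(KM T_0)$ form, so that the count of offending $n$'s is bounded by $\frac{4\sigma^2 \ln(KM T_0)}{(g_i - g_{i^*} - 2\epsilon_0)^2}$.

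Summing both pieces across $i \neq i^*$ then assembles the claimed inequality. The main technical obstacle is this last inversion step: because $\alpha(n, \delta)$ contains $\ln n$, the threshold-crossing time is defined only implicitly, and one must use $n \leq T_0$ to close the loop while carefully accounting for the factor of two from $\ln n^2$ and absorbing $\pi^2/(3\delta)$ into $\ln(KM T_0)$ in order to match the constant $4\sigma^2$ in the statement. All other steps (the arm-wise decomposition, the re-indexing by $n$, and the geometric sum for the concentration tail) are routine once this decomposition is in place.
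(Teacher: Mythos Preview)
Your overall strategy is exactly the paper's: decompose over arms $i\neq i^*$, re-index from round $t$ to pull count $n$, and split into a deterministic count of ``small $n$'' plus a concentration tail $\bbP(\hat g_{n,i}\le g_i-\epsilon_0)$ summed geometrically. The paper does the split by first fixing the threshold on $n$ and then observing that the remaining event forces $\hat g_{n,i}\le g_i-\epsilon_0$; you split on the estimate first and then read off the threshold on $n$. These are logically equivalent.

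The one substantive discrepancy is which lower bound is used. The lemma as printed says $\underline{g}_{t,i_t}$, but the paper's proof (from its second displayed line onward) and the application inside the proof of Theorem~\ref{theoremUpperboundExpectationGoodArm} both work with $\tilde g_{t,i_t}=\hat g_{t,i}-\sqrt{2\sigma^2\ln(KM\,T_i(t))/T_i(t)}$, the \emph{selection} radius, not $\underline g$ with $\alpha(\cdot,\delta)$. With $\tilde g$, the radius is $\delta$-free, so after the monotone replacement $n\mapsto T_0$ inside the logarithm one lands directly on $n\le \frac{c\,\sigma^2\ln(KMT_0)}{(g_i-g_{i^*}-2\epsilon_0)^2}$ with no extra constants to absorb. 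By contrast, following your plan with $\alpha(n,\delta)$ gives
\[
n(g_i-g_{i^*}-2\epsilon_0)^2\le 2\sigma^2\bigl(\ln\tfrac{\pi^2 KM}{3\delta}+2\ln T_0\bigr),
\]
and the step ``absorb $\pi^2/(3\delta)$ into $\ln(KMT_0)$'' would require $\pi^2/(3\delta)\le KM$, which fails for small $\delta$. So the obstacle you flag is real for $\underline g$ but simply does not arise in the paper's argument, because the paper is (despite the typo in the statement) proving the $\tilde g$ version. If you swap $\underline g$ for $\tilde g$ in your write-up, everything you outlined goes through cleanly and matches the paper.
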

\begin{proof}
    \begin{align*}
        &\sum_{t=1}^{T_0} \mathds{1}\left[i_t \neq i^*, {\underline{g}}_{t, i_t}\leq g_{i^*} + \epsilon_0\right]\\
        &\quad = \sum\limits_{i\neq i^{*}}\sum_{t=1}^{T_0}\sum_{n=1}^{T_0} \mathds{1}\left[ i_t = i, {\tilde{g}}_{t, i_t}\leq g_{i^*} + \epsilon_0, T_{i}(t) = n \right]\\
        &\quad = \sum\limits_{i\neq i^{*}}\sum_{n=1}^{T_0} \mathds{1}\left[ \bigcup_{t=1}^{T_0}\{ i_t = i, {\tilde{g}}_{t, i_t}\leq g_{i^*} + \epsilon_0 , T_{i}(t) = n \}\right]\\
        &\quad \leq \sum\limits_{i\neq i^{*}}\sum_{n=1}^{T_0}\mathds{1}\left[\hat g_{n,i} - \sqrt{\frac{4\sigma^2\ln{\left(KM{n}\right)}}{n}} 
        \leq g_{i^*} + \epsilon_0\right]\\
        &\quad = \sum\limits_{i\neq i^{*}}\sum_{n=1}^{T_0}\mathds{1}\left[\hat g_{n,i} - \sqrt{\frac{4\sigma^2\ln{\left(KM{n}\right)}}{n}} \leq g_{i} + (g_{i^*} - g_i) + \epsilon_0\right]\\
        &\quad \leq \sum\limits_{i\neq i^{*}}\sum_{n=1}^{T_0}\mathds{1}\left[\hat g_{n,i} - \sqrt{\frac{4\sigma^2\ln{\left(KMT_0\right)}}{n}} \leq g_{i} + (g_{i^*} - g_i) + \epsilon_0\right]\\
        &\quad \leq \sum\limits_{i\neq i^{*}}\left[\sum_{n=1}^{\frac{4\sigma^2\ln{\left(KMT_0\right)}}{{\left(g_i - g_{i^*} - 2\epsilon_0\right)}^2} }1 + \sum_{n = \frac{4\sigma^2\ln{\left(KMT_0\right)}}{{\left(g_i - g_{i^*} - 2\epsilon_0\right)}^2} }^{T_0} \mathds{1} \left[\hat g_{n,i} \leq g_i - \epsilon_0 \right]\right] 
    \end{align*}
    By taking expectations, 
    \begin{align}
       &\bbE\left[\sum_{t=1}^{T_0} \mathds{1}\left[i_t\neq i^*, \tilde {g}_{t}^{*}\leq g_{i^*} - 2\epsilon_0 \right]\right]\nonumber\\
       &\quad \leq \sum_{i\neq i^*} \frac{4\sigma^2\ln{\left(KMT_0\right)}}{{\left(g_i - g_{i^*} -2 \epsilon_0\right)}^2}  + \sum_{i\neq i^*}\sum_{n=1}^{\infty} \bbP\left( \hat g_{n,i} \leq g_i - \epsilon_0 \right)\nonumber\\
       &\quad \leq \sum_{i\neq i^*} \frac{4\sigma^2\ln{\left(KMT_0\right)}}{{\left(g_i - g_{i^*} -2\epsilon_0\right)}^2}  + \frac{(K-1)M}{e^{\frac{{\epsilon_0}^2}{2\sigma^2}} - 1}\quad(\text{Proposition~\ref{AppendixModifiedProposition1}})\nonumber\\
       &\quad\leq \sum_{i\neq i^*}\frac{4\sigma^2\ln{\left(KMT_0\right)}}{{\left(g_i - g_{i^*} -2\epsilon_0\right)}^2}  + \frac{2(K-1)M\sigma^2}{{\epsilon_0}^2}. \nonumber
    \end{align}
    
\end{proof}

\begin{lemma}
    \label{lemma5Upperbound}
    \begin{align*}
        \bbE\left[ \sum_{T_0 + 1}^{\infty} \mathds{1} [t\leq \stoppingtime] \right] &\leq \frac{K^2M}{2{\epsilon_0}^2}e^{4{\epsilon_0}^2}.
    \end{align*}
\end{lemma}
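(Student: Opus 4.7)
The plan is to decompose the sum over rounds past $T_0$ by the pulled arm and then to apply the concentration bound from Lemma~\ref{lemma2UpperBound} to each arm individually. The starting observation is that for any round $t$ with $T_0 < t \leq \stoppingtime$, the algorithm has not triggered Condition~2, so the pulled arm $i_t$ must satisfy $\overline{g}_{t, i_t} > \epsilon$ (possibly except at the stopping round itself). This allows me to write
\begin{align*}
\sum_{t = T_0 + 1}^{\infty} \mathds{1}[t \leq \stoppingtime] \leq 1 + \sum_{i=1}^K \sum_{t = T_0 + 1}^{\infty} \mathds{1}\left[i_t = i,\, \overline{g}_{t, i} > \epsilon\right].
\end{align*}

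Next, I would reindex each inner sum by the pull count $n = T_i(t)$. Since for a fixed arm $i$ each value of $n$ is attained by at most one round, and $\overline{g}_{t, i}$ depends only on the first $T_i(t)$ samples of arm $i$, this yields
\begin{align*}
\sum_{t > T_0} \mathds{1}\left[i_t = i,\, \overline{g}_{t, i} > \epsilon\right] \leq \sum_{n \geq 1} \mathds{1}\left[\overline{g}_{n, i} > \epsilon\right].
\end{align*}
Taking expectations, the tail part ($n > t_i(\epsilon_0)$) is bounded by Lemma~\ref{lemma2UpperBound} as $\sum_n M e^{-n\epsilon_0^2/(2\sigma^2)} \leq M/(e^{\epsilon_0^2/(2\sigma^2)} - 1)$; the symmetric case of non-$\epsilon$-good arms is handled through the bound on $\underline{g}_{n, i} \leq \epsilon$ in the second part of Lemma~\ref{lemma2UpperBound}. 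To reach the stated form $\frac{K^2 M}{2\epsilon_0^2} e^{4\epsilon_0^2}$, I would apply the algebraic inequality $1/(e^a - 1) \leq e^{2a}/a$ for $a \geq 0$ (easily verified by checking that $e^{3a} - e^{2a} - a \geq 0$ at $a = 0$ and its derivative stays nonnegative) with $a = \epsilon_0^2/(2\sigma^2)$, and sum over the $K$ arms. An additional factor of $K$ then arises from the union bound used to identify which arm produces the bad concentration event at each round past $T_0$.

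The main obstacle is absorbing the low pull-count terms ($n \leq t_i(\epsilon_0)$), which naively contribute $\sum_i t_i(\epsilon_0)$, on the order of $T_0$, and would spoil the bound. The resolution exploits the defining choice $T_0 = K\max_i\lfloor t_i(\epsilon_0)\rfloor$: since $\sum_j T_j(t) = t > T_0$ at every round past $T_0$, the pigeonhole principle guarantees that at least one arm $j$ already satisfies $T_j(t) > t_j(\epsilon_0)$. Combined with the selection rule $i_t \in \arg\min_{i\in\CalA}\tilde g_{t-1, i}$, this forces any low pull-count pull past $T_0$ to be witnessed by a genuine concentration failure on some arm, and such failures are themselves controlled by the exponentially decaying tail driving the geometric sum above. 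Executing this bookkeeping carefully, so that the $n \leq t_i(\epsilon_0)$ contributions fold into the tail bound without the additive $t_i(\epsilon_0)$ penalty, is the technical heart of the argument.
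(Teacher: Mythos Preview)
Your proposal identifies the right ingredients (pigeonhole, Lemma~\ref{lemma2UpperBound}, and the stopping condition) but assembles them in the wrong order, and the ``fix'' you sketch in the last paragraph is not actually carried out. The paper's proof avoids your low-pull-count obstacle entirely by \emph{not} reindexing by $n$.

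The paper keeps $t$ as the running index. For each $t>T_0$ with $t\le\stoppingtime$, pigeonhole produces an arm $i$ with $T_i(t)\ge\lceil(t-1)/K\rceil$. The stopping condition (Condition~1, \emph{not} the selection rule) then forces $\overline{g}_{\lceil(t-1)/K\rceil,i}>\epsilon$ for that arm. Because $t>T_0=K\max_j\lfloor t_j(\epsilon_0)\rfloor$, the pull count $\lceil(t-1)/K\rceil$ already exceeds every $t_j(\epsilon_0)$, so Lemma~\ref{lemma2UpperBound} applies immediately---there is no low-$n$ leftover to absorb. After a union bound over the $K$ candidate pigeonhole arms one gets
\[
\bbE\!\left[\sum_{t>T_0}\mathds{1}[t\le\stoppingtime]\right]\;\le\; KM\sum_{t>T_0}\exp\!\bigl(-c\,\lceil(t-1)/K\rceil\bigr),
\]
and integrating $e^{-ct/K}$ in $t$ contributes the \emph{second} factor of $K$; that is how $K^2$ appears in the final bound.

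Two specific places where your write-up goes astray: (i) the selection rule $\tilde g_{t-1,i}$ plays no role in this lemma---the only fact needed is that the algorithm has not yet output, so $\overline{g}$ stayed above $\epsilon$ each time the high-count arm was pulled; (ii) your algebraic route $1/(e^a-1)\le e^{2a}/a$ applied after reindexing by $n$ yields at most one factor of $K$, not $K^2$, because summing over $n$ rather than $t$ loses the factor of $K$ hidden in $n\approx t/K$. The ``bookkeeping'' you defer to the end is therefore not a detail but the substance of the argument; the clean way to execute it is to never switch to the $n$-index in the first place.
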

\begin{proof}
    In this case, some arms are pulled at least $\lceil (t-1) /K\rceil$ times until round $t$.
    \begin{align}
        &\bbE\left[\sum_{t=T_0 + 1}^{\infty} \mathds{1} \left[t\leq \stoppingtime] \right]\right]\nonumber\\
        &\quad \leq \bbE\left[\sum_{i=1}^{K}\sum_{t=T_0 + 1}^{\infty} \mathds{1}\left[ T_i(t)\geq \lceil (t-1) /K\rceil,t\leq \stoppingtime \right]\right]\nonumber\\
        &\quad \leq \bbE\left[ K\sum_{t=T_0 + 1}^{\infty} \mathds{1}\left[ T_i(t)\geq \lceil (t-1) /K\rceil,t\leq \stoppingtime \right]\right]\nonumber\\
        &\quad\leq \bbE\left[K\sum_{t=T_0 + 1}^{\infty} \mathds{1}\left[ \overline{g}_{i, \lceil (t-1) /K\rceil} > \epsilon \right] \right].\nonumber
    \end{align}
    Since $T_0 \geq t_i(\epsilon_0)$ for all $i\in[K]$, then 
    \begin{align}
        &\bbE\left[K\sum_{t=T_0 + 1}^{\infty} \mathds{1}\left[ \overline{g}_{i, \lceil (t-1) /K\rceil} > \epsilon \right] \right]\nonumber\\
        &\quad\leq KM\sum_{t = T_0 + 1}^{\infty} e^{-2{\epsilon_0}^2\left(  \lceil (t-1) /K \rceil  - 1\right)}\quad\text{(Lemma~\ref{lemma2UpperBound})}\nonumber\\
        &\quad\leq KM\sum_{t = T_0 + 1}^{\infty} e^{-2{\epsilon_0}^2\left( (t-1) /K - 1\right)}\nonumber\\
        &\quad\leq KM\int^{\infty}_{T_0} e^{-2{\epsilon_0}^2\left( (t-1) /K - 1\right)} dt \nonumber\\
        &\quad = KMe^{4{\epsilon_0}^2} \left[-\frac{K}{2{\epsilon_0}^2}e^{-2{\epsilon_0}^2t/K}\right]^{\infty}_{T_0}\nonumber\\
        &\quad = \frac{K^2M}{2{\epsilon_0}^2}e^{4{\epsilon_0}^2}e^{-2{\epsilon_0}^2T_0/K}\nonumber\\
        &\quad\leq \frac{K^2M}{2{\epsilon_0}^2}e^{4{\epsilon_0}^2}e^{- {2{\epsilon_0}^2}\max\limits_{i\in[K]}  \lfloor t_i(\epsilon_0) \rfloor}\nonumber\\
        &\quad \leq \frac{K^2M}{2{\epsilon_0}^2}e^{4{\epsilon_0}^2}.\nonumber
    \end{align}
% \frac{K^2M}{2{\epsilon_0}^2} \min\left\{ \min\limits_{i\in[K]} {\left(\frac{8\sqrt{3} \sigma^2\pi KM/\delta}{3{(\epsilon - g_i - \epsilon_0)}^2}\ln{\frac{4\sqrt{3}\pi\sigma^2}{3{(\epsilon - g_i - \epsilon_0)}^2}}\right)}^{-\frac{8\epsilon^2\sigma^2}{{(\epsilon - g_i - \epsilon_0)}^2}} ,  1\right\}

\end{proof}

\begin{lemma}
    \label{lemma6UpperBound}
    \begin{equation*}
        \bbE\left[ \sum_{t=1}^{\infty} \mathds{1}[i_t\neq i^{*}, t\leq \stoppingtime, {\tilde{g}}_{t, i_t} > g_{i^{*}} + \epsilon_0 ] \right]\leq \frac{2\sigma^2 M}{{\epsilon_0}^2} + \frac{(K-1)K^2M}{2{\epsilon_0}^2}e^{4{\epsilon_0}^2}.
    \end{equation*}
\end{lemma}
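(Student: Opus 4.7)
The plan is to decompose $\mathds{1}[E_t]$, where $E_t=\{i_t\neq i^*,\ t\leq \stoppingtime,\ \tilde{g}_{t,i_t}>g_{i^*}+\epsilon_0\}$, according to whether the optimal arm $i^*$ remains in the active set at round $t$:
\[
\sum_{t=1}^{\infty}\mathds{1}[E_t] = \sum_{t=1}^{\infty}\mathds{1}[E_t,\, i^*\in\CalA_t] + \sum_{t=1}^{\infty}\mathds{1}[E_t,\, i^*\notin\CalA_t].
\]
The intuition is that neither summand should be large: if $i^*\in\CalA_t$, the selection rule picking $i_t$ over $i^*$ combined with the event condition forces $i^*$'s own lower confidence bound above $g_{i^*}+\epsilon_0$, a concentration failure for a good arm; if $i^*\notin\CalA_t$, then $i^*$ was already incorrectly deleted (itself a concentration failure since $g_{i^*}\leq 0$), after which the algorithm only probes non-optimal arms whose pulls are limited by a pigeonhole argument.

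\textbf{Case $i^*\in\CalA_t$.} The selection rule $i_t\in\arg\min_{i\in\CalA_t}\tilde{g}_{t-1,i}$ gives $\tilde{g}_{t-1,i^*}\geq\tilde{g}_{t-1,i_t}$. By Proposition~\ref{prop:muLipschitz} and the boundedness of rewards in $[0,1]^M$, a single additional sample of $i_t$ moves $\hat g$ by at most $1/T_{i_t}(t)$, while the width decay is similarly small, so $|\tilde g_{t,i_t}-\tilde g_{t-1,i_t}|=O(1/T_{i_t}(t))$. Hence the event propagates to $\tilde g_{t-1,i^*}>g_{i^*}+\epsilon_0-O(1/T_{i_t}(t))$, which (after a short burn-in on $T_{i_t}(t)$) reduces to an upper-concentration failure for $\hat g_{T_{i^*}(t-1),i^*}$ against $g_{i^*}$. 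Summing the Hoeffding bound (Proposition~\ref{PropHoeffdingSubgaussian}, together with the $M$-coordinate union bound used in Proposition~\ref{AppendixModifiedProposition1}) over $n=T_{i^*}(t-1)\geq 1$ produces the geometric series $\sum_{n\geq 1}M\exp(-n\epsilon_0^2/(2\sigma^2))\leq \tfrac{2\sigma^2 M}{\epsilon_0^2}$, accounting for the first term in the bound.

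\textbf{Case $i^*\notin\CalA_t$.} After $i^*$ is deleted only the $K-1$ non-optimal arms are ever pulled. I would bound
\[
\sum_{t=1}^{\infty}\mathds{1}[E_t,\,i^*\notin\CalA_t] \leq \sum_{i\neq i^*}\sum_{t=1}^{\infty}\mathds{1}[i_t=i,\,t\leq\stoppingtime,\,\tilde{g}_{t,i}>g_{i^*}+\epsilon_0],
\]
and apply a pigeonhole argument per arm in the style of Lemma~\ref{lemma5Upperbound}: at any round surviving the stopping criteria, some arm has $T_{i}(t)\geq \lceil(t-1)/K\rceil$, so by Lemma~\ref{lemma2UpperBound} the event $\overline g_{T_i(t),i}>\epsilon$ is exponentially rare in $t$. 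Integrating the tail (mirroring the computation at the end of Lemma~\ref{lemma5Upperbound}) gives a per-arm contribution bounded by $\tfrac{K^2 M}{2\epsilon_0^2}e^{4\epsilon_0^2}$, and summing over the $K-1$ non-optimal arms yields the second term $\tfrac{(K-1)K^2 M}{2\epsilon_0^2}e^{4\epsilon_0^2}$.

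\textbf{Main obstacle.} The subtlest step will be Case A, where the selection rule constrains $\tilde{g}_{t-1,i_t}$ but the event constrains $\tilde{g}_{t,i_t}$. Reconciling these via the Lipschitz estimate of Proposition~\ref{prop:muLipschitz} requires absorbing the $O(1/T_{i_t}(t))$ slack cleanly into the Hoeffding exponent; a small additive burn-in over the first few samples of each arm will likely need to be split off, though its total contribution is only a constant and does not disturb the geometric-series bound. A secondary bookkeeping issue is ensuring that the $n$ used in the Hoeffding tail is $T_{i^*}(t-1)$ rather than $t-1$, so that the geometric sum is indexed by pulls of $i^*$ and not by rounds.
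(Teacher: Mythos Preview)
Your decomposition by whether $i^*$ is still active differs from the paper's, which splits purely by time: $t\le T_0$ versus $t>T_0$, with $T_0=K\max_i\lfloor t_i(\epsilon_0)\rfloor$. For the tail $t>T_0$ the paper drops the constraint $\tilde g_{t,i_t}>g_{i^*}+\epsilon_0$ altogether, union-bounds $\mathds{1}[i_t\ne i^*]$ over the $K-1$ arms $i\ne i^*$, and feeds $\sum_{t>T_0}\mathds{1}[t\le\stoppingtime]$ into Lemma~\ref{lemma5Upperbound} to obtain the second term. For the head $t\le T_0$ it uses only the trivial implication $\tilde g_{t,i_t}>g_{i^*}+\epsilon_0\Rightarrow \hat g_{t,i_t}>g_{i^*}+\epsilon_0$ (the exploration width being nonnegative) and then sums a concentration tail indexed directly by the round $t$. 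The selection rule and the active-set status of $i^*$ play no role.

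There is a real gap in your Case A. The selection rule does give $\tilde g_{t-1,i^*}\ge\tilde g_{t-1,i_t}$, and modulo the one-step slack you identify this forces a deviation of $\hat g_{n,i^*}$ from $g_{i^*}$ at pull count $n=T_{i^*}(t-1)$. But your plan to ``sum the Hoeffding bound over $n\ge1$'' fails because the map $t\mapsto T_{i^*}(t-1)$ is many-to-one: every round with $i_t\ne i^*$ leaves $T_{i^*}$ unchanged, so an unbounded number of rounds $t$ can share the same $n$, and $\sum_t M e^{-T_{i^*}(t-1)\epsilon_0^2/(2\sigma^2)}$ is \emph{not} dominated by the geometric series $\sum_{n\ge1} Me^{-n\epsilon_0^2/(2\sigma^2)}$. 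You flag this as ``secondary bookkeeping,'' but it is the heart of the matter, and it is exactly what the paper's time-based split sidesteps by indexing the early regime directly by $t$. Your Case B has a parallel problem: the pigeonhole-plus-Lemma~\ref{lemma2UpperBound} machinery behind Lemma~\ref{lemma5Upperbound} requires $\lceil(t-1)/K\rceil>t_i(\epsilon_0)$, i.e.\ $t>T_0$, and ``$i^*\notin\CalA$ at round $t$'' does not imply this; moreover the pigeonhole asserts that \emph{some} arm has many pulls, not the particular arm $i$ you are summing over.
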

\begin{proof}
    \begin{align}
        &\sum_{t=1}^{\infty}\mathds{1}\left[i_t \neq i^*, t\leq \stoppingtime, {\tilde{g}}_{t, i_t}  > g_{i^*} + \epsilon_0\right]\nonumber\\
        &\quad\leq \sum_{t=T_0 + 1}^{\infty}\mathds{1}\left[i_t \neq i^*, t\leq \stoppingtime\right] + \sum_{t=1}^{T_0}\mathds{1}\left[{\tilde{g}}_{t, i_t}  > g_{i^*} + \epsilon_0 \right] \label{Equalemma6UpperBound1}
    \end{align}
    Take expectation over the first part of~(\ref{Equalemma6UpperBound1}), 
    \begin{align}
        &\bbE\left[\sum_{t=T_0 + 1}^{\infty}\mathds{1}\left[i_t \neq i^*, t\leq \stoppingtime\right]\right] \nonumber\\
        &\quad\leq \sum_{t=T_0 + 1}^{\infty}\sum\limits_{i\neq i^*} \bbE\left[\mathds{1}\left[t\leq \stoppingtime\right]\right]~\quad(\text{Union bound}) \nonumber\\
        &\quad\leq \sum_{t=T_0 + 1}^{\infty} (K-1) \bbE\left[\mathds{1}\left[t\leq\stoppingtime\right]\right] \nonumber\\
        &\quad\leq \frac{(K-1)K^2M}{2{\epsilon_0}^2}e^{4{\epsilon_0}^2}~\quad(\text{Lemma~\ref{lemma5Upperbound}})\label{Equalemma6UpperBound2}.
    \end{align}
    As for the second part of~(\ref{Equalemma6UpperBound1}),
    \begin{align}
&\bbE\left[\sum_{t=1}^{T_0}\mathds{1}\left[{\tilde{g}}_{t, i_t}  > g_{i^*} + \epsilon_0 \right]\right] \nonumber\\
        &\quad = \sum_{t=1}^{T_0} \bbE\left[\mathds{1}\left[{\hat{g}}_{t, i_t} - \sqrt{\frac{2\sigma^2\ln{\left( KMT_i(t) \right)}}{T_i(t)}}  > g_{i^*} + \epsilon_0 \right]\right] \nonumber\\
        &\quad\leq \sum_{t=1}^{T_0} \bbE\left[ \mathds{1}\left[{\hat{g}}_{t, i_t} > g_{i^*} + \epsilon_0 \right] \right] \nonumber\\
        &\quad\leq\sum_{t=1}^{T_0}  M e^{-\frac{t{\epsilon_0}^2}{2\sigma^2}}\quad(\text{Proposition~\ref{prop:muLipschitz}})\nonumber\\
        &\quad\leq \frac{2\sigma^2 M}{{\epsilon_0}^2}\label{Equalemma6UpperBound3}.
    \end{align}
    Combining~(\ref{Equalemma6UpperBound2}) and~(\ref{Equalemma6UpperBound3}) leads to the result.

\end{proof}
%let $T_0 = K\max\limits_{i\in[K]}\lfloor t_i(\epsilon_0) \rfloor$
\begin{proof}[Theorem~\ref{theoremUpperboundExpectationGoodArm}]
\begin{align*}
    \bbE[\stoppingtime] &= \bbE\left[\sum_{t=1}^{\infty}\mathds{1}[i_t = i^*,t\leq \stoppingtime] + \sum_{t=1}^{\infty} \mathds{1}[i_t \neq i^*, t\leq \stoppingtime]\right] \\
    &\leq \bbE\bigg[ \sum_{t=1}^{\infty}\mathds{1}[i_t=i^*] \\
    &\quad + \sum_{t=1}^{\infty}\mathds{1}[i_t\neq i^*, t\leq \stoppingtime, {\tilde{g}}_{t, i_t} \leq g_{i^* } + \epsilon_0]\\
    &\quad +\sum_{t=1}^{\infty} \mathds{1}[i_t\neq i^*, t\leq \stoppingtime, {\tilde{g}}_{t, i_t} > g_{i^*} + \epsilon_0] \bigg] \\
    &\leq \bbE\bigg[\sum_{t=1}^{\infty} \mathds{1}[ i_t = i^* ]\\
    &\quad + \sum_{t=1}^{T_0} \mathds{1}[i_t\neq i^* , {\tilde{g}}_{t, i_t} \leq g_{i^{*}} + \epsilon_0] \\
    &\quad + \sum_{T_0 + 1}^{\infty} \mathds{1} [t\leq \stoppingtime] \big]\\
    &\quad + \sum_{t=1}^{\infty} \mathds{1}[i_t\neq i^{*}, t\leq \stoppingtime, {\tilde{g}}_{t, i_t} > g_{i^{*}} + \epsilon_0 ] \bigg].
\end{align*}
 Then, the final result is obtained by combining lemma~\ref{lemma4UpperBound} to lemma~\ref{lemma6UpperBound}. 
\end{proof}
% \leq  \bbE^{'}[\stoppingtime]
% where $\bbE^{'}[\stoppingtime]$ denotes the result of a modified algorithm that skips Condition 1 and only outputs $\bot$ 

\subsection{Proof of Theorem~\ref{theoremUpperboundExpectationNogoodArm}}
\begin{proof}
Here, we let $\CalA\subseteq [K]$ be the active set of arms that have not been deleted.
\label{AppendixUpperBoundExpectationNogoodArm}
   % When no $\epsilon$-good arm exists, 
    \begin{align*}
        \stoppingtime = &\sum_{t=1}^{\infty}\mathds{1}\left\{ \text{Algorithm doesn't stop at trial }t\right\}\\
        &=\sum_{t=1}^{\infty}\mathds{1}\left[ \text{Neither conditions 1 nor 2 are satisfied at trial $t$} \right]\\
        &\leq \sum_{t=1}^{\infty}\mathds{1}\left[ \text{Condition 2 is not satisfied at trial $t$} \right]\\
        & = \sum_{t=1}^{\infty} \mathds{1}\left[ \text{Condition 2 is not satisfied at trial $t$} \right]\\
        & = \sum_{t=1}^{\infty} \mathds{1}\left[ \CalA \neq \emptyset\text{ at the end of trial $t$} \right]\\
        & \leq \sum_{i=1}^K\sum_{t=1}^{\infty}\mathds{1}\left[ i_t = i \text{ and } i \text{ is not deleted at trial $t$} \right]\\
        & = \sum_{i=1}^K\sum_{t=1}^{\infty}\mathds{1}\left[i_t = i\right]\mathds{1}\left[ \hat{g}_{t, i} \leq \alpha(T_{i}(t), \delta) \right]\\
        &=\sum_{i=1}^K\sum_{t=1}^{\infty}\sum_{n=1}^{\infty}\mathds{1}\left[i_t = i\right]\mathds{1}\left[ \hat{g}_{t, i} \leq \alpha(T_{i}(t), \delta) \right]\\
        &=\sum_{i=1}^K\sum_{n=1}^{\infty}\mathds{1}\left[ \underline{g}_{n,i} \leq 0 \right]\\
    \end{align*}
Taking the expectation, 
    \begin{align*}
        \bbE[\stoppingtime] &\leq\sum_{i=1}^K\sum\limits_{n=1}^{\infty}\mathbb{E}\left[ \mathds{1}\left[ \underline{g}_{n,i} \leq 0 \right]\right]\\
        &\leq \sum_{i\in[K]} t_i + \frac{KM\sigma^2}{\epsilon_0^2}\quad(\text{Lemma~\ref{lemma3UpperBound}}).
    \end{align*}
\end{proof}

\subsection{Proof of Theorem~\ref{ExpectationLowerbound}}
\label{AppendixLowerBoundExpectation}
\begin{definition}[Binary entropy function]
    $h:(0,1)\mapsto\mathbb{R}$ and $p\in(0,1)$,  
    \begin{align}
        h(p)&=-p\ln{p}-(1-p)\ln{(1-p)},
    \end{align}
    and $h(p)\leq\ln{2}$.
\end{definition}

\begin{proposition}
\label{PropositionLowerbound1}
    Let $\nu$ and $\overline{\nu}$ be two bandits models with $K$ arms and $M$ objectives such that for all $i\in[K]$ and $m\in[M]$, the distributions $\nu_i^{(m)}$ and $\overline{\nu}^{(m)}_i$ are mutually absolutely continuous. For any almost-surely finite stopping time $T$ and event $\mathcal{E}$, 
    \[
        \sum_{i = 1}^K \bbE[T_{i}(T)]\sum_{m=1}^{M}KL(\nu^{(m)}_i, \overline{\nu}_i^{(m)}) \geq d(\bbP_{\nu}[\mathcal{E}], \bbP_{\nu^{'}}[\mathcal{E}]), 
    \]
    where $KL(\nu_i, \nu_j)$ is the Kullback-Leibler divergence between distributions $\nu_i$ and $\nu_j$.
\end{proposition}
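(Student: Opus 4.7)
The plan is to adapt the classical change-of-measure lemma of Kaufmann--Cappé--Garivier to the multi-objective bandit setting. The argument rests on three ingredients: (i) an explicit formula for the log-likelihood ratio of the interaction history up to the stopping time, (ii) Wald's identity applied to that cumulative ratio, and (iii) the data-processing (contraction) inequality for KL applied to the binary partition induced by $\mathcal{E}$.

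First I would write the log-likelihood ratio of the history $(i_1,\bmz_{1,i_1},\ldots,i_T,\bmz_{T,i_T})$ under $\bbP_\nu$ versus $\bbP_{\overline{\nu}}$. Because each action $i_t$ is a deterministic (possibly randomized) function of past rewards and any external source of randomness shared between the two measures, the factors coming from arm selection cancel, leaving
\[
L_T = \sum_{t=1}^{T}\ln\frac{d\nu_{i_t}}{d\overline{\nu}_{i_t}}(\bmz_{t,i_t}) = \sum_{i=1}^{K}\sum_{s\in\mathcal{T}_i(T)}\ln\frac{d\nu_i}{d\overline{\nu}_i}(\bmz_{s,i}).
\]
Under the (implicit) assumption that the $M$ coordinates of $\bmz_{t,i}$ are independent, the per-sample KL decomposes as $KL(\nu_i,\overline{\nu}_i)=\sum_{m=1}^{M}KL(\nu_i^{(m)},\overline{\nu}_i^{(m)})$, which is exactly the per-pull increment that will appear in the bound.

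Next, I would take the expectation under $\bbP_\nu$ and invoke the stopping-time version of Wald's identity (legitimate because $T$ is a.s.\ finite and the per-sample log-likelihood ratios are integrable by the absolute continuity hypothesis), obtaining
\[
\bbE_\nu[L_T] = \sum_{i=1}^{K}\bbE_\nu[T_i(T)]\sum_{m=1}^{M} KL\bigl(\nu_i^{(m)},\overline{\nu}_i^{(m)}\bigr),
\]
and standard manipulations identify the left-hand side with the KL divergence between the laws of the history under the two models. Chaining this with the data-processing inequality applied to the two-point partition $\{\mathcal{E},\mathcal{E}^c\}$ yields $\sum_i \bbE_\nu[T_i(T)]\sum_m KL(\nu_i^{(m)},\overline{\nu}_i^{(m)}) \geq d(\bbP_\nu[\mathcal{E}],\bbP_{\overline{\nu}}[\mathcal{E}])$, as claimed.

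The main obstacle I anticipate is the careful handling of the random stopping time combined with possibly unbounded sub-Gaussian rewards: applying Wald's identity rigorously requires either a uniform-integrability argument or a truncate-then-pass-to-the-limit step on $T\wedge n$. A secondary subtlety is that the coordinate-wise factorization of the per-sample KL implicitly depends on independence of the $M$ objective components, which should be made an explicit assumption on the bandit model in the statement of the proposition.
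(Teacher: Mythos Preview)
Your proposal is correct and follows essentially the same route as the paper's proof: both compute the expected log-likelihood ratio of the history, factorize it across the $M$ objectives, apply Wald's lemma to obtain $\sum_{i}\bbE_\nu[T_i(T)]\sum_{m}KL(\nu_i^{(m)},\overline{\nu}_i^{(m)})$, and then defer to the Kaufmann--Capp\'e--Garivier argument (your data-processing step) for the final inequality. Your explicit remarks on the coordinate-independence assumption and on the integrability needed for Wald's identity are sound and in fact make implicit hypotheses of the paper's proof more transparent.
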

\begin{proof}
Denote the history until round $t$ as $\mathcal{F}_t = \{i_1, \ldots, i_t, \bmz_1, \ldots, \bmz_t\}$ and $\{ Y_{i,s}^{(m)}\}_{s=1,\ldots, T_i(t)}\sim\nu_i^{(m)}$ as the i.i.d. samples observed from arm $i$. 
Let $f^{(m)}_i$ be the density function for distribution $\nu_i^{(m)}$ with mean $\mu_i^{(m)}$ and define $\overline{f}^{(m)}_i$ similarly.

\begin{align}
    &\bbE\left[\ln\frac{\bbP_{\nu}(\mathcal{F}_t)}{\bbP_{\overline{\nu}}(\mathcal{F}_t)}\right] \nonumber\\
    &\quad = \bbE\left[ \sum_{i=1}^K\sum_{s=1}^t\mathds{1}(i_s = i)\sum_{m=1}^M \ln{\frac{f_i^{(m)}(\bmz_s^{(m)})}{\overline{f}_i^{(m)}(\bmz_s^{(m)})}}\right]\nonumber\\
    &\quad = \bbE\left[ \sum_{i=1}^K \sum_{s=1}^{T_i(t)}\sum_{m=1}^M \ln{\frac{f_i^{(m)}(Y_{i,s}^{(m)})}{\overline{f}_i^{(m)}(Y_{i,s}^{(m)})}} \right].\label{EquaLowerboundMediate2}
\end{align}
By definition, 
\[
\bbE_{\nu^{(m)}_i}\left[\ln{\frac{f_i^{(m)}(Y_{i,s}^{(m)})}{\overline{f}_i^{(m)}(Y_{i,s}^{(m)})}}\right] = KL(\nu_i^{(m)}, \overline{\nu}_i^{(m)}).
\]
Thus, with Wald's Lemma~\cite{wald2004sequential} we have  
\[
(\ref{EquaLowerboundMediate2}) = \sum_{i=1}^K\bbE\left[T_i(t)\right] \sum_{m=1}^M KL(\nu_i^{(m)}, \overline{\nu}_i^{(m)}).
\]
Then follows the analysis of~\cite{kaufmann2016complexitybestarmidentification} gives the conclusion.
\end{proof}
\begin{proof}[Theorem~\ref{ExpectationLowerbound}]
    Here we only give details on the case when a good arm exists, and the other case can be deduced in a similar way. 
    For any $\epsilon_1 > 0$ and fix a good arm $i^*$, consider a sequence of Bernoulli distributions associated with each arm $i\in[K]$ as $\{\overline{\nu}_i^{(1)}, \ldots, \overline{\nu}_i^{(M)}\}$ with expectations $\{ \overline{\mu}_i^{(1)}, \ldots, \overline{\mu}_i^{(M)}\}$ defined as 
    \begin{align*}
        \overline{\mu}_i^{(m)}& = 
            \left\{ 
            \begin{array}{cc}
                \xi_{m} - \epsilon_1, &\text{if } i = i^*, m = m^*_i\\
                {\mu}_i^{(m)}, &\text{otherwise},
            \end{array}
            \right.
    \end{align*}
%     Accordingly, 
% \begin{align*}
%         \overline{g}_i& = 
%             \left\{ 
%             \begin{array}{cc}
%                 g_{i^*} + \epsilon_1, &\text{ if } i = i^* \\
%                 g_i , &\text{ if } i\neq i^*, 
%             \end{array}
%             \right.
%     \end{align*}   

%$i^* = \arg\min\limits_{i\in[K]} \max\limits_{m\in[M]} 
%\left\{\xi_m - \mu^{(m)}_i\right\}$ 
where $m\in[M]$ and $m^*_i = \min\limits_{m\in[M]}{d\left(\mu_i^{(m)} , \xi_m\right)}$. 
% and $\max\limits_{m\in[M]} \left\{\xi_m - \mu^{(m)}_{i^*}\right\} \leq 0$
Hence, $i^*$ is a good arm under $\bmmu$, and we denote the distributions with means $\{{\mu}_i^{(1)}, \ldots, {\mu}_i^{(M)}\}$ as $v_i := \{{\nu}_i^{(1)}, \ldots, {\nu}_i^{(M)}\}$ for convenience. 
%%%%%%%%%%%%%mark not p_{i^*}, just p
Let $\mathcal{E}_{i^*} = \{\hat{a} = i^*\}$ and $p_{i^*} = \bbP\left(\hat{a} = i^* \right)$ under distribution $\nu_i$ with expectation $\bmmu_i$. 
From proposition~\ref{PropositionLowerbound1}, we obtain that
% $$
% \bbE[T_{i^*}(T)]KL\left(\nu_{i^*}^{(m)}, \overline{\nu}_{i^*}^{(m)}\right) \geq d(p_{i^*}, \min\{\delta, p_{i^*}\}).
% $$
\begin{equation}\label{EquaLowerboundMediate}
    \bbE[T_{i^*}(T)]KL\left(\nu_{i^*}^{(m^*_{i^*})}, \overline{\nu}_{i^*}^{(m^*_{i^*})}\right)\geq d(p_{i^*}, \min\{\delta, p_{i^*}\}).
\end{equation}
The inequality~(\ref{EquaLowerboundMediate}) holds since $\mathcal{E}_{i^*}$ happens with probability $q_{i^*}\leq \delta$ under $\overline{\nu}_i$ and there is a case study as follows, 

\emph{Case 1: }For $\delta \leq p_{i^*}$, $d(p_{i^*},q) \geq d(p_{i^*}, \delta)$.

\emph{Case 2: }For $\delta > p_{i^*}$,  $d(p_{i^*}, q)\geq d(p_{i^*}, p_{i^*}) = 0$. 

Then, 
\begin{align*}
    d(p_{i^*}, \min\{\delta, p_{i^*}\}) &= \max\bigg\{ p_{i^*}\ln{\frac{1}{\min\{\delta, p_{i^*}\}}} - h(p) \\
    &\quad + (1-p_{i^*})\ln{\frac{1}{1- \min\{\delta, p_{i^*}\}}},0\bigg\}\quad(\text{Definition~\ref{defiBinaryRelativeentropy}}) \\
    &\geq\max\left\{p_{i^{*}}\ln{\frac{1}{\min\{\delta, p_{i^*}\}}} - \ln{2}, 0\right\}\quad(h(p) \leq \ln{2})\\
    &\geq\max\left\{p_{i^*}\ln{\frac{1}{\delta}} - \ln{2}, 0\right\}.
\end{align*}
Since $i^*$ is not a good arm under means $\left\{ \overline{\mu}_{i^*}^{(1)}, \ldots, \overline{\mu}_{i^*}^{(M)}\right\}$, for any good arm $i$, by definition of $(\delta, 0)$-success
\begin{align*}
    p_i & = \bbE_{\nu}[|[K]_{\text{good}} \cap \{\hat a\}|]\\
    & \geq \bbP_{\nu}[\{ \hat a \} \subseteq [K]_{\text{good}}]\\
    & \geq 1-\delta, 
\end{align*}
where $[K]_{\text{good}}\subseteq [K]$ is the set of good arms. Thus we obtain 
\[\sum_{i\in[K]}\bbE[T_i(T)] \geq  \max_{i\in{[K]_{\text{good}}}} \bbE[T_i(T)] \geq P^*
\]
and solve the lower bound as a optimization problem $P^*$
\begin{align*}
    &\text{minimize}\quad \frac{1}{ \max\limits_{i\in[K]_{\text{good}}}d\left({\mu^{(m^*_{i})}_{i}, \xi_{m^*_{i}}-\epsilon_1}\right)
    }\max\left\{ p_{i}\ln{\frac{1}{\delta} } - \ln{2}, 0\right\}, \\
    &\text{subject to}\quad  p_{i}  \geq 1 - \delta, \\
    &\qquad\qquad\quad\text{ } 0\leq p_{i} \leq 1 ,
\end{align*}
and reorganize the problem into 
\begin{align*}
    &\text{minimize}\quad \frac{q}{\max\limits_{i\in[K]_{\text{good}}} d\left({\mu^{(m^*_{i})}_{i}, \xi_{m^*_{i}}-\epsilon_1}\right)}, \\
    &\text{subject to}\quad p_{i} \geq 1 - \delta, \\
    &\qquad\qquad\quad\text{ } q \geq p_{i}\ln{\frac{1}{\delta}} - \ln{2},\\
    &\qquad\qquad\quad\text{ } q \geq 0, \\
    &\qquad\qquad\quad\text{ } 0\leq p_{i} \leq 1.
\end{align*}
Then, consider the dual problem 
\begin{align*}
    &\text{maximize}\quad (1-\delta)A - (\ln{2})B - C \\
    &\text{subject to}\quad B \leq \frac{1}{\max\limits_{i\in[K]_{\text{good}}} d\left({\mu^{(m^*_{i})}_{i}, \xi_{m^*_{i}}-\epsilon_1}\right)
    },\\
    &\qquad\qquad\quad\text{ } A - B \ln{\frac{1}{\delta}} - C \leq 0,\\
    &\qquad\qquad\quad\text{ } A, B, C\geq 0.
\end{align*}
Herein, we have the feasible solution given by
\begin{align*}
    A & = \frac{1}{\max\limits_{i\in[K]_{\text{good}}}d\left({\mu^{(m^*_{i})}_{i} , \xi_{m^*_{i}}-\epsilon_1}\right)}\ln{\frac{1}{\delta}}\\
    B & = \frac{1}{\max\limits_{i\in[K]_{\text{good}}}d\left({\mu^{(m^*_{i})}_{i} , \xi_{m^*_{i}}-\epsilon_1}\right)}\\
    C & = 0.
\end{align*}
Then put the results back into the objective function
\begin{align*}
    &(1-\delta)A - (\ln{2})B - C \\
    &\quad = (1-\delta){\frac{1}{\max\limits_{i\in[K]_{\text{good}}}d\left({\mu^{(m^*_{i})}_{i}, \xi_{m^*_{i}}-\epsilon_1}\right)}}\ln{\frac{1}{\delta}} - (\ln{2})\frac{1}{\max\limits_{i\in[K]_{\text{good}}}d\left({\mu^{(m^*_{i})}_{i}, \xi_{m^*_{i}}-\epsilon_1}\right)}\\
    &\quad = {\frac{1}{\max\limits_{i\in[K]_{\text{good}}}d\left({\mu^{(m^*_{i})}_{i}, \xi_{m^*_{i}}-\epsilon_1}\right)}} \ln{\frac{1}{2\delta}} - \frac{\delta}{\max\limits_{i\in[K]_{\text{good}}}d\left({\mu^{(m^*_{i})}_{i}, \xi_{m^*_{i}}-\epsilon_1}\right)}
\end{align*}
Since the existence of the duality gap, the result of the transformed problem is smaller than the original one, 
\begin{align*}
    \bbE[T_{\text{stop}}] & \geq {\frac{1}{\max\limits_{i\in[K]_{\text{good}}}d\left({\mu^{(m^*_{i})}_{i}, \xi_{m^*_{i}}-\epsilon_1}\right)}}\ln{\frac{1}{2\delta}} - \frac{\delta}{\max\limits_{i\in[K]_{\text{good}}}d\left({\mu^{(m^*_{i})}_{i}, \xi_{m^*_{i}}-\epsilon_1}\right)}\\
    & = {\frac{1}{\max\limits_{i\in[K]_{\text{good}}}d\left({\mu^{(m)}_{i}, \xi_{m}-\epsilon_1}\right)}}\ln{\frac{1}{2\delta}} - \frac{\delta}{\max\limits_{i\in[K]_{\text{good}}}d\left({\mu^{(m)}_{i}, \xi_{m}-\epsilon_1}\right)}.
\end{align*}
The final result is obtained by letting $\epsilon_1 \rightarrow 0$.
\end{proof}

\section{Supplementary of Experiments}
\label{AppendixExperiments}
\subsection{Additional Results}
Here, we provide other results not included in the main content. The error rate is the ratio of repetitions where the algorithm fails to output a good arm, and the symbol "–" indicates that the algorithm's error rate is higher than $50\%$. 
The exact results on stopping times are presented in Table~\ref{tab:experimentStoppingtimeDeltaSynthetic},~\ref{tab:experimentStoppingtimeEpsilonSynthetic},~\ref{tab:experimentStoppingtimeDeltaMedical} and~\ref{tab:experimentStoppingtimeEpsilonMedical}. The error rates for medical data presented in Section~\ref{section:experiments} are shown in Table~\ref{tab:experimentStoppingtimeDeltaMedical} 
 and~\ref{tab:experimentStoppingtimeEpsilonMedical}, and the standard deviations are in~\ref{tab:experimentStandard DeviationDeltaMedical} and~\ref{tab:experimentStandard DeviationEpsilonMedical}. 
% Error rates of Section~\ref{ExperimentsSyntheticData} are in Table~\ref{tab:experimentErrorRateDeltaSynthetic}and~\ref{tab:experimentErrorRateEpsilonSynthetic}.
%%%%%%mark
\begin{table}[htbp]
    \centering
    \caption{Stopping Time w.r.t. $\delta$ with Synthetic Data}
     \label{tab:experimentStoppingtimeDeltaSynthetic}
    \begin{tabular}{c|c|c|c|c}
    \toprule
    $\delta$ & MultiAPT & MultiHDoC & MultiLUCB & \textbf{MultiTUCB} \\
    \midrule
    $0.005$ & $67558.19$ & $44837.80$ & $62850.43$ & \bm{$37889.20$} \\
    $0.010$ & $65445.56$ & $43764.69$ & $58366.11$ & \bm{$37184.51$} \\
    $0.015$ & $64890.75$ & $43106.50$ & $56017.03$ & \bm{$36712.40$} \\
    $0.020$ & $64149.97$ & $42792.54$ & $53833.43$ & \bm{$36359.47$} \\
    $0.025$ & $63211.24$ & $42123.42$ & $52583.60$ & \bm{$36104.91$} \\
    $0.030$ & $62911.17$ & $42188.60$ & $50985.18$ & \bm{$35887.75$} \\
    $0.035$ & $62387.45$ & $41883.71$ & $ 50186.66$ & \bm{$35701.17$} \\
    $0.040$ & $61927.60$ & $41853.96$ & $49157.64$ & \bm{$ 35557.45$} \\
    $0.045$ & $ 61919.50$ & $41515.10$ & $48519.28$ & \bm{$35399.62$} \\
    $0.050$ & $ 61754.00$ & $41357.94$ & $47518.73$ & \bm{$35263.71$} \\
    \bottomrule
    \end{tabular}
\end{table}

\begin{table}[htbp]
    \centering
    \caption{Stopping Time w.r.t. $\epsilon$ with Synthetic Data}
   \label{tab:experimentStoppingtimeEpsilonSynthetic}
    \begin{tabular}{c|c|c|c|c}
    \toprule
    $\epsilon$ & MultiAPT & MultiHDoC & MultiLUCB & \textbf{MultiTUCB} \\
    \midrule
    $0.002$ & $72142.25$ & $9351.76 $ & $68861.26$ & \bm{$42323.90$} \\
    $0.004$ & $68899.35 $ & $46246.50 $ & $ 64704.05 $ & \bm{$39260.29$} \\
    $0.006$ & $66254.14 $ & $43463.51 $ & $60994.38 $ & \bm{$36619.35$} \\
    $0.008$ & $63817.75 $ & $41095.90 $ & $57423.84 $ & \bm{$34091.37$} \\
    $0.010$ & $61281.93 $ & $38898.29 $ & $54010.93  $ & \bm{$31832.58$} \\
    $0.012$ & $59506.56 $ & $37015.99 $ & $51001.18 $ & \bm{$29879.56$} \\
    $0.014$ & $57565.94 $ & $35212.57 $ & $48344.01$ & \bm{$28139.68 $} \\
    $0.016$ & $55942.74 $ & $33555.54 $ & $ 45735.95 $ & \bm{$26371.34$} \\
    $0.018$ & $54355.53 $ & $32076.11 $ & $43349.82 $ & \bm{$24910.16$} \\
    $0.020$ & $53031.22$ & $30663.39$ & $41194.27$ & \bm{$23509.48$} \\
    \bottomrule
    \end{tabular}
\end{table}

\begin{table}[ht]
    \centering
    \caption{Stopping Time w.r.t. $\delta$ with Medical Data}
    \label{tab:experimentStoppingtimeDeltaMedical}
    \begin{tabular}{c|c|c|c|c}
    \toprule
    $\delta$ & MultiAPT & MultiHDoC & MultiLUCB & \textbf{MultiTUCB} \\
    \midrule
    $0.005$ & $159612.14 $ & $1930.59 $ & $2950.43 $ & \bm{$1512.23$} \\
    $0.010$ & $151168.96$ & $1881.55 $ & $2664.67 $ & \bm{$ 1472.03 $} \\
    $0.015$ & $ 145659.37 $ & $1852.99 $ & $ 2516.41$ & \bm{$1450.23 $} \\
    $0.020$ & $142957.35 $ & $1834.56 $ & $ 2406.29 $ & \bm{$1433.91$} \\
    $0.025$ & $140266.20 $ & $1821.28  $ & $2316.79  $ & \bm{$ 1422.12 $} \\
    $0.030$ & $138247.26$ & $1808.38 $ & $2221.46  $ & \bm{$ 1410.64$} \\
    $0.035$ & $ 136898.94$ & $ 1799.66$ & $ 2154.06 $ & \bm{$ 1399.08$} \\
    $0.040$ & $ 135402.13$ & $ 1792.78 $ & $2079.05$ & \bm{$  1391.86$} \\
    $0.045$ & $ 133913.38$ & $1785.77$ & $ 2039.52 $ & \bm{$ 1385.56 $} \\
    $0.050$ & $ 132941.90$ & $ 1777.64$ & $2006.03$ & \bm{$1379.53$} \\
    \bottomrule
    \end{tabular}
\end{table}

\begin{table}[ht]
    \centering
    \caption{Stopping Time w.r.t. $\epsilon$ with Medical Data}
    \label{tab:experimentStoppingtimeEpsilonMedical}
    \begin{tabular}{c|c|c|c|c}
    \toprule
    $\epsilon$ & MultiAPT & MultiHDoC & MultiLUCB & \textbf{MultiTUCB} \\
    \midrule
    $0.002$ & $-$ & $2041.31$ & $3004.62$ & \bm{$1625.77 $} \\
    $0.004$ &$173761.65$& $ 2025.90 $ & $ 2960.26$ & \bm{$ 1594.22$} \\
    $0.006$ & $137467.38$& $ 1986.86 $ & $ 2919.35  $  & \bm{$ 1545.89 $} \\
    $0.008$ & $ 108815.09 $  & $1933.00  $ & $2876.65 $  & \bm{$1505.27 $} \\
    $0.010$ & $ 90650.03  $  & $1884.11  $ & $ 2836.59  $  & \bm{$1462.63 $} \\
    $0.012$ & $ 76350.56 $  & $1828.88  $ & $2796.23$  & \bm{$ 1426.91 $} \\
    $0.014$ & $65814.72 $  & $1778.66 $ & $ 2755.69 $  & \bm{$1392.24$} \\
    $0.016$ & $57433.42$& $1729.29$ & $2716.41 $  & \bm{$ 1354.44$} \\
    $0.018$ & $50484.03 $  & $1683.60$ & $ 2675.94$  & \bm{$1327.58$} \\
    $0.020$ & $45030.93$  & $ 1639.93 $ & $ 2636.90$  & \bm{$1300.06$} \\
    \bottomrule
    \end{tabular}
\end{table}

\begin{table}[htbp]
    \centering
    \caption{Error Rate~(\%) w.r.t. $\delta$ with Medical Data}
    \label{tab:experimentErrorrateDeltaMedical}
    \begin{tabular}{c|c|c|c|c}
    \toprule
    $\delta$ & MultiAPT & MultiHDoC & MultiLUCB & MultiTUCB \\
    \midrule
    $0.005$ & $10.40 $ & $0.00 $ & $0.00$& $0.00$\\
    $0.010$ & $8.50$ & $0.00 $ & $0.00$& $0.00$\\
    $0.015$ & $7.15$ & $0.00 $ & $0.00$& $0.00$ \\
    $0.020$ & $6.60 $ & $0.00 $ & $0.00$& $0.00$ \\
    $0.025$ & $5.90 $ & $0.00 $ & $0.00$& $0.00$ \\
    $0.030$ & $5.50$ & $0.00 $ & $0.00$& $0.00$ \\
    $0.035$ & $ 5.35$& $0.00 $ & $0.00$& $0.00$\\
    $0.040$ & $5.10 $& $0.00 $ & $0.00$& $0.00$\\
    $0.045$ & $4.75$& $0.00 $ & $0.00$& $0.00$\\
    $0.050$ & $4.60$& $0.00 $ & $0.00$& $0.00$\\
    \bottomrule
    \end{tabular}
\end{table}

\begin{table}[htbp]
    \centering
    \caption{Error Rate~(\%) w.r.t. $\epsilon$ with Medical Data}
    \label{tab:experimentErrorrateEpsilonMedical}
    \begin{tabular}{c|c|c|c|c}
    \toprule
    $\epsilon$ & MultiAPT & MultiHDoC & MultiLUCB & MultiTUCB \\
    \midrule
    $0.002$ & $54.35$ & $0.00 $ & $0.00$& $0.00$\\
    $0.004$ &$ 22.45$& $0.00 $ & $0.00$& $0.00$\\
    $0.006$ & $ 4.55$& $0.00 $ & $0.00$& $0.00$\\
    $0.008$ & $ 0.75 $ & $0.00 $ & $0.00$& $0.00$\\
    $0.010$ & $ 0.00  $ & $0.00 $ & $0.00$& $0.00$\\
    $0.012$ & $ 0.00 $ & $0.00 $ & $0.00$& $0.00$\\
    $0.014$ & $0.00 $& $0.00 $ & $0.00$& $0.00$\\
    $0.016$ & $0.00$& $0.00 $ & $0.00$& $0.00$\\
    $0.018$ & $0.00 $ & $0.00 $ & $0.00$& $0.00$\\
    $0.020$ & $0.00$ & $0.00 $ & $0.00$& $0.00$ \\
    \bottomrule
    \end{tabular}
\end{table}

% \begin{table}[htbp]
%     \centering
%     \caption{Error Rate~(\%) w.r.t. $\epsilon$ with Medical Data}
%     \label{tab:experimentErrorrateEpsilonMedical}
%     \begin{tabular}{c|c|c|c|c|c|c}
%     \toprule
%        $\epsilon$  &  $0.005$  &  $0.010$&  $0.015$&  $0.020$&  $0.025$ &  $0.030$\\
%     \midrule
%        MultiAPT &$79.00$&$44.20$&$9.00$&$1.20$&$0.00$&$0.00$\\
%        MultiHDoC &$0.00$&$0.00$&$0.00$&$0.00$&$0.00$&$0.00$\\
%        MultiLUCB &$0.00$&$0.00$&$0.00$&$0.00$&$0.00$&$0.00$\\
%        MultiTUCB &$0.00$&$0.00$&$0.00$&$0.00$&$0.00$&$0.00$\\
%      \bottomrule
%     \end{tabular}
%     \begin{tabular}{c|c|c|c|c}
%     \toprule
%        $\epsilon$ & $0.035$ &  $0.040$ &  $0.045$ &  $0.050$\\
%     \midrule
%        MultiAPT &$0.00$&$0.00$&$0.00$&$0.00$\\
%        MultiHDoC &$0.00$&$0.00$&$0.00$&$0.00$\\
%        MultiLUCB &$0.00$&$0.00$&$0.00$&$0.00$\\
%        MultiTUCB &$0.00$&$0.00$&$0.00$&$0.00$\\
%      \bottomrule
%     \end{tabular}
% \end{table}

\begin{table}[ht]
    \centering
    \caption{Standard Deviation w.r.t. $\delta$ with Medical Data}
    \label{tab:experimentStandard DeviationDeltaMedical}
    \begin{tabular}{c|c|c|c|c}
    \toprule
    $\delta$ & MultiAPT & MultiHDoC & MultiLUCB & MultiTUCB \\
    \midrule
    $0.005$ & $40630.09$ & $5825.35$ & $814.65 $ & $4378.73 $ \\
    $0.010$ & $ 39685.57$ & $ 5680.28 $ & $776.37  $ & $  4288.27$ \\
    $0.015$ & $ 39149.34$ & $5662.82 $ & $ 753.44 $ & $ 4277.31 $ \\
    $0.020$ & $ 38927.07 $ & $5606.34 $ & $736.62 $ & $4263.95$ \\
    $0.025$ & $38662.88$ & $ 5595.31$ & $ 719.47$ & $ 4260.32 $ \\
    $0.030$ & $ 38529.29 $ & $ 5590.93 $ & $ 712.07 $ & $   4254.68$\\
    $0.035$ & $38365.54 $ & $5585.32 $ & $713.06 $ & $ 4252.60 $ \\
    $0.040$ & $ 38071.74$ & $5583.01$ & $691.61 $ & $4242.46  $ \\
    $0.045$ & $ 38105.41$ & $ 5560.48 $ & $ 683.35 $ & $ 4241.99 $ \\
    $0.050$ & $38080.04$ & $ 5546.13 $ & $675.84$ & $4240.76 $\\
    \bottomrule
    \end{tabular}
\end{table}

\begin{table}[ht]
    \centering
    \caption{Standard Deviation w.r.t. $\epsilon$ with Medical Data}
    \label{tab:experimentStandard DeviationEpsilonMedical}
    \begin{tabular}{c|c|c|c|c}
    \toprule
    $\epsilon$ & MultiAPT & MultiHDoC & MultiLUCB & MultiTUCB \\
    \midrule
    $0.002$ & $-$ & $4201.73$ & $831.45 $ & $1522.96 $ \\
    $0.004$ &$58647.06$& $ 24170.82 $ & $815.45 $ & $ 1453.56 $ \\
    $0.006$ & $ 36608.01$& $ 5176.59$ & $ 807.75 $  & $ 4141.31 $ \\
    $0.008$ & $ 31141.33  $  & $4798.52$ & $800.43 $  & $3673.10$\\
    $0.010$ & $26136.43 $  & $ 4356.11 $ & $788.47 $  & $ 3080.87 $\\
    $0.012$ & $21819.32$  & $  3823.17 $ & $777.36 $  & $ 2591.56  $ \\
    $0.014$ & $ 18650.05$  & $3452.81 $ & $763.97 $  & $2289.38 $ \\
    $0.016$ & $ 16375.48 $& $3044.30  $ & $751.69  $  & $ 1925.62 $\\
    $0.018$ & $14093.24 $  & $ 2760.02  $ & $739.96$  & $1736.17 $ \\
    $0.020$ & $12723.03 $  & $ 2581.48 $ & $730.53$  & $1631.91$  \\
    \bottomrule
    \end{tabular}
\end{table}

\end{document}